\declaretheorem[name=Assumption]{assumption}
\declaretheorem[name=Proposition]{proposition}
\def\eqref#1{equation~(\ref{#1})}
\def\Eqref#1{Equation~(\ref{#1})}
\def\1{\bm{1}}
\DeclareMathAlphabet{\mathsfit}{\encodingdefault}{\sfdefault}{m}{sl}
\SetMathAlphabet{\mathsfit}{bold}{\encodingdefault}{\sfdefault}{bx}{n}
\newcommand\DoToC{%
  \startcontents
  \printcontents{}{1}{}
}
\newcommand{\ours}{\textbf{BOTS}\xspace}
\title{BOTS: A Unified Framework for Bayesian Online Task Selection in LLM Reinforcement Finetuning}
\author{
\hspace{15mm}Qianli Shen$^{*}$\hspace{6mm}Daoyuan Chen$^{*}$\hspace{6mm}Yilun Huang\hspace{6mm}Zhenqing Ling\\
\hspace{35.5mm}\textbf{Yaliang Li}$^{\dagger}$\hspace{6mm}\textbf{Bolin Ding}$^{\dagger}$\hspace{6mm}\textbf{Jingren Zhou}\\[2mm]
\hspace{57.5mm}Alibaba Group
}
\newcommand{\rebuttal}[1]{%
  {#1}%
}
\begin{document}

\maketitle

\renewcommand*{\thefootnote}{\fnsymbol{footnote}}
\footnotetext[1]{Equal contribution}
\footnotetext[2]{Corresponding Authors. Email to \{shenqianli.sql, yaliang.li, bolin.ding\}@alibaba-inc.com}

\renewcommand*{\thefootnote}{\arabic{footnote}} 

\vspace{-0.5cm}
\begin{abstract}
Reinforcement finetuning (RFT) is a key technique for aligning Large Language Models (LLMs) with human preferences and enhancing reasoning, yet its effectiveness is highly sensitive to which tasks are explored during training. Uniform task sampling is inefficient, wasting computation on tasks that are either trivial or unsolvable, while existing task selection methods often suffer from high rollout costs, poor adaptivity, or incomplete evidence. We introduce \textbf{BOTS}, a unified framework for \textbf{B}ayesian \textbf{O}nline \textbf{T}ask \textbf{S}election in LLM reinforcement finetuning. Grounded in Bayesian inference, BOTS adaptively maintains posterior estimates of task difficulty as the model evolves. It jointly incorporates \emph{explicit evidence} from direct evaluations of selected tasks and \emph{implicit evidence} inferred from these evaluations for unselected tasks, with Thompson sampling ensuring a principled balance between exploration and exploitation \rebuttal{for task selection}. To make implicit evidence practical, we instantiate it with an ultra-light interpolation-based plug-in that estimates difficulties of tasks without extra rollouts, adding negligible overhead. Empirically, across diverse domains and LLM scales, BOTS consistently improves data efficiency and performance over baselines and ablations, providing a practical and extensible solution for dynamic task selection in RFT\footnote{Our code is released at \href{https://github.com/modelscope/Trinity-RFT/tree/main/examples/bots}{https://github.com/modelscope/Trinity-RFT/tree/main/examples/bots}}.
\end{abstract}

\section{Introduction}\label{sec:intro}

Reinforcement finetuning (RFT) has become a key technique for aligning Large Language Models (LLMs) with human preferences and enhancing their reasoning capabilities~\citep{jaech2024openai, guo2025deepseek, luo2025deepscaler, hu2025open, zeng2025simplerl}.  
However, the effectiveness of RFT is highly sensitive to task selection~\citep{parashar2025curriculum, shen2025skywork, zhu2025eduflow, wen2025structured, li2025temple}.  
Naively training on a static, uniformly sampled dataset is inefficient: the model spends excessive computation on tasks that are either already mastered (too easy) or beyond reach (too hard)~\citep{yu2025dapo, bae2025online, chen2025self}. 
This inefficiency not only inflates training costs but also destabilizes optimization by reducing the effective batch size.  
The central challenge, therefore, is to dynamically select tasks of ``just right'' difficulty to maximize learning efficiency as the model’s capability evolves.

Existing methods to this challenge face several limitations.
Offline task selection~\citep{parashar2025curriculum, shen2025skywork, zhu2025eduflow, wen2025structured, li2025temple}, which pre-schedules tasks from easy to hard, is too rigid and does not adapt to the evolving trajectory of the model.  
In response, a few online selection methods have been proposed, aiming to adaptively choose tasks based on model’s current capability.  
Core challenge of these methods lies in the tradeoff between the computational cost of collecting information and the accuracy of the resulting performance estimates.  
We argue that existing solutions are not sufficiently efficient: some expend excessive computation on information gathering, undermining efficiency, while others fail to fully exploit collected information, leading to suboptimal selection.  
On one hand, oversampling-based methods~\citep{yu2025dapo, bae2025online} find suitable tasks by rolling out oversized batches, introducing substantial extra cost.  
On the other hand, non-oversampling approaches typically rely on a single source of information—either leveraging historical evaluations as \emph{explicit evidence}~\citep{chen2025self, qu2025can} or exploiting inter-task correlations as \emph{implicit evidence}~\citep{sun2025improving}.  
Our empirical results reveal a clear complementarity: explicit evidence provides stable and accurate task-difficulty estimates but suffers from a slow warm-up when historical evaluations are scarce in early training, whereas implicit evidence quickly guides early-stage selection yet becomes less reliable in later stages.  
These findings indicate that relying solely on one type of evidence leaves information underutilized and leads to suboptimal task selection.  
Therefore, a principled framework to \emph{fuse} these complementary evidence sources is essential for robust and efficient online task selection.

In this work, we introduce \textbf{BOTS}, the first unified and extensible framework for \textbf{B}ayesian \textbf{O}nline \textbf{T}ask \textbf{S}election in LLM reinforcement finetuning.  
BOTS recasts online task selection as a principled Bayesian inference problem over the model's evolving capabilities. By doing so, it naturally addresses the core challenges of non-stationarity and partial observability, featuring with three key design elements:
    (1) \textbf{Bayesian foundation}: Grounded in Bayesian inference, the framework naturally adapts to the evolving capability of the model, allowing task difficulty to be continuously re-estimated.  
    (2) \textbf{Integration of two evidence sources}: Tunable update rules jointly incorporate \emph{explicit evidence} from direct evaluations and \emph{implicit evidence} inferred from related tasks, leveraging their complementary strengths.
    (3)\textbf{Thompson sampling}: Task selection is guided by posterior sampling, ensuring a principled balance between exploration and exploitation.  

For implicit evidence, we further instantiate the framework with an extremely efficient interpolation-based plugin that estimates the difficulty of unevaluated tasks without additional rollouts, making the overhead negligible.  
We demonstrate empirically, across diverse domains and model scales, that our method significantly improves data efficiency and model performance over baselines and ablations, offering a practical, effective, and extensible solution for online task selection in RFT.

\begin{figure}[t]
\vspace{-0.8cm}
\centering
\includegraphics[width=\linewidth]{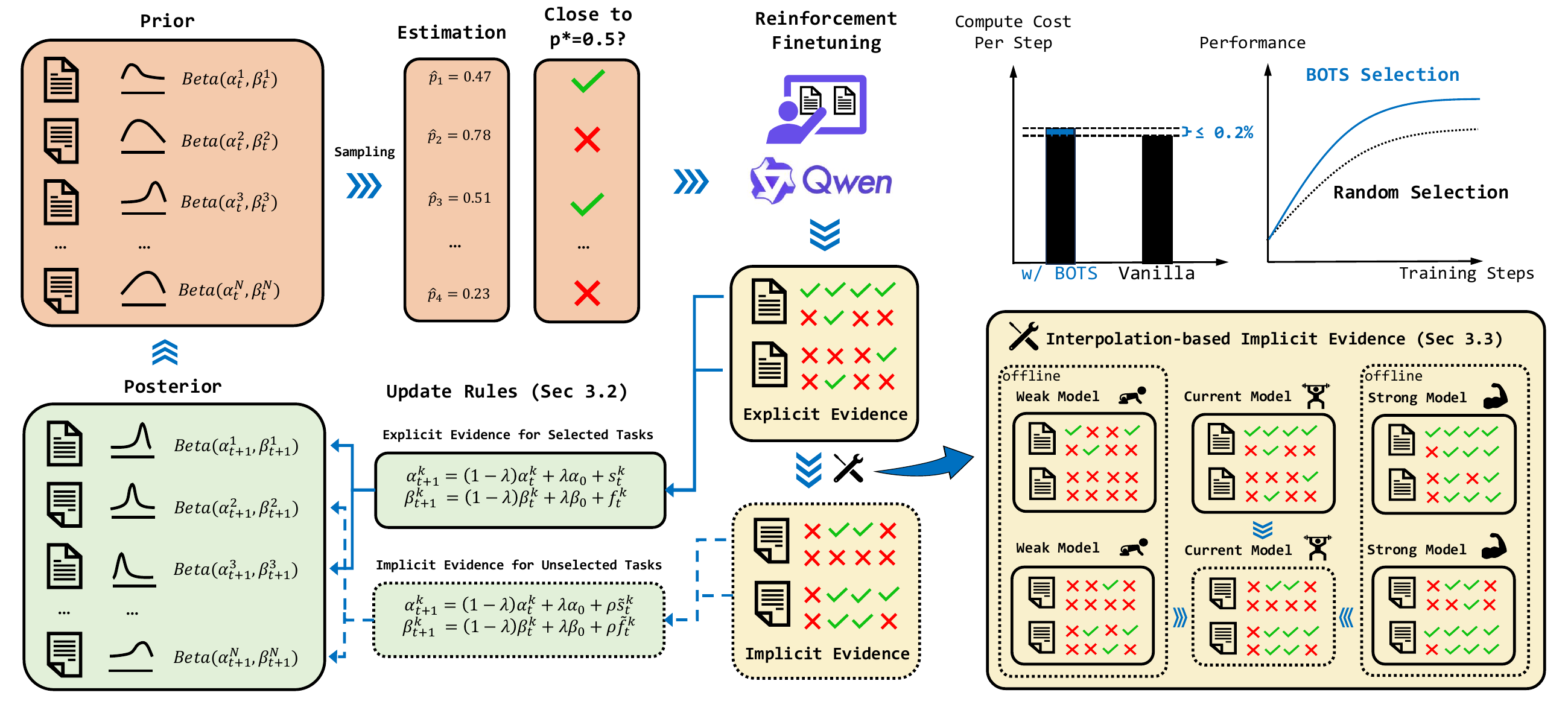}
\caption{\textbf{Overview of the BOTS framework.}  
BOTS operates in a continuous loop of task selection, model training, and posterior updating.  
(1) \textbf{Selection:} Thompson sampling from the posterior beliefs selects a batch of tasks whose estimated success probabilities are near a target difficulty (e.g., $p^*=0.5$).  
(2) \textbf{Training \& Evidence Collection:} The LLM is finetuned, yielding direct success/failure counts (\emph{explicit evidence}) for the selected batch.  
For unselected tasks, predicted counts (\emph{implicit evidence}) are produced by a plug-in; in Section~\ref{sec:implicit_evidence}, we introduce an ultra-lightweight interpolation-based variant with negligible overhead.  
(3) \textbf{Posterior Updating:} Explicit and implicit evidence are fused using our generalized Bayesian update rule (Section~\ref{sec:diff_estimation}). 
\rebuttal{The BOTS framework and the Interpolation-based Implicit Evidence are summarized as Algorithm~\ref{alg:bots}$\sim$~\ref{alg:implicit_evidence} in Appendix~\ref{appendix:pseudo_codes}.}}
    \label{fig:overview}
\end{figure}

\section{Related Works}\label{sec:related_works}

The impact of task difficulty on RFT of LLMs has become an active research topic.  
Inspired by the seminal idea of curriculum learning \citep{bengio2009curriculum}, researchers have proposed various strategies for selecting appropriate tasks in LLM RFT.  
For instance, \citet{parashar2025curriculum, shen2025skywork} advocate scheduling tasks from easy to hard, enabling LLMs to gradually acquire reasoning skills.  
Similar ideas have been extended to multi-modal LLMs \citep{zhu2025eduflow, wen2025structured, li2025temple}, though these methods mainly focus on offline task selection following an easy-to-hard trajectory.

More recently, online selection strategies have emerged, often targeting tasks of moderate difficulty.  
One line of work adopts \emph{sampling-based task filtering}, where tasks with consistently trivial rewards (all zeros or ones) are considered uninformative and are down-weighted or filtered \citep{yu2025dapo, bae2025online}.  
While effective, these methods require additional rollouts, incurring non-trivial overhead.  
To avoid extra rollouts, several works attempt to \emph{predict} task passing rates without direct rollouts.  
\citet{chen2025self} formulate task selection as a non-stationary multi-armed bandit problem, treating each problem category (e.g., difficulty level or type) as an arm and using absolute advantage as a reward proxy, with posterior estimation based on historical outcomes.  
\citet{qu2025can} extend this framework to the task level.  
Although these methods eliminate extra rollouts, they rely solely on direct evaluations and overlook cross-task relationships.  
In contrast, \citet{sun2025improving} propose evaluating a small set of reference tasks and predicting the passing rates of others using an attention-inspired kernel over embeddings.  
However, this approach still requires additional rollouts for the reference set and discards historical evaluation information.

More comprehensive discussion and comparison on related works are provided in Appendix~\ref{appendix:more_related_works}.
\section{Bayesian Online Task Selection}\label{sec:method}



\subsection{Preliminaries: Modeling Task Difficulty }\label{sec:preliminaries}

A task \(\mathcal{T} = (Q, R)\) is defined as a tuple consisting of a query \(Q\), expressed in natural language, and a reward function \(R\) that maps any natural language response \(O\) to a binary reward \(R(Q, O) \in \{0, 1\}\), which is common in domains like math, coding such that \(1\) indicates correct and \(0\) indicates incorrect.
Consider RFT of a parameterized language model \(\mathcal{M}_{\theta}\), which maps a query \(Q\) to a response \(O\), on a set of \(N\) tasks \(\{\mathcal{T}^k\}_{k=1}^N\).
The binary reward obtained by executing the model on a task \(\mathcal{T}\) follows a Bernoulli distribution \(\texttt{Bernoulli}(p_{\theta, \mathcal{T}})\), where \(p_{\theta, \mathcal{T}} = \mathbb{E}_{o \sim \mathcal{M}(\cdot \mid \mathcal{T}; \theta)} \, R(O, \mathcal{T})\) denotes the model’s success probability on \(\mathcal{T}\).
With a slight abuse of notation, we denote the reward distribution for a given model and task as \(R(\cdot|\mathcal{T}; \theta):= \texttt{Bernoulli}(p_{\theta, \mathcal{T}})\).
Since we focus on online task selection over a fixed set of tasks, we simplify the notation by letting \(p_t^k\) denote \(p_{\theta_t, \mathcal{T}^k}\) and \(R_t^k\) denote \(R(\cdot | \mathcal{T}^k; \theta_t)\), whenever the context is clear.
All notations are summarized in Table~\ref{tab:symbols-notation}.

\subsection{Core Mechanism: Fusing Evidence in a Unified Posterior}\label{sec:diff_estimation}
Our goal is to estimate the success probability \(p_t^k\) of the online-adapted model on $k$-th task \(\mathcal{T}^k\).  
For efficiency, direct evaluations are only performed after a task is selected.  
As statistical evidence, at time step \(t\), we obtain online samples \(r^{k}_{1:n} \stackrel{\text{i.i.d.}}{\sim} R_t^k(\cdot)\) for each selected task \(\mathcal{T}^k\) in the training batch \(\mathcal{B}_t\), where \(n\) corresponds to the number of rollouts per task.

A natural way to model the estimation is via a Beta distribution, \(\texttt{Beta}(\alpha_t^k, \beta_t^k)\), where the posterior parameters \(\alpha_t^k\) and \(\beta_t^k\) represent the accumulated counts of successes and failures, respectively, for model \(\theta_t\) on task \(\mathcal{T}^k\).
The problem then reduces to designing online adaptation rules for \(\alpha_t^k\) and \(\beta_t^k\).
We propose the following online adaptation rules:
Given a batch of direct evaluation results 
\(\mathcal{B}_t = \{(\mathcal{T}_{\mathcal{B}_t[i]}, r_{1:n}^{\mathcal{B}_t[i]})\}_{i=1}^{|\mathcal{B}_t|}\),
we define the adaptation rules as
\begin{equation}\label{eq:update_rule}
    \alpha_{t+1}^k = (1 - \lambda)\alpha_t^k + \lambda \alpha_0^k + (1-\rho)\,s_t^k + \rho\,\tilde{s}_t^k,
    \qquad
    \beta_{t+1}^k = (1 - \lambda)\beta_t^k + \lambda \beta_0^k + (1-\rho)\,f_t^k + \rho\,\tilde{f}_t^k,
\end{equation}
where \(\alpha_0, \beta_0\) denote the prior parameter set for the Beta distribution, and the coefficient \(\lambda \in [0,1]\) discounts historical information \rebuttal{by interpolating the counts with the prior~\citep{raj2017taming}};
\begin{equation}\label{eq:gt_counts}
    s_t^k = \sum_{k' \in \mathcal{B}_t} \mathbb{I}[k'=k] \sum_{i=1}^n r_i^{k'},
    \qquad
    f_t^k = \sum_{k' \in \mathcal{B}_t} \mathbb{I}[k'=k] \sum_{i=1}^n (1-r_i^{k'})
\end{equation}
denote the \textit{explicit} success and failure counts from direct evaluations, by slightly abusing the notation \(k \in \mathcal{B}_t\) to represent task \(\mathcal{T}^k\) received direct evaluation at time step \(t\). 
Notice when direct evaluation results are not available (\(k\notin\mathcal{B}_t\)), \(s_t^k = f_t^k = 0\); and
\begin{equation}\label{eq:pseudo_counts}
    \tilde{s}_t^k = s_t^k + \mathbb{I}[k\notin\mathcal{B}_t]\,\tilde{p}(k, \mathcal{B}_t)\,n,
    \qquad
    \tilde{f}_t^k = f_t^k + \mathbb{I}[k\notin\mathcal{B}_t]\,(1 - \tilde{p}(k, \mathcal{B}_t))\,n.
\end{equation}
Here, \(\rho \in [0,1]\) balances the contributions of explicit and implicit evidence, \(\tilde{s}_t^k\) and \(\tilde{f}_t^k\) coincide with \(s_t^k\) and \(f_t^k\) when direct evaluation results are available for task \(\mathcal{T}_k\), and otherwise represent the \textit{pseudo} success and failure counts.
These are derived from an estimator $\tilde{p}(k, \mathcal{B}_t)$, which uses inter-task relationships to infer difficulty for tasks \emph{not present} in the current evaluation batch $\mathcal{B}_t$.
Our framework places no restrictions on the specific form of \(\tilde{p}(k, \mathcal{B}_t)\), while
in Sec.~\ref{sec:implicit_evidence}, we introduce a lightweight interpolation-based instance to produce the pseudo counts.
Additionally, to manage the equivalent total sample size—and hence the uncertainty of the estimate—the pseudo sample size is ensured to satisfy
\(\tilde{s}_t^k + \tilde{f}_t^k = n\).

The following proposition indicates that the update in \Eqref{eq:update_rule} preserves the Beta family as the (generalized) posterior for a Bernoulli parameter under a tempered/prior-mixing update.

\begin{restatable}{proposition}{propositionone}\label{prop:gen-bayes-beta}
Let \(p\in(0,1)\) be the Bernoulli success probability at time \(t\). Suppose the current belief is
\(\pi_t(p)=\mathrm{Beta}(p\mid \alpha_t,\beta_t)\), and let \(\pi_0(p)=\mathrm{Beta}(p\mid \alpha_0,\beta_0)\) be a base prior.
Given counts \((s_t,f_t)\) and pseudo counts \((\tilde{s}_t,\tilde{f}_t)\) with \(s_t,f_t,\tilde{s}_t,\tilde{f}_t\ge 0\),
define the generalized-Bayes update
\begin{equation}\label{eq:gen-bayes-update}
\pi_{t+1}(p)\ \propto\ \underbrace{\pi_t(p)^{\,1-\lambda}\,\pi_0(p)^{\,\lambda}}_{\text{prior mixing / discounting}}
\ \times\
\underbrace{\bigl[p^{s_t}(1-p)^{f_t}\bigr]^{\,1-\rho}}_{\text{tempered explicit likelihood}}
\ \times\
\underbrace{\bigl[p^{\tilde{s}_t}(1-p)^{\tilde{f}_t}\bigr]^{\,\rho}}_{\text{tempered implicit evidence}},
\end{equation}
with \(\lambda\in(0,1)\) and \(\rho\in[0,1]\).
Then \(\pi_{t+1}\) is exactly \(\mathrm{Beta}(\alpha_{t+1},\beta_{t+1})\) with
\[
\alpha_{t+1}=(1-\lambda)\alpha_t+\lambda\alpha_0+(1-\rho)s_t+\rho\tilde{s}_t,
\qquad
\beta_{t+1} =(1-\lambda)\beta_t +\lambda\beta_0 +(1-\rho)f_t +\rho\tilde{f}_t.
\]
\end{restatable}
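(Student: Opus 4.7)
The plan is to verify the proposition by direct substitution, matching exponents of $p$ and $1-p$ on both sides, and invoking uniqueness of a density up to normalization. This is essentially a conjugacy bookkeeping exercise rather than a deep argument, since the Beta family is closed under both geometric mixing of priors and tempering of Bernoulli likelihoods.

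First, I would write the Beta kernel as $\mathrm{Beta}(p\mid a,b)\propto p^{a-1}(1-p)^{b-1}$ and compute each of the three factors in \eqref{eq:gen-bayes-update}. Raising the current belief and base prior to powers $1-\lambda$ and $\lambda$ respectively gives
\begin{equation*}
\pi_t(p)^{1-\lambda}\pi_0(p)^{\lambda}\ \propto\ p^{(1-\lambda)(\alpha_t-1)+\lambda(\alpha_0-1)}(1-p)^{(1-\lambda)(\beta_t-1)+\lambda(\beta_0-1)}.
\end{equation*}
The critical simplification is that $(1-\lambda)+\lambda=1$, so the two $-1$ offsets combine into a single $-1$, yielding an exponent of $(1-\lambda)\alpha_t+\lambda\alpha_0-1$ on $p$ and analogously on $1-p$. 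The tempered explicit likelihood contributes $(1-\rho)s_t$ to the exponent of $p$ and $(1-\rho)f_t$ to that of $1-p$, while the tempered implicit evidence contributes $\rho\tilde s_t$ and $\rho\tilde f_t$.

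Second, I would collect all exponents. The total power of $p$ is
\begin{equation*}
(1-\lambda)\alpha_t+\lambda\alpha_0+(1-\rho)s_t+\rho\tilde s_t-1\ =\ \alpha_{t+1}-1,
\end{equation*}
and symmetrically the total power of $1-p$ equals $\beta_{t+1}-1$. Hence the right-hand side of \eqref{eq:gen-bayes-update} is proportional to $p^{\alpha_{t+1}-1}(1-p)^{\beta_{t+1}-1}$, which is the Beta$(\alpha_{t+1},\beta_{t+1})$ kernel. Since any probability density on $(0,1)$ is determined by its kernel up to the normalizer, and $\alpha_{t+1},\beta_{t+1}>0$ follows from $\alpha_t,\beta_t,\alpha_0,\beta_0>0$, $\lambda\in(0,1)$, $\rho\in[0,1]$, and nonnegativity of the explicit and pseudo counts, the normalizer exists and equals the Beta function $B(\alpha_{t+1},\beta_{t+1})$, so $\pi_{t+1}=\mathrm{Beta}(\alpha_{t+1},\beta_{t+1})$.

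The main obstacle is essentially nonexistent: this is a routine conjugacy calculation, and the only subtle point is noticing that convexity of the weights $\{1-\lambda,\lambda\}$ is exactly what keeps the combined kernel inside the Beta family with a single $-1$ offset; with weights not summing to one, the result would be an unnormalized Beta-like density with a shifted offset rather than a proper Beta posterior. I would flag this as a remark after the proof to emphasize why the generalized update of \eqref{eq:update_rule} is natural.
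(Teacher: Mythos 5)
Your proposal is correct and follows essentially the same route as the paper's proof: write each Beta factor as a kernel $p^{a-1}(1-p)^{b-1}$, raise to the powers $1-\lambda$ and $\lambda$, multiply in the tempered likelihood terms, and collect exponents, using $(1-\lambda)+\lambda=1$ to recover a single $-1$ offset. Your additional check that $\alpha_{t+1},\beta_{t+1}>0$ (so the normalizer is the Beta function and the posterior is proper) is a small but worthwhile refinement the paper leaves implicit.
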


The proof is placed in Appendix~\ref{proof:prop1}.

\subsection{Ultra-Light Interpolation Plug-in for Implicit Evidence}\label{sec:implicit_evidence}

Given a batch of online evaluation results
\(\mathcal{B}_t = \{(\mathcal{T}_{\mathcal{B}_t[i]}, r_{1:n}^{\mathcal{B}_t[i]})\}_{i=1}^{|\mathcal{B}_t|}\),
we aim to estimate the passing rate \(p_t^k\) for any task \(\mathcal{T}_k\) using an estimator \(\tilde{p}(\mathcal{B}_t, k)\).
In this work, we adopt an ultra-lightweight interpolation-based estimator to minimize additional computational overhead for online task selection.
Notably, the adaptation rules in Sec.~\ref{sec:diff_estimation} place no restrictions on the specific form of \(\tilde{p}(\mathcal{B}_t, k)\).

Assume that for each task \(\mathcal{T}^k\), we have empirical success rates \(\bar{p}_w^k\) and \(\bar{p}_s^k\) from two reference models of distinct capability (weak vs.\ strong).
Define the average empirical success rates of the current, weak, and strong models on \(\mathcal{B}_t\) as
\[
\bar{p}_t^{\mathrm{ref}}(\mathcal{B}_t) := \frac{1}{|\mathcal{B}_t|} \sum_{k \in \mathcal{B}_t} \frac{1}{n}\sum_{j=1}^n r_{j}^k,\quad
\bar{p}_w^{\mathrm{ref}}(\mathcal{B}_t) := \frac{1}{|\mathcal{B}_t|} \sum_{k \in \mathcal{B}_t} \bar{p}_w^k,\quad
\bar{p}_s^{\mathrm{ref}}(\mathcal{B}_t) := \frac{1}{|\mathcal{B}_t|} \sum_{k \in \mathcal{B}_t} \bar{p}_s^k .
\]
We estimate the relative capability coefficient of the current model as
$\mu_t(\mathcal{B}_t)
    = \big(\bar{p}_t^{\mathrm{ref}}(\mathcal{B}_t) - \bar{p}_w^{\mathrm{ref}}(\mathcal{B}_t)\big) /\
           \big(\bar{p}_s^{\mathrm{ref}}(\mathcal{B}_t) - \bar{p}_w^{\mathrm{ref}}(\mathcal{B}_t)\big),$
which locates the current model between the weak and strong reference models on the batch \(\mathcal{B}_t\) (we assume \(\bar{p}_s^{\mathrm{ref}}>\bar{p}_w^{\mathrm{ref}}\); otherwise one may add a small \(\varepsilon\) to the denominator).
To reduce variance from stochastic rollouts, we maintain a momentum version of the coefficient \(\tilde{\mu}_t = \gamma \tilde{\mu}_{t-1} + (1-\gamma) \mu_t\).

Finally, the passing rate of the current model on task \(\mathcal{T}^k\) is obtained via linear interpolation between the weak and strong references, followed by clipping to \([0,1]\):
\begin{equation}\label{eq:predicted_pr}
    \tilde{p}(k, \mathcal{B}_t)
    = \mathrm{clip}\,\Big(\tilde{\mu}_t(\mathcal{B}_t)\,\bar{p}_s^k + \bigl(1 - \tilde{\mu}_t(\mathcal{B}_t)\bigr)\,\bar{p}_w^k,\; 0,\; 1\Big).
\end{equation}


\subsection{Thompson Sampling for Task Selection}\label{sec:thompson_sampling}

Having established a task difficulty posterior $\mathrm{Beta}(p_t^k \mid \alpha_t^k, \beta_t^k)$ over the success probability of each task $\mathcal{T}^k$, we now turn to the crucial step of selecting tasks for the next training batch.  

The first question is: at what difficulty level does the current model benefit most from training?  
Prior works~\citep{chen2025self, sun2025improving} show that, under binary rewards, tasks with success probability around $0.5$ are most informative for learning as they lead to gradients with larger expected magnitude than tasks with success probability close to $0$ or $1$.  
We define the utility of a task as the absolute deviation of its posterior mean from a target success probability $p^*\in(0,1)$, with $p^*=0.5$ as the canonical choice.  

Given this target success probability, the problem of online task selection naturally reduces to a non-stationary bandit problem.  
The central challenge is the tradeoff between \emph{exploitation} and \emph{exploration} \rebuttal{for task selection}: the model must decide whether to select tasks with high-confidence estimates close to the target rate in order to maximize immediate utility, or to select tasks with high uncertainty to gather information that may improve future decisions.  
A purely exploitative strategy might select tasks whose posterior mean $\tilde{p}_t^k$ is closest to $p^*$, but this risks overlooking tasks whose difficulty is currently uncertain but potentially optimal.  
To naturally balance exploration and exploitation, we employ \textit{Thompson Sampling}~\citep{thompson1933likelihood}, a strategy renowned for both its empirical effectiveness and theoretical guarantees.  
More specifically, at each selection step $t$, we perform the following:
    (1) \textbf{Posterior Sampling:} Draw a sample of the passing rate from its current posterior distribution $\hat{p}_k \sim \mathrm{Beta}(\alpha_t^k, \beta_t^k)$ for each task $\mathcal{T}^k$ in the pool.
    (2) \textbf{Selection:} Select tasks with the highest estimated utilities \rebuttal{$\{\hat{u}_k := -|\hat{p}_k - p^*|\}_{k=1}^N$} to form the training batch $\mathcal{B}_{t+1}$.
This procedure elegantly prioritizes tasks likely to be near the target difficulty, while the inherent variance in sampling from the posterior ensures that tasks with higher uncertainty are naturally explored.

\subsection{\rebuttal{Hyperparameters}}\label{sec:hyperparameter_analysis}

The hyperparameters $\lambda$ and $\rho$ play distinct but complementary roles in shaping both the \emph{difficulty estimation} process and the \emph{uncertainty profile} that drives Thompson Sampling–based task selection. Their effects can be summarized along two axes:

\paragraph{Impact on difficulty estimation.}
The parameter $\lambda$ controls the balance between adaptivity and stability in Bayesian belief updating.  
A \emph{larger} $\lambda$ places more weight on recent evaluations, enabling the posterior to rapidly adapt to the model's evolving capability. However, this increased adaptivity also makes the estimate more sensitive to noise in the most recent observations, reducing stability.  
Conversely, a \emph{smaller} $\lambda$ assigns greater weight to accumulated history, reducing variance in the estimated probability of success. This leads to more stable but less responsive difficulty estimates.
A toy example illustrating these effects is provided in Figure~\ref{fig:lamb_toy}.

The parameter $\rho$ governs how explicit and implicit evidence are fused.  
A \emph{larger} $\rho$ biases the update toward implicit evidence, causing tasks without direct evaluations to inherit stronger difficulty signals inferred from tasks with direct evaluations.  
A \emph{smaller} $\rho$, in contrast, favors explicit evidence, making the posterior rely primarily on direct observations and thus behave more conservatively when a task has not been recently evaluated.

\paragraph{Impact on selection-time uncertainty.}

Proposition~\ref{prop:equivalent_sample_size} characterizes how the hyperparameters 
\(\lambda\) and \(\rho\) jointly determine the effective sample size \(n_t=\alpha_t+\beta_t\) of each task’s Beta posterior, thereby controlling the confidence of the estimated probability of success.

\begin{restatable}{proposition}{propositiontwo}
\label{prop:equivalent_sample_size}
Let \(n_t := \alpha_t+\beta_t\).
Suppose the updates follow \Eqref{eq:update_rule}--(\ref{eq:pseudo_counts}) with \(\lambda \in (0,1)\), \(\rho \in [0,1]\), we have
\[
\liminf_{t\to\infty} n_t \;=\; n_0 + \tfrac{\rho}{\lambda}n,
\qquad
\limsup_{t\to\infty} n_t \;=\; n_0 + \tfrac{1}{\lambda}n.
\]
\end{restatable}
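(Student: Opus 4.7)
The plan is to collapse the two componentwise updates for $\alpha_t^k$ and $\beta_t^k$ into a single scalar recurrence for the effective sample size $n_t := \alpha_t^k + \beta_t^k$, and then read off the two extremal asymptotic values from the two autonomous corner cases corresponding to ``$k$ never selected'' and ``$k$ always selected''.

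The first step is purely algebraic: add the $\alpha$- and $\beta$-updates in (\ref{eq:update_rule}). Using the identity $\tilde{s}_t^k + \tilde{f}_t^k = n$ (which the paper highlights right after (\ref{eq:pseudo_counts}) and which follows directly from the definition, splitting on whether $k \in \mathcal{B}_t$) together with $s_t^k + f_t^k = n\cdot\mathbb{I}[k \in \mathcal{B}_t]$ from (\ref{eq:gt_counts}), this collapses the update to the scalar affine recurrence
\begin{equation*}
n_{t+1} \;=\; (1-\lambda)\,n_t \;+\; \lambda n_0 \;+\; \rho n \;+\; (1-\rho)\,n\cdot\mathbb{I}[k \in \mathcal{B}_t],
\end{equation*}
where $n_0 := \alpha_0 + \beta_0$. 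The driving term therefore takes exactly one of two values, $c_- := \lambda n_0 + \rho n$ or $c_+ := \lambda n_0 + n$, depending on whether $k$ is selected at step $t$.

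The second step is dynamical: since $\lambda \in (0,1)$, the map $x \mapsto (1-\lambda)x + c$ is a strict contraction with a unique globally attracting fixed point $c/\lambda$. Freezing the driving term at $c_-$ yields the fixed point $n_0 + \rho n/\lambda$; freezing it at $c_+$ yields $n_0 + n/\lambda$. Unrolling the recurrence to the geometric series $n_t = (1-\lambda)^t n_0 + \sum_{s<t} (1-\lambda)^{t-1-s} c_s$ and using the pointwise bound $c_s \in [c_-, c_+]$, a standard sandwich argument gives
\begin{equation*}
n_0 + \tfrac{\rho}{\lambda} n \;\le\; \liminf_{t\to\infty} n_t \;\le\; \limsup_{t\to\infty} n_t \;\le\; n_0 + \tfrac{1}{\lambda} n,
\end{equation*}
for \emph{any} selection pattern, since the $(1-\lambda)^t n_0$ term decays geometrically.

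Finally, tightness follows by exhibiting the two extremal selection patterns: always selecting $k$ freezes $c_s \equiv c_+$ and forces $n_t \to n_0 + n/\lambda$, realizing the $\limsup$; from-some-time-onward never selecting $k$ freezes $c_s \equiv c_-$ and forces $n_t \to n_0 + \rho n/\lambda$, realizing the $\liminf$. The main obstacle is interpretive rather than technical: one must read the $\liminf$/$\limsup$ statement as describing the tight extremal asymptotic values over admissible selection patterns, since the sequence of batches $\{\mathcal{B}_t\}$ is not pinned down by the hypotheses. Once this framing is accepted, the contraction argument delivers both the envelope and its attainability with no delicate estimates.
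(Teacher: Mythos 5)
Your proposal is correct and follows essentially the same route as the paper: summing the two component updates to obtain the scalar affine recurrence $n_{t+1}=(1-\lambda)n_t+\lambda n_0+\rho n+(1-\rho)n\,\mathbb{I}[k\in\mathcal{B}_t]$, unrolling it to a geometric series, sandwiching the selection-dependent term, and noting tightness via the always-selected and never-selected patterns. Your explicit remark that the $\liminf$/$\limsup$ equalities should be read as tight extremal values over admissible selection patterns is a fair and slightly more careful framing of the same caveat the paper handles only implicitly in its closing tightness remark.
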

\vspace{-0.1cm}

The proof is given in Appendix~\ref{proof:prop2}.  
\rebuttal{In terms of balancing exploration toward tasks whose current success probability estimates deviate from the target and exploitation of tasks estimated to be near the target success probability}, \(\lambda\) controls the overall scale of the effective sample size $n_t = \alpha_t + \beta_t$, which directly dictates the variance of the posterior distribution.  
A \emph{larger} $\lambda$ accelerates discounting of older evidence, reducing $n_t$ and increasing posterior uncertainty. Under Thompson Sampling, this results in more exploratory behavior, particularly toward tasks whose current success-probability estimates deviate from the target difficulty.  
A \emph{smaller} $\lambda$ enlarges $n_t$, decreasing posterior variance and nudging the strategy toward exploitation of tasks estimated to be near the target success probability.

The parameter $\rho$ primarily affects the effective sample sizes of tasks that \emph{have not} been directly evaluated at the current step.  
A \emph{larger} $\rho$ increases these tasks’ effective sample sizes via stronger pseudo-count updates, reducing their posterior uncertainty and making them less likely to be explored.  
A \emph{smaller} $\rho$ decreases their effective sample sizes, increasing posterior variance and allowing tasks with difficulty estimates farther from the target to be selected, thereby improving long-run exploration.

\subsection{\rebuttal{Practical Considerations}}\label{sec:practical_considerations}

\paragraph{Hyperparameters.}
We recommend using $\lambda = 0.1$ and $\rho = 0.1$ as the default configuration for BOTS.  
This setting consistently performs well across all model scales and task domains evaluated in Section~\ref{sec:experiments}.  
For new applications, practitioners may start from this default and subsequently adjust the hyperparameters using the insights provided in Section~\ref{sec:hyperparameter_analysis}.

\paragraph{Computational Cost of Interpolation-based Task Difficulty Estimation.}
The main cost of the interpolation-based estimator (Section~\ref{sec:implicit_evidence}) comes from evaluating reference models on all tasks.  
Fortunately, such evaluations are increasingly common in modern RL datasets—for example, the GURU dataset~\citep{cheng2025revisiting} provides Qwen2.5-7B-Instruct~\citep{Yang2024Qwen25TR} and Qwen3-30B-A3B~\citep{yang2025qwen3} scores as difficulty tags for offline filtering. Our estimator simply reuses these tags for \emph{online} task selection.
In settings where reference-model evaluations are not available, enabling implicit evidence requires a one-time rollout cost to compute them.  
Given the substantial gains that implicit evidence brings, this introduces a natural computation--efficiency tradeoff.  
Unlike oversampling-based online methods, this cost does not recur during training and amortizes well when training multiple models or checkpoints.  
Additional practical benefits of this computation are discussed in Appendix~\ref{appendix:extended_discussion_on_interpolation_based_implicit_evidence}.
Once the reference scores are available, the \emph{online} overhead of the interpolation-based estimator is negligible (see Appendix~\ref{appendix:computational_overhead}), since no extra rollouts are required and all updates reduce to simple vector operations.


\paragraph{Choice of Reference Models.}
When a dataset provides pre-computed base model evaluation tags, as in GURU, we recommend directly using these tags as reference-model signals.  
This avoids any additional rollout cost, even though it may introduce extrapolation (e.g., when both reference models are stronger or weaker than the training model).  
Our empirical results in Appendix~\ref{appendix:interpolation_based_implicit_evidence_empirical_results} show that while extrapolation can reduce the accuracy of implicit evidence, BOTS remains robust, making the computational savings generally worthwhile.
When multiple reference-model pairs are available, we suggest choosing models with a \emph{clear capability gap}, which provides stronger discrimination for interpolating task difficulty.  
For instance, GURU’s choice of Qwen2.5-7B-Instruct and Qwen3-30B-A3B yields informative difficulty signals across a wide range.
Finally, although the interpolation-based estimator supports extrapolation in principle, our results show that avoiding extrapolation produces more accurate implicit evidence and slightly stronger overall BOTS performance.
\section{Experiments}\label{sec:experiments}

We begin with Section~\ref{sec:exp_setup}, which introduces datasets, reinforcement finetuning protocols, evaluation metrics, and computational cost.  
Section~\ref{sec:impact_rho} and Section~\ref{sec:impact_lambda} analyze the impact of hyperparameters (\(\rho\)), (\(\lambda\)) on task selection respectively.  
Section~\ref{sec:performance} then compares BOTS with competitive baselines across model scales and domains, while Section~\ref{sec:overview_additional_exp} summarizes additional experiments provided in the Appendix~\ref{appendix:additional_experimental_results}.

\subsection{Setups}\label{sec:exp_setup}

\textbf{Dataset.} \quad We conduct experiments on \texttt{GURU}~\citep{cheng2025revisiting}, a well-curated cross-domain RL dataset. Each subset is deduplicated, verified, and filtered.  
We use its math, code, and logic subsets (excluding the Zebra Puzzle due to its non-binary reward).  
Detailed information about the used datasets is provided in Appendix~\ref{appendix:data_train} (for training) and Appendix~\ref{appendix:data_eval} (for evaluation).

\textbf{RFT Setting.} \quad 
We adopt GRPO~\citep{grpo2024}, Qwen2.5-1.5B-Instruct and Qwen2.5-7B. Key hyperparameters include a learning rate of 1e-6, 16 rollouts per task, and a temperature of $1.0$. 
Comprehensive training details and used RL algorithm are provided in Appendix~\ref{appendix:train_details} and~\ref{appendix:algorithm_grpo}.

\textbf{Evaluation.} \quad We report the following metrics to evaluate our framework and its ablations, with formal definitions given in Appendix~\ref{appendix:metrics}.

\noindent\textbullet~\emph{Effective Task Ratio (ETR).} \quad It evaluates tasks selection. The fraction of sampled tasks whose \emph{empirical success rate}, estimated from $n=16$ independent rollouts, falls strictly within the $(0, 1)$ range. A higher ETR indicates a more efficient task selection that successfully filters out tasks that are either already mastered ($p=1$) or currently unsolvable ($p=0$).

\noindent\textbullet~\emph{Time-to-Baseline (TTB).} \quad 
It measures training acceleration relative to the random baseline to achieve a specific performance.  
Let the baseline start from performance $P_{\text{init}}$ and reach the best performance $P_{\text{best}}$ within the training window.  
For a target fraction $\tau \in \{50\%, 75\%, 100\%\}$, we define the target performance as 
$P_{\tau} = P_{\text{init}} + \tau \cdot (P_{\text{best}} - P_{\text{init}}).$  
TTB($\tau$) is the ratio of steps required by a method to reach $P_{\tau}$ compared to the baseline.  
For example, if the baseline starts at $0.1$ and reaches $0.3$ by step 100, then the 50\% target is $0.2$.  
If the baseline first reaches $0.2$ at step 40 while another method reaches it at step 30, then $\text{TTB}(50\%) = 30/40 = 0.75$.  
By definition, the baseline has $\text{TTB}=1$; smaller values indicate greater acceleration.

\noindent\textbullet~\emph{Best-so-far (BSF).} \quad It measures the performance gain relative to the random baseline under a fixed budget. Within a specific ratio (25\%, 50\%, 100\%) of total training steps, BSF is the ratio between a method’s best-so-far performance and the baseline’s best-so-far performance. For example, at step 50 (total steps 100), if the baseline’s best is 0.4 and a method’s best is 0.6, then BSF(50\%) = $0.6/0.4 = 1.5$. The baseline always has BSF = 1; larger values indicate greater gains. 




\textbf{Computational Overhead.}  
We note that BOTS introduces negligible \rebuttal{train-time} additional computation, with overhead measured at $\leq 0.2\%$ of total training time (see analysis in Section~\ref{sec:implicit_evidence} and empirical results in Appendix~\ref{appendix:computational_overhead}).  
Thus, in the main results, we report TTB and BSF, in terms of training steps rather than wall-clock time, as their difference is practically insignificant.


\subsection{Analyzing the Impact of \(\rho\)}\label{sec:impact_rho}


\rebuttal{In Section~\ref{sec:hyperparameter_analysis}, we analyzed the role of the hyperparameter $\lambda$ and its influence on BOTS.  
In this section, we empirically validate these analyses.}
We investigate the role of \(\rho\) in task selection by varying \(\rho \in \{0.0, 0.05, 0.1, 0.2, 0.5, 1.0\}\), while keeping other hyperparameters fixed at their default values (\(\lambda=0.1\), posterior sampling enabled).  


\begin{figure}[ht]
    \centering
    \includegraphics[width=1.0\linewidth]{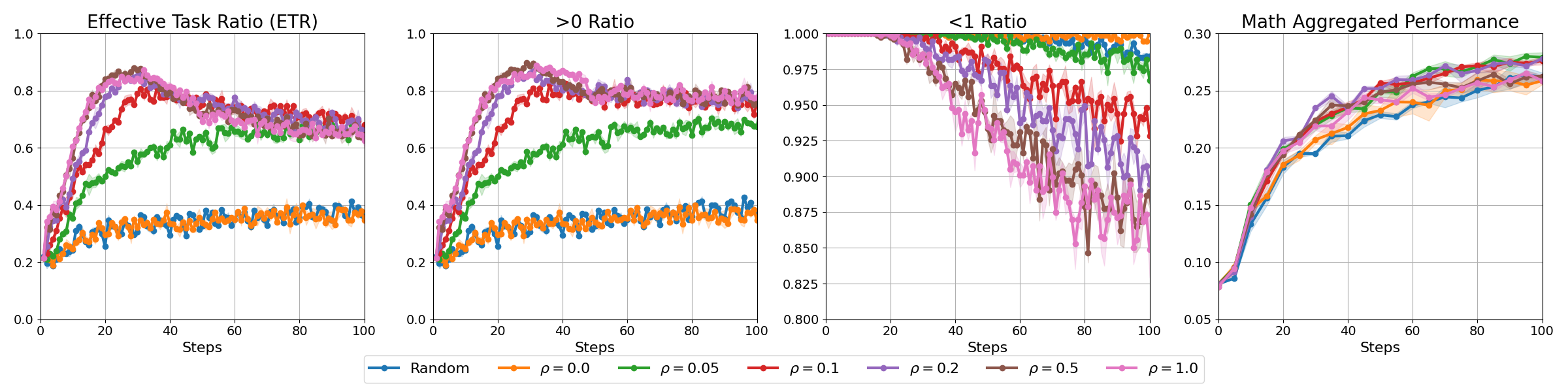}
    \caption{\textbf{Qwen2.5-1.5B-Instruct} on \textbf{Math} with $\lambda=0.1, \rho \in \{0.0, 0.05, 0.1, 0.2, 0.5, 1.0\}$.  
    Ratio of sampled training tasks (measured over 16 rollouts) with passing rates: strictly between 0 and 1 (L1), strictly greater than 0 (L2), and strictly less than 1 (L3), along with aggregated performance (MATH500, \rebuttal{AMC23} and AIME24), plotted against training steps. \rebuttal{All results are averaged over 3 random seeds, and 95\% confidence intervals are reported. More detailed results are placed in Appendix~\ref{appendix:rebuttal_rho}.}
}
    \label{fig:impact_rho}
\end{figure}

\begin{table}[ht]
\centering
\resizebox{\textwidth}{!}{
\setlength{\tabcolsep}{4pt} \renewcommand{\arraystretch}{1.2}
\begin{tabular}{c@{\hspace{18pt}}ccc@{\hspace{9pt}}ccc@{\hspace{18pt}}ccc@{\hspace{9pt}}ccc@{\hspace{18pt}}ccc@{\hspace{9pt}}ccc@{\hspace{18pt}}ccc@{\hspace{9pt}}ccc}
\toprule
Benchmark & \multicolumn{6}{c}{MATH500} & \multicolumn{6}{c}{AMC23} & \multicolumn{6}{c}{AIME24} & \multicolumn{6}{c}{Math Aggregated Performance} \\
\midrule
Metric & \multicolumn{3}{c}{TTB ($\downarrow$)} & \multicolumn{3}{c}{BSF ($\uparrow$)} & \multicolumn{3}{c}{TTB ($\downarrow$)} & \multicolumn{3}{c}{BSF ($\uparrow$)} & \multicolumn{3}{c}{TTB ($\downarrow$)} & \multicolumn{3}{c}{BSF ($\uparrow$)} & \multicolumn{3}{c}{TTB ($\downarrow$)} & \multicolumn{3}{c}{BSF ($\uparrow$)} \\
Target Fraction & 50\% & 75\% & 100\% & 25\% & 50\% & 100\% & 50\% & 75\% & 100\% & 25\% & 50\% & 100\% & 50\% & 75\% & 100\% & 25\% & 50\% & 100\% & 50\% & 75\% & 100\% & 25\% & 50\% & 100\% \\
\midrule
Random & 1.00 & 1.00 & 1.00 & 1.00 & 1.00 & 1.00 & 1.00 & 1.00 & 1.00 & 1.00 & 1.00 & 1.00 & 1.00 & 1.00 & 1.00 & 1.00 & 1.00 & 1.00 & 1.00 & 1.00 & 1.00 & 1.00 & 1.00 & 1.00 \\
$\rho=0.0$ & 0.95 & 0.88 & 1.04 & 1.00 & 1.01 & 1.01 & 1.13 & 0.98 & - & 0.98 & 0.99 & 0.98 & 1.16 & 0.78 & 0.76 & 0.93 & 1.18 & 1.13 & 0.98 & 0.94 & - & 0.99 & 1.02 & 0.98 \\
$\rho=0.05$ & \underline{0.77} & \underline{0.66} & \textbf{0.66} & 1.05 & 1.08 & \underline{1.04} & 0.86 & 0.78 & \underline{0.72} & \underline{1.11} & 1.08 & \textbf{1.12} & \textbf{0.66} & 0.77 & 0.61 & \textbf{1.20} & 1.05 & \textbf{1.32} & 0.78 & 0.68 & \textbf{0.64} & 1.06 & 1.09 & \textbf{1.06} \\
$\rho=0.1$ & 0.86 & 0.66 & 0.78 & 1.05 & \textbf{1.09} & \textbf{1.06} & 0.86 & 0.68 & 0.74 & 1.10 & \underline{1.16} & \underline{1.06} & 0.86 & 0.85 & 0.50 & \textbf{1.20} & 1.25 & 1.23 & 0.85 & 0.66 & 0.72 & 1.06 & \textbf{1.12} & 1.05 \\
$\rho=0.2$ & \textbf{0.76} & \textbf{0.61} & \underline{0.74} & \textbf{1.08} & \underline{1.08} & 1.04 & 0.87 & \underline{0.62} & \textbf{0.70} & 1.05 & \textbf{1.20} & 1.04 & 1.00 & \underline{0.71} & \textbf{0.47} & 1.02 & \textbf{1.63} & \underline{1.31} & \underline{0.78} & \textbf{0.63} & \underline{0.69} & \underline{1.07} & \underline{1.10} & \underline{1.05} \\
$\rho=0.5$ & 0.82 & 0.68 & 0.98 & \underline{1.07} & 1.04 & 1.01 & \textbf{0.78} & \textbf{0.61} & - & \textbf{1.14} & 1.16 & 1.00 & \underline{0.70} & 0.80 & 0.67 & 1.16 & 1.24 & 1.16 & 0.79 & \underline{0.64} & 0.89 & \textbf{1.09} & 1.09 & 1.00 \\
$\rho=1.0$ & 0.78 & 0.70 & - & 1.06 & 1.04 & 0.98 & \underline{0.85} & 0.86 & - & 1.04 & 1.13 & 1.00 & 0.96 & \textbf{0.66} & \underline{0.48} & 1.02 & \underline{1.43} & 1.22 & \textbf{0.78} & 0.69 & 0.99 & 1.05 & 1.07 & 1.01 \\
\bottomrule
\end{tabular}
}
\caption{\rebuttal{\textbf{Qwen2.5-1.5B-Instruct} on \textbf{Math} with $\lambda=0.1, \rho \in  \{0.0, 0.05, 0.1, 0.2, 0.5, 1.0\}$. For TTB, notation ``-" indicates that the target performance is never achieved within the evaluation window. The \textbf{best} and \underline{second best} results are marked accordingly.}}
\label{tab:impact_rho}
\end{table}

\textbf{Analysis and Takeaways.}
Our central finding is that \emph{implicit evidence is crucial for rapid cold-starts, while explicit evidence is vital for long-term accuracy. A principled fusion of both is key to effective task selection.} 
Relying solely on implicit evidence ($\rho=1$) leads to error accumulation, while ignoring it ($\rho=0$) suffers from severe data sparsity in early training, especially on large-scale datasets.
This conclusion is supported by the following observations from Figure~\ref{fig:impact_rho} and Table~\ref{tab:impact_rho}:
    
    (1) \textbf{Implicit evidence provides an essential early boost.} In the initial training phase, all settings with $\rho > 0$ demonstrate a sharp increase in the Effective Task Ratio (ETR) over the random baseline, primarily by filtering out unsolvable tasks ($p=0$). In contrast, the $\rho=0$ setting, which relies solely on sparse explicit feedback, behaves almost identically to random sampling. This highlights the critical role of implicit evidence in overcoming the cold-start problem.
    
    (2) \textbf{Over-reliance on implicit evidence degrades long-term performance.} As training progresses, settings with a large \(\rho\) (e.g., 0.5, 1.0) show a declining ETR. \rebuttal{This is driven by an increased tendency to select tasks whose success probability is \(1\), as shown in the curve of $<1$ ratio. Consequently, the performance is negatively affected,} as seen in the aggregated TTB and BSF metrics where large \(\rho\) values underperform.
    
    (3) \textbf{A small positive \(\rho\) achieves the best of both worlds.} A \(\rho \in \{0.05, 0.1, 0.2\}\) strikes an optimal balance. It leverages implicit evidence for initial acceleration while allowing more accurate, accumulating explicit evidence to dominate in the long run. This fusion leads to robust performance gains throughout entire training process, achieving best overall TTB and BSF scores.
\subsection{Analyzing the Impact of \(\lambda\)}\label{sec:impact_lambda}

\rebuttal{In Section~\ref{sec:hyperparameter_analysis}, we analyzed the role of the hyperparameter $\rho$ and its influence on BOTS.  
In this section, we empirically validate these analyses.}
We investigate the role of \(\lambda\) in task selection by varying \(\lambda \in \{0.0, 0.05, 0.1, 0.2, 0.5, 1.0\}\), while keeping other hyperparameters fixed at their default values (\(\rho=0.1\), posterior sampling enabled).  



\begin{figure}[ht]
    \centering
    \includegraphics[width=1.0\linewidth]{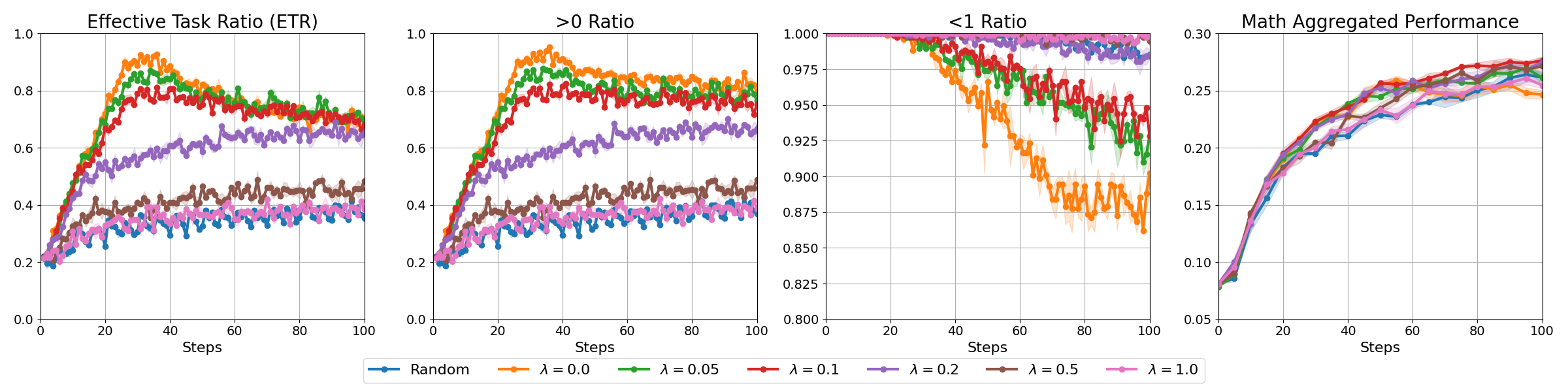}
    \caption{\textbf{Qwen2.5-1.5B-Instruct} on \textbf{Math} with $\lambda \in \{0.0, 0.05, 0.1, 0.2, 0.5, 1.0\}, \rho=0.1$.  
    Ratio of sampled training tasks (measured over 16 rollouts) with passing rates: strictly between 0 and 1 (L1), strictly greater than 0 (L2), and strictly less than 1 (L3), along with aggregated performance (MATH500, \rebuttal{AMC23}, and AIME24), plotted against training steps. \rebuttal{All results are averaged over 3 random seeds, and 95\% confidence intervals are reported. More detailed results are placed in Appendix~\ref{appendix:rebuttal_lambda}.}
}
    \label{fig:impact_lambda}
\end{figure}

\begin{table}[ht]
\centering
\resizebox{\textwidth}{!}{
\setlength{\tabcolsep}{4pt} \renewcommand{\arraystretch}{1.2}
\begin{tabular}{c@{\hspace{18pt}}ccc@{\hspace{9pt}}ccc@{\hspace{18pt}}ccc@{\hspace{9pt}}ccc@{\hspace{18pt}}ccc@{\hspace{9pt}}ccc@{\hspace{18pt}}ccc@{\hspace{9pt}}ccc}
\toprule
Benchmark & \multicolumn{6}{c}{MATH500} & \multicolumn{6}{c}{AMC23} & \multicolumn{6}{c}{AIME24} & \multicolumn{6}{c}{Math Aggregated Performance} \\
\midrule
Metric & \multicolumn{3}{c}{TTB ($\downarrow$)} & \multicolumn{3}{c}{BSF ($\uparrow$)} & \multicolumn{3}{c}{TTB ($\downarrow$)} & \multicolumn{3}{c}{BSF ($\uparrow$)} & \multicolumn{3}{c}{TTB ($\downarrow$)} & \multicolumn{3}{c}{BSF ($\uparrow$)} & \multicolumn{3}{c}{TTB ($\downarrow$)} & \multicolumn{3}{c}{BSF ($\uparrow$)} \\
Target Fraction & 50\% & 75\% & 100\% & 25\% & 50\% & 100\% & 50\% & 75\% & 100\% & 25\% & 50\% & 100\% & 50\% & 75\% & 100\% & 25\% & 50\% & 100\% & 50\% & 75\% & 100\% & 25\% & 50\% & 100\% \\
\midrule
Random & 1.00 & 1.00 & 1.00 & 1.00 & 1.00 & 1.00 & 1.00 & 1.00 & 1.00 & 1.00 & 1.00 & 1.00 & 1.00 & 1.00 & 1.00 & 1.00 & 1.00 & 1.00 & 1.00 & 1.00 & 1.00 & 1.00 & 1.00 & 1.00 \\
$\lambda=0.0$ & \underline{0.86} & 0.69 & - & \textbf{1.06} & 1.07 & 0.97 & 0.91 & \textbf{0.64} & \textbf{0.60} & 1.06 & \textbf{1.19} & 1.01 & \underline{0.77} & \textbf{0.55} & 0.54 & \textbf{1.43} & 1.17 & \textbf{1.27} & 0.86 & \underline{0.66} & - & \textbf{1.07} & \underline{1.11} & 0.98 \\
$\lambda=0.05$ & 0.86 & \underline{0.67} & 0.96 & 1.05 & 1.04 & 1.00 & \textbf{0.82} & 0.74 & 1.04 & 1.02 & 1.11 & 1.01 & 1.04 & \underline{0.67} & 0.53 & 0.95 & 1.23 & \underline{1.26} & \textbf{0.83} & 0.70 & 0.89 & 1.02 & 1.07 & 1.02 \\
$\lambda=0.1$ & 0.86 & \textbf{0.66} & \textbf{0.78} & 1.05 & \textbf{1.09} & \textbf{1.06} & \underline{0.86} & \underline{0.68} & 0.74 & \textbf{1.10} & \underline{1.16} & \textbf{1.06} & 0.86 & 0.85 & \underline{0.50} & 1.20 & \underline{1.25} & 1.23 & 0.85 & \textbf{0.66} & \textbf{0.72} & \underline{1.06} & \textbf{1.12} & \textbf{1.05} \\
$\lambda=0.2$ & \textbf{0.83} & 0.76 & \underline{0.92} & \underline{1.05} & \underline{1.07} & \underline{1.01} & 0.90 & 0.69 & \underline{0.66} & 1.04 & 1.13 & 1.05 & 0.83 & 0.89 & \textbf{0.47} & \underline{1.27} & \textbf{1.28} & 1.21 & \underline{0.83} & 0.71 & 0.86 & 1.05 & 1.10 & \underline{1.04} \\
$\lambda=0.5$ & 0.93 & 0.84 & 0.97 & 0.97 & 1.04 & 1.01 & 1.01 & 1.01 & 0.81 & \underline{1.06} & 0.99 & \underline{1.06} & \textbf{0.56} & 0.90 & 0.60 & 1.05 & 1.07 & 1.21 & 0.93 & 0.88 & \underline{0.78} & 0.99 & 1.03 & 1.03 \\
$\lambda=1.0$ & 0.91 & 0.81 & - & 0.99 & 1.02 & 0.97 & 1.18 & 0.96 & - & 1.01 & 0.99 & 1.00 & 0.91 & 1.18 & 0.79 & 1.10 & 0.88 & 1.08 & 0.96 & 0.96 & - & 1.00 & 1.02 & 0.99 \\
\bottomrule
\end{tabular}
}
\caption{\rebuttal{\textbf{Qwen2.5-1.5B-Instruct} on \textbf{Math} with $\lambda \in \{0.0, 0.05, 0.1, 0.2, 0.5, 1.0\}, \rho=0.1$. For TTB, notation ``-" indicates that the target performance is never achieved within the evaluation window. The \textbf{best} and \underline{second best} results are marked accordingly.}}
\label{tab:impact_lambda}
\end{table}

\textbf{Analysis and Takeaways.}
The hyperparameter \(\lambda\) governs 
(i) the adaptivity of difficulty estimation under a non-stationary learning process, and  
\rebuttal{(ii) the exploration–exploitation balance during task selection.}
Our key takeaway is that \rebuttal{\emph{both of these aspects are crucial for strong long-term performance.}}
An overly \rebuttal{small \(\lambda\)} prevents the system from adapting to the model’s improving capability, \rebuttal{whereas an overly large \(\lambda\) injects excessive randomness into task selection.}
This conclusion is supported by the following observations from Figure~\ref{fig:impact_lambda} and Table~\ref{tab:impact_lambda}:

(1) \textbf{Small \(\lambda\) \rebuttal{fails to track capability improvement}.} Settings with small \(\lambda\) (e.g., 0.0, 0.05) exhibit a declining\rebuttal{, especially $<1$ Ratio}, in the mid-to-late training phase. This is because their \rebuttal{preference to historical evaluations} makes them slow to update the difficulty of tasks, causing \rebuttal{ineffective tasks} to be wastefully selected. This ultimately limits late-stage performance gains.

(2) \textbf{Large \(\lambda\) \rebuttal{induces near-random task selection}.} Conversely, large \(\lambda\) \rebuttal{produce task selection behavior that is almost fully random.  
This arises because large~\(\lambda\) aggressively discounts historical evidence, yielding very small effective sample sizes and thus high posterior uncertainty, which in turn drives overly exploratory Thompson sampling behavior.}

(3) \textbf{Moderate \(\lambda\) \rebuttal{achieves the best trade-off}.} A moderate \(\lambda \in \{0.05, 0.1, 0.2\}\) strikes a balance: \rebuttal{the difficulty estimates remain sufficiently adaptive while the uncertainty stays controlled, enabling a healthy exploration–exploitation balance.} This leads to strong and consistent performance gains across the entire training process, achieving the best TTB and BSF results.



\subsection{Performance Comparison Across Models and Domains}
\label{sec:performance}

\begin{table}[ht]
\centering
\resizebox{\textwidth}{!}{
\setlength{\tabcolsep}{4pt} \renewcommand{\arraystretch}{1.2}
\begin{tabular}{c@{\hspace{18pt}}ccc@{\hspace{9pt}}ccc@{\hspace{18pt}}ccc@{\hspace{9pt}}ccc@{\hspace{18pt}}ccc@{\hspace{9pt}}ccc}
\toprule
Domain & \multicolumn{6}{c}{Math} & \multicolumn{6}{c}{Code} & \multicolumn{6}{c}{Logic} \\
\midrule
Metric & \multicolumn{3}{c}{TTB ($\downarrow$)} & \multicolumn{3}{c}{BSF ($\uparrow$)} & \multicolumn{3}{c}{TTB ($\downarrow$)} & \multicolumn{3}{c}{BSF ($\uparrow$)} & \multicolumn{3}{c}{TTB ($\downarrow$)} & \multicolumn{3}{c}{BSF ($\uparrow$)} \\
Target Fraction & 50\% & 75\% & 100\% & 25\% & 50\% & 100\% & 50\% & 75\% & 100\% & 25\% & 50\% & 100\% & 50\% & 75\% & 100\% & 25\% & 50\% & 100\% \\
\midrule
Random & 1.00 & 1.00 & 1.00 & 1.00 & 1.00 & 1.00 & 1.00 & 1.00 & 1.00 & 1.00 & 1.00 & 1.00 & 1.00 & \underline{1.00} & \textbf{1.00} & 1.00 & \underline{1.00} & \underline{1.00}   \\
Offline & \rebuttal{\underline{0.77}} & \rebuttal{\textbf{0.66}} & \rebuttal{0.85} & \rebuttal{\textbf{1.09}} & \rebuttal{\underline{1.11}} & \rebuttal{\underline{1.04}} & 0.68 & 1.03 & - & 1.09 & 1.00 & 0.99 & 1.23 & 1.36 & - & 0.67 & 0.78 & 0.89  \\
BOTS-MoPPS & \rebuttal{1.02} & \rebuttal{0.90} & \rebuttal{0.89} & \rebuttal{0.98} & \rebuttal{1.03} & \rebuttal{1.00} & 0.78 & 0.96 & 1.09 & 1.09 & 1.02 & \underline{1.02} & 0.87 & 1.04 & - & 1.13 & 0.96 & 0.96 \\
BOTS-DOTS & \rebuttal{\textbf{0.70}} & \rebuttal{0.70} & \rebuttal{\underline{0.73}} & \rebuttal{\underline{1.08}} & \rebuttal{1.08} & \rebuttal{1.01} & \underline{0.67} & \textbf{0.65} & \underline{0.79} & \underline{1.17} & \textbf{1.09} & 1.02 & \textbf{0.66} & 1.32 & - & \textbf{1.28} & 0.92 & 0.93  \\
\midrule
\textbf{BOTS} & \rebuttal{0.85} & \rebuttal{\underline{0.66}} & \rebuttal{\textbf{0.72}} & \rebuttal{1.06} & \rebuttal{\textbf{1.12}} & \rebuttal{\textbf{1.05}} & \textbf{0.58} & \underline{0.90} & \textbf{0.77} & \textbf{1.17} & \underline{1.08} & \textbf{1.03} & \underline{0.85} & \textbf{0.94} & \underline{1.05} & \underline{1.19} & \textbf{1.01} & \textbf{1.00}  \\
\bottomrule
\end{tabular}
}
\caption{\textbf{BOTS-Qwen2.5-1.5B-Instruct Across Domains.}  
The recommended setting outperforms both out-of-framework and within-framework baselines, achieving \rebuttal{10} \textbf{first}-place and \rebuttal{6} \underline{second}-place finishes out of 18 reported metrics.
Full results are in Appendix~\ref{appendix:extended_math}$\sim$\ref{appendix:extended_logic}.}
\label{tab:results}
\end{table}

\begin{table}[ht]
\centering
\resizebox{\textwidth}{!}{
\setlength{\tabcolsep}{4pt} \renewcommand{\arraystretch}{1.2}
\begin{tabular}{c@{\hspace{18pt}}ccc@{\hspace{9pt}}ccc@{\hspace{18pt}}ccc@{\hspace{9pt}}ccc@{\hspace{18pt}}ccc@{\hspace{9pt}}ccc}
\toprule
Domain & \multicolumn{6}{c}{Math} & \multicolumn{6}{c}{Code} & \multicolumn{6}{c}{Logic} \\
\midrule
Metric & \multicolumn{3}{c}{TTB ($\downarrow$)} & \multicolumn{3}{c}{BSF ($\uparrow$)} & \multicolumn{3}{c}{TTB ($\downarrow$)} & \multicolumn{3}{c}{BSF ($\uparrow$)} & \multicolumn{3}{c}{TTB ($\downarrow$)} & \multicolumn{3}{c}{BSF ($\uparrow$)} \\
Target Fraction & 50\% & 75\% & 100\% & 25\% & 50\% & 100\% & 50\% & 75\% & 100\% & 25\% & 50\% & 100\% & 50\% & 75\% & 100\% & 25\% & 50\% & 100\% \\
\midrule
Random & 1.00 & 1.00 & 1.00 & \textbf{1.00} & 1.00 & 1.00 & 1.00 & 1.00 & 1.00 & 1.00 & 1.00 & 1.00 & 1.00 & 1.00 & 1.00 & 1.00 & \underline{1.00} & 1.00 \\
Offline & 0.91 & \textbf{0.73} & 0.76 & 0.99 & 1.00 & \textbf{1.05} & 0.97 & 1.24 & - & 0.99 & 0.99 & 0.99 & 1.06 & 1.23 & 0.95 & 0.96 & 0.95 & \underline{1.04} \\
BOTS-MoPPS & 1.07 & 1.15 & 0.70 & 0.94 & \underline{1.04} & \underline{1.04} & \underline{0.84} & \textbf{0.69} & \underline{0.84} & \underline{1.04} & \underline{1.02} & 1.00 & \textbf{0.75} & 0.92 & \underline{0.69} & \textbf{1.07} & 1.00 & \textbf{1.05} \\
BOTS-DOTS & \textbf{0.83} & 0.80 & \textbf{0.61} & 0.98 & 1.02 & 1.03 & 0.86 & 0.96 & \textbf{0.76} & 1.02 & \textbf{1.04} & \textbf{1.02} & 0.79 & \underline{0.91} & 0.91 & 1.02 & 0.95 & 1.03 \\
\midrule
\textbf{BOTS} & \underline{0.86} & \underline{0.77} & \underline{0.63} & \underline{0.99} & \textbf{1.04} & 1.04 & \textbf{0.82} & \underline{0.79} & 1.06 & \textbf{1.04} & 1.01 & \underline{1.00} & \underline{0.79} & \textbf{0.78} & \textbf{0.50} & \underline{1.03} & \textbf{1.01} & 1.00 \\
\bottomrule
\end{tabular}
}
\caption{\textbf{BOTS-Qwen2.5-7B Across Domains.}  
The recommended setting outperforms both out-of-framework and within-framework baselines, achieving 6 \textbf{first}-place and 8 \underline{second}-place finishes out of 18 reported metrics. Full results are in Appendix~\ref{appendix:extended_math}$\sim$\ref{appendix:extended_logic}.}
\label{tab:results_7b}
\end{table}

We conduct extended experiments across model sizes (1.5B and 7B) and task domains (math, code, logic).  
We compare BOTS under the default setting (\(\lambda=0.1, \rho=0.1\), posterior sampling enabled) against four baselines.  
Two are out-of-framework baselines: (i) \textbf{Random}, where tasks are uniformly sampled, and (ii) \textbf{Offline}, where tasks are ranked from easy to hard based on success probabilities of Qwen2.5-7B-Instruct and Qwen3-30B-A3B (for tie-breaking).  
Two are within-framework baselines: (iii) \textbf{BOTS-MoPPS}, with \(\lambda=0.0, \rho=0.0\) and posterior sampling enabled, which reduces our framework to MoPPS~\citep{qu2025can} and thus enables direct evaluation of a purely explicit-evidence strategy; and (iv) \textbf{BOTS-DOTS}, with \(\lambda=1.0, \rho=1.0\) and posterior sampling disabled, serving as a proxy inspired by DOTS~\citep{sun2025improving}, which evaluates the long-term effectiveness of relying almost entirely on implicit evidence without corrective feedback.  
Implementation details and further discussion of these baselines are provided in Appendix~\ref{appendix:baseline_details}.

\textbf{Analysis and Takeaways.}
The default BOTS setting consistently outperforms both out-of-framework and within-framework baselines, validating the principle of fusing explicit and implicit evidence.  
Moreover, BOTS-DOTS emerges as the strongest baseline, confirming that our interpolation-based implicit evidence provides useful guidance for task selection.
These conclusions are supported by the following observations from Table~\ref{tab:results} and Table~\ref{tab:results_7b}:

(1) \textbf{BOTS achieves the best overall performance.}  
For the 1.5B model, BOTS obtains \rebuttal{10} first-place and \rebuttal{6} second-place finishes out of 18 reported metrics, with a notable \rebuttal{28}\% acceleration (TTB(100\%) = \rebuttal{0.72}) in the math domain.  
For the 7B model, BOTS secures 6 first-place and 8 second-place finishes, including a remarkable 50\% acceleration (TTB(100\%) = 0.50) in the logic domain.


(2) \textbf{BOTS-DOTS ranks second.}  
For the 1.5B model, BOTS-DOTS achieves 6 first-place and 4 second-place finishes.  
For the 7B model, it achieves 5 first-place and 1 second-place finishes, outperforming the remaining baselines.

\noindent In summary, these results demonstrate that BOTS not only delivers consistent gains across different domains and model scales, but also achieves substantial acceleration in training efficiency.  
The superiority of BOTS over both BOTS-MoPPS and BOTS-DOTS highlights the necessity of combining explicit and implicit evidence, while the strength of BOTS-DOTS underscores the practical value of our interpolation-based implicit evidence.  
Together, these findings establish BOTS as a principled, effective, and scalable solution for online task selection in LLM RFT.

\subsection{Overview of Additional Experiments}
\label{sec:overview_additional_exp}
We provide extended experiments in Appendix~\ref{appendix:additional_experimental_results} for a deeper understanding of BOTS:  
(1) A wall-clock breakdown (Appendix~\ref{appendix:computational_overhead}) shows task selection adds less than 0.2\% overhead.  
(2) Evaluation of the interpolation-based implitict evidence with various combination of reference models (Appendix~\ref{appendix:interpolation_based_implicit_evidence_empirical_results}) confirms its effectiveness and robustness, and provides insights for practice.  
(3) An ablation on Thompson sampling (Appendix~\ref{appendix:sample_or_not}) shows posterior sampling yields more stable selection.  
(4) A fine-grained analysis of selected task dynamics (Appendix~\ref{appendix:success_rate_dynamics}) illustrates how BOTS shifts computation away from trivial ($p=1$) and impossible ($p=0$) tasks toward the informative mid-difficulty region.
(5) Comparison against an external baseline Dynamic Sampling~\cite{yu2025dapo} (Appendix~\ref{appendix:dynamic_sampling}) further demonstrates the superiority of BOTS.
(6) The extended experimental results in Appendix~\ref{appendix:rebuttal_rho}$\sim$\ref{appendix:extended_logic} provide additional details for the experiments in this section.

\section{Conclusion and Discussion}\label{sec:conclusion}

We introduced \textbf{BOTS}, a unified Bayesian framework for online task selection in LLM reinforcement finetuning.  
BOTS formulates task difficulty estimation as Bayesian belief updating, fusing two complementary evidence sources: stable but sparse \textit{explicit evidence} from direct rollouts and dense but less precise \textit{implicit evidence} from inter-task relationships.  
Instantiated with a lightweight interpolator and Thompson sampling, BOTS achieves adaptive and efficient online data selection with negligible overhead.  
Experiments show consistent gains in data efficiency and model performance across domains and model scales, surpassing methods that rely on a single evidence source.  
We envision BOTS as a practical foundation for dynamic, model-aware data selection, advancing efficient and effective LLM training.

This work opens several promising directions.  
First, we focused mainly on binary-reward tasks; extending and validating BOTS in non-binary reward settings is an important next step.  
Second, BOTS currently uses fixed update rules, though our results show that different values of \(\lambda\) and \(\rho\) work best at different training stages.  
Developing adaptive update strategies that adjust to training dynamics would further improve robustness.  
Third, while our lightweight interpolator efficiently provides implicit evidence, designing stronger plug-in alternatives and systematically studying the trade-off between predictive accuracy and computational cost remain open challenges.  
A more detailed discussion of these directions is provided in Appendix~\ref{appendix:future_works}.

\newpage

\paragraph{Ethics Statement}

All authors have read and adhered to the ICLR 2026 Code of Ethics. Our research focuses on the algorithmic efficiency of reinforcement finetuning for Large Language Models and does not involve human subjects, animal experiments, or the processing of personally identifiable information. The datasets used in our experiments are publicly available and established benchmarks within the research community; all software, datasets, and frameworks utilized are governed by the permissive Apache-2.0 open-source license. Our method aims to make AI research more sustainable and accessible, and we do not foresee any direct negative societal impacts or ethical concerns arising from our proposed methodology. The authors declare no conflict of interest.
\paragraph{Reproducibility Statement}

We are committed to ensuring the full reproducibility of our research. To facilitate this, we will release our source code, which includes the implementation of the \ours framework and the scripts required to replicate all experiments presented in this paper. The appendix contains a comprehensive description of our experimental setup, detailing all model configurations, dataset processing steps, and hyperparameter settings.

\newpage
\bibliography{ref}
\bibliographystyle{iclr2026_conference}

\newpage
\appendix
\section*{Appendix}
\label{sec:appendix}
\vspace{-0.4cm}
\DoToC

\newpage
\section{Notation Summary}
\label{appendix:notations}




For ease of reading and reference, we present the symbols used in this paper in Table~\ref{tab:symbols-notation}.

\begin{table}[h]
\centering
\begin{tabular}{@{}ll@{}}
\toprule
\textbf{Symbol} & \textbf{Description} \\
\midrule
\multicolumn{2}{@{}l}{\textbf{\textit{General \& Problem Setup}}} \\
$\mathcal{T}_k, \mathcal{T}$ & The $k$-th task, or a generic task. \\
$Q, O$ & A query (input) and an outcome (output) of a task. \\
$R(\cdot)$ & A binary reward function, mapping an outcome to $\{0, 1\}$. \\
$\mathcal{M}_{\theta}$ & The language model parameterized by $\theta$. \\
$p_{\theta, \mathcal{T}}$ & Success probability of model $\mathcal{M}_{\theta}$ on task $\mathcal{T}$. \\
$t$ & Index for the training step or iteration. \\
$p_t^k$ & Shorthand for $p_{\theta_t, \mathcal{T}_k}$, success probability on task $k$ at step $t$. \\
$R_t^k$ & Shorthand for the reward distribution (Bernoulli) of task $k$ at step $t$. \\
$N$ & Total number of tasks in the pool. \\
$n$ & Number of rollouts per task in a batch. \\
$\mathcal{B}_t$ & The batch of tasks selected for training at step $t$. \\
\midrule
\multicolumn{2}{@{}l}{\textbf{\textit{Bayesian Estimation Framework}}} \\
$\alpha_t^k, \beta_t^k$ & Parameters of the Beta posterior for task $k$ at step $t$. \\
$\alpha_0^k, \beta_0^k$ & Parameters of the base prior Beta distribution for task $k$. \\
$n_t^k$ & Equivalent sample size for task $k$ at step $t$, $n_t^k = \alpha_t^k + \beta_t^k$. \\
$n_0^k$ & Equivalent sample size of the prior for task $k$, $n_0^k = \alpha_0^k + \beta_0^k$. \\
$\lambda$ & \rebuttal{Discounting} factor in $[0,1]$ for historical information. \\
$\rho$ & Balance coefficient in $[0,1]$ for explicit vs. implicit evidence. \\
$r_i^k$ & The $i$-th binary reward obtained for task $k$. \\
$s_t^k, f_t^k$ & Explicit success and failure counts for task $k$ at step $t$. \\
$\tilde{s}_t^k, \tilde{f}_t^k$ & Pseudo success and failure counts from implicit evidence. \\
\midrule
\multicolumn{2}{@{}l}{\textbf{\textit{Interpolation-based Implicit Evidence}}} \\
$\bar{p}_w^k, \bar{p}_s^k$ & Pre-computed success rates of a weak and a strong reference model on task $k$. \\
$\bar{p}_t^{\mathrm{ref}}, \bar{p}_w^{\mathrm{ref}}, \bar{p}_s^{\mathrm{ref}}$ & Avg. success rates of current, weak, and strong models on a reference batch. \\
$\mu_t$ & Relative capability coefficient of the current model at step $t$. \\
$\tilde{\mu}_t$ & Momentum-updated relative capability coefficient. \\
$\gamma$ & Momentum coefficient for updating $\tilde{\mu}_t$. \\
$\tilde{p}(k, \mathcal{B}_t)$ & Estimated success probability for task $k$ using implicit evidence. \\
\midrule
\multicolumn{2}{@{}l}{\textbf{\textit{Task Selection (Thompson Sampling)}}} \\
$p^*$ & The target success rate for optimal learning (e.g., 0.5). \\
$\hat{p}_k$ & Success rate sampled from the posterior $\mathrm{Beta}(\alpha_t^k, \beta_t^k)$ \\
$\hat{u}_k$ & Utility of a task, computed as $|\hat{p}_k - p^*|$. \\
\midrule
\multicolumn{2}{@{}l}{\textbf{\textit{Evaluation Metrics}}} \\
$\tau$ & Target fraction (e.g., 0.5, 0.75) for defining a performance milestone. \\
$P_{\text{init}}, P_{\text{best}}$ & Initial and best performance of the baseline method. \\
$P_{\tau}$ & Target performance, $P_{\text{init}} + \tau \cdot (P_{\text{best}} - P_{\text{init}})$. \\
$\tau_M$ & Training step (hitting time) for method $M$ to first reach performance $P_{\tau}$. \\
$\beta$ & Budget ratio (e.g., 0.25, 0.5) for evaluating Best-so-far (BSF). \\
\midrule
\multicolumn{2}{@{}l}{\textbf{\textit{Generalizations \& Alternatives (Appendix)}}} \\
$\mathcal{K}(\cdot, \cdot)$ & Kernel function between two tasks. \\
$\tau$ & Temperature parameter for the kernel function. \\
$\eta$ & Natural parameter of an exponential family. \\
$T(r), A(\eta)$ & Sufficient statistic and log-partition function of an exponential family. \\
$\chi, \nu$ & Hyperparameters of the conjugate prior for an exponential family. \\
\bottomrule
\end{tabular}
\caption{Symbol Notation}
\label{tab:symbols-notation}
\end{table}

\section{Proofs}\label{appendix:proof}

\subsection{Proof for Proposition \ref{prop:gen-bayes-beta}}\label{proof:prop1}
\propositionone*

\begin{proof}
Write the Beta densities (up to normalization) as
\(\pi_t(p)\propto p^{\alpha_t-1}(1-p)^{\beta_t-1}\) and \(\pi_0(p)\propto p^{\alpha_0-1}(1-p)^{\beta_0-1}\).
Raising these to powers and multiplying by the (tempered) likelihood terms in \eqref{eq:gen-bayes-update} yields
\[
\pi_{t+1}(p)\ \propto\ p^{(1-\lambda)(\alpha_t-1)+\lambda(\alpha_0-1)+(1-\rho)s_t+\rho\tilde{s}_t}\,
(1-p)^{(1-\lambda)(\beta_t-1)+\lambda(\beta_0-1)+(1-\rho)f_t+\rho\tilde{f}_t}.
\]
Collecting exponents shows that \(\pi_{t+1}\) is Beta with parameters exactly as stated (add \(+1\) back to exponents), completing the proof.
\end{proof}

\subsection{Proof for Proposition \ref{prop:equivalent_sample_size}}\label{proof:prop2}
\propositiontwo*

\begin{proof}

We drop the task index for clarity. From \Eqref{eq:update_rule} and \Eqref{eq:pseudo_counts},
\[
n_{t+1}
= (1-\lambda)n_t + \lambda n_0 + (1-\rho)(s_t+f_t) + \rho(\tilde{s}_t+\tilde{f}_t)
= (1-\lambda)n_t + \lambda n_0 +  (1-\rho)n\,\mathbb{I}[\mathcal{E}_t] + \rho n,
\]
where \(\mathcal{E}_t\) is the event that the task receives a direct evaluation at time \(t\) (so \(s_t+f_t = n\) iff \(\mathcal{E}_t\) holds; otherwise \(s_t+f_t=0\)).
Unrolling the linear recurrence for \(t\ge 0\),
\begin{align*}
n_t
&= (1-\lambda)^t n_0
 + \sum_{u=0}^{t-1} (1-\lambda)^u\!\left(\lambda n_0 + \rho n + (1-\rho)n\,\mathbb{I}[\mathcal{E}_{t-1-u}]\right) \\
&= n_0 + \frac{\rho n}{\lambda}\!\left(1-(1-\lambda)^t\right)
 + (1-\rho)n \sum_{u=0}^{t-1} (1-\lambda)^u\,\mathbb{I}[\mathcal{E}_{t-1-u}],
\end{align*}
where we used \(\sum_{u=0}^{t-1}(1-\lambda)^u\lambda = 1-(1-\lambda)^t\).
Because \(\mathbb{I}[\mathcal{E}_{\cdot}]\in\{0,1\}\),
\[
0 \;\le\; \sum_{u=0}^{t-1} (1-\lambda)^u\,\mathbb{I}[\mathcal{E}_{t-1-u}]
\;\le\; \sum_{u=0}^{t-1} (1-\lambda)^u
= \frac{1-(1-\lambda)^t}{\lambda}.
\]
Multiplying by \((1-\rho)n\) and adding the common term \(n_0 + \frac{\rho n}{\lambda}(1-(1-\lambda)^t)\) gives the uniform bounds
\begin{equation}\nonumber
n_0 + \frac{\rho n}{\lambda}\!\left(1-(1-\lambda)^t\right)
\;\le\;
n_t
\;\le\;
n_0 + \frac{n}{\lambda}\!\left(1-(1-\lambda)^t\right).
\end{equation}
Since \((1-\lambda)^t \to 0\) exponentially, the \(\liminf/\limsup\) statements follow immediately.

Moreover, these bounds are tight: if \(\mathbb{I}[\mathcal{E}_t]\equiv 0\) (never directly evaluated), then
\(
n_t = n_0 + \frac{\rho n}{\lambda}\bigl(1-(1-\lambda)^t\bigr)
\)
and \(\lim_{t\to\infty} n_t = n_0 + \frac{\rho n}{\lambda}\); if \(\mathbb{I}[\mathcal{E}_t]\equiv 1\) (always directly evaluated), then
\(
n_t = n_0 + \frac{n}{\lambda}\bigl(1-(1-\lambda)^t\bigr)
\)
and \(\lim_{t\to\infty} n_t = n_0 + \frac{n}{\lambda}\).
\end{proof}
\newpage

\section{\rebuttal{Pseudo Codes}}\label{appendix:pseudo_codes}

\begin{algorithm}[h]
\caption{\textbf{BOTS}: \textbf{B}ayesian \textbf{O}nline \textbf{T}ask \textbf{S}election}
\label{alg:bots}
\begin{algorithmic}[1]   

\Require Task set $\{\mathcal{T}^k\}_{k=1}^N$, task batch size $B$, target success probability $p^*$, implicit evidence constructor $\tilde{p}(\cdot, \cdot)$, LLM parameter $\theta_0$, RFT trainer $RFT(\cdot, \cdot)$, $\lambda$, $\rho$.
\Ensure Trained model parameters $\theta$

\State Initialize $\boldsymbol{\alpha}_0$, $\boldsymbol{\beta}_0$.
\Comment{If not specified, $\boldsymbol{\alpha}_0 = \mathbf{1}$, $\boldsymbol{\beta}_0 = \mathbf{1}$}

\For{$t = 0$ to $T-1$}
    \If{Thompson Sampling}
        \State Sample $\tilde{p}_t^k \sim \text{Beta}(\alpha_t^k, \beta_t^k), k=1,\dots,N$
    \Else
        \State $\tilde{p}_t^k = \frac{\alpha_t^k}{\alpha_t^k+\beta_t^k}, k=1,\dots,N$
    \EndIf
    \State Select a batch of tasks $\{\mathcal{T}_{\mathcal{B}_t[i]}\}_{i=1}^B$ with top-$B$ utilities $\{\hat{u}_k := -|\hat{p}_k - p^*|\}$.
    \State RFT with the selected tasks: $\theta_{t+1} = RFT(\theta_t, \{\mathcal{T}_{\mathcal{B}_t[i]}\}_{i=1}^B)$.
    \State Collect the online evaluation results of the selected tasks $\{ r_{1:n}^{\mathcal{B}_t[i]}\}_{i=1}^{B}$.
    \State Compute explicit evidence $\{(s_t^k, f_t^k)\}_{k=1}^{N}$, with $\{ r_{1:n}^{\mathcal{B}_t[i]}\}_{i=1}^{B}$. \Comment{Eq.~\ref{eq:gt_counts}}
    \State Compute implicit evidence $\{(\tilde{s}_t^k, \tilde{f}_t^k)\}_{k=1}^{N}$, with $\tilde{p}$, $\{ r_{1:n}^{\mathcal{B}_t[i]}\}_{i=1}^{B}$. \Comment{Eq.~\ref{eq:pseudo_counts}}
    \State Update posterior $\boldsymbol{\alpha}_{t+1}$, $\boldsymbol{\beta}_{t+1}$, with $\lambda, \rho$. \Comment{Eq.~\ref{eq:update_rule}}
\EndFor

\State \Return $\theta_T$
\end{algorithmic}
\end{algorithm}

\begin{algorithm}[h]
\caption{Interpolation-based Implicit Evidence Estimator}
\label{alg:implicit_evidence}
\begin{algorithmic}[1]   

\Require Task index $k$, task batch $\{\mathcal{T}_{\mathcal{B}[i]}\}_{i=1}^B$, online evaluation results $\{r_{1:n}^{\mathcal{B}[i]}\}_{i=1}^B$, momentum coefficient $\gamma$.
\Ensure Estimated success rate $\tilde{p}(k, \{\mathcal{T}_{\mathcal{B}[i]}, r_{1:n}^{\mathcal{B}[i]}\}_{i=1}^B)$ for $\mathcal{T}_k$.

\State Retrieve success rates of weak/strong reference models $\{(\bar{p}_w^{\mathcal{B}[i]}, \bar{p}_s^{\mathcal{B}[i]})\}_{i=1}^B$. \Comment{Pre-computed.}
\State Compute the average empirical success rates of the current, weak, and strong models on \(\mathcal{B}_t\) as
\[
\bar{p}^{\mathrm{ref}}(\mathcal{B}) = \frac{1}{|\mathcal{B}|} \sum_{i=1}^B \frac{1}{n}\sum_{j=1}^n r_{j}^{\mathcal{B}[i]},\quad
\bar{p}_w^{\mathrm{ref}}(\mathcal{B}) = \frac{1}{|\mathcal{B}|} \sum_{i=1}^B \bar{p}_w^{\mathcal{B}[i]},\quad
\bar{p}_s^{\mathrm{ref}}(\mathcal{B}) = \frac{1}{|\mathcal{B}|} \sum_{i=1}^B \bar{p}_s^{\mathcal{B}[i]}.
\]
\State Estimate the relative capability coefficient of the current model as
$$\mu(\mathcal{B})
    = \big(\bar{p}^{\mathrm{ref}}(\mathcal{B}) - \bar{p}_w^{\mathrm{ref}}(\mathcal{B})\big) /\
           \big(\bar{p}_s^{\mathrm{ref}}(\mathcal{B}) - \bar{p}_w^{\mathrm{ref}}(\mathcal{B})\big).$$
\State Momentum estimation: \(\tilde{\mu} \leftarrow \gamma \tilde{\mu} + (1-\gamma) \mu(\mathcal{B})\).
\State Retirve success rates of weak/strong reference models on $\mathcal{T}_k$: $(\bar{p}_w^k, \bar{p}_s^k)$.
\State Linear interpolation with clipping: $\tilde{p}(k, \{\mathcal{T}_{\mathcal{B}[i]}, r_{1:n}^{\mathcal{B}[i]}\}_{i=1}^B)
    = \mathrm{clip}\,\Big(\tilde{\mu}\,\bar{p}_s^k + \bigl(1 - \tilde{\mu}\bigr)\,\bar{p}_w^k,\; 0,\; 1\Big).$
\State \Return $\tilde{p}(k, \{\mathcal{T}_{\mathcal{B}[i]}, r_{1:n}^{\mathcal{B}[i]}\}_{i=1}^B)$

\end{algorithmic}
\end{algorithm}

\newpage
\section{Implementational Details}
\label{appendix:implementational_details}

\subsection{Training Data}
\label{appendix:data_train}
The training data for our experiments is sourced from \texttt{GURU}~\citep{cheng2025revisiting}, a well-curated, cross-domain RL dataset covering mathematics, code, logic, science, simulation, and tabular tasks. Each subset has been rigorously deduplicated, verified, and filtered. Our training utilizes the \textbf{math, code, and logic} subsets, with the Zebra Puzzle excluded from the logic portion due to its non-binary reward structure.

\paragraph{Mathematics (54.4k)}
The Mathematics subset comprises data from \textbf{Skywork OR1}~\citep{he2025skywork}, \textbf{DAPO}~\citep{yu2025dapo}, and \textbf{DeepScaler}~\citep{luo2025deepscaler}. For all math problems, models are instructed to provide the final answer within a \texttt{\textbackslash boxed\{\}} environment.

\paragraph{Code (18.1k)}
The Code subset includes programming challenges from \textbf{LeetCode}~\citep{xia2025leetcodedataset}, \textbf{TACO-verified}~\citep{likaixin2024taco-verified}, \textbf{PrimeIntellect}~\citep{2025synthetic1}, and \textbf{LiveCodeBench}~\citep{jain2024livecodebench}. For PrimeIntellect and LiveCodeBench, it incorporates pre-filtered versions provided by DeepCoder\footnote{\url{https://www.together.ai/blog/deepcoder}}.

\paragraph{Logic (5.0k)}
The Logic subset is composed of several benchmarks designed to test structured reasoning. It includes \textbf{Ordering Puzzles} (relational ordering), \textbf{Graph Puzzles} (implicit graph traversal), the public training splits of \textbf{ARC-AGI}~\citep{chollet2024arc} and \textbf{ARC-AGI-2}~\citep{chollet2025arc} (abstract grid transformations), and a 3.4k-sample from \textbf{BARC}~\citep{li2024barc} (synthetic ARC-style tasks). For these tasks, predictions are extracted from \texttt{<answer>} tags for reward calculation via exact match.

\subsection{Evaluation Benchmarks}
\label{appendix:data_eval}
Followed by \texttt{GURU}~\citep{cheng2025revisiting}, the evaluation suite consists of a set of established benchmarks to rigorously assess the model's performance in a zero-shot setting across the same domains.

\noindent\textbf{Math}
We evaluate mathematical reasoning on \textbf{AIME24}~\citep{aime2024} and \textbf{MATH500}~\citep{hendrycks2021measuringmathematicalproblemsolving}, which cover a wide range of competition-level math problems.

\noindent\textbf{Code}
Programming capabilities are assessed using \textbf{HumanEval}~\citep{chen202humaneval}, \textbf{MBPP}~\citep{austin2021mbpp}, and a subset of \textbf{LiveCodeBench}~\citep{jain2024livecodebench}. These benchmarks span from basic function generation to complex algorithmic challenges.

\noindent\textbf{Logic}
Logical and abstract reasoning are measured using \textbf{Ordering Puzzles} for general reasoning, and \textbf{ARC-AGI}~\citep{chollet2024arc} for grid-based abstract reasoning.

\subsection{Trainining Details}
\label{appendix:train_details}

The RFT setup follows standard practices~\citep{cheng2025revisiting, qu2025can, sun2025improving}, employing GRPO~\citep{grpo2024} as the RL algorithm (detailed in Appendix \ref{appendix:algorithm_grpo}). All experiments are conducted on \texttt{verl} framework~\citep{sheng2025hybridflow} with 8 NVIDIA A100 (80GB) GPUs. We utilize PyTorch's FSDP for distributed training, with vLLM~\citep{kwon2023efficient} employed to accelerate the response generation during the rollout phase. The actor model is optimized using a learning rate of $1 \times 10^{-6}$ and weight decay of 0.1. We apply a constant learning rate warmup for the first 10 steps and use gradient clipping with a maximum norm of 1.0. For the GRPO algorithm, we set the clipping ratio $\epsilon$ to 0.2 and generate 16 rollouts with a temperature of 1.0. The maximum prompt and response lengths are set to 4,096 and 8,192 tokens, respectively.

The training configurations are tailored for two LLMs: \textbf{Qwen2.5-1.5B-Instruct} and \textbf{Qwen2.5-7B}~\citep{Yang2024Qwen25TR}, across three reasoning domains: Math, Code, and Logic. The primary differences across settings lie in batch sizes and memory optimization strategies to accommodate different model sizes. Specifically, the 7B model experiments utilize a larger training batch size (512 vs. 256) and GRPO mini-batch size (64 vs. 32). To fit the larger model on the same hardware, we enable FSDP's CPU offloading for all 7B model experiments. Key hyperparameters for all six experimental settings are summarized in Table~\ref{tab:hyperparams}.

\begin{table}[h!]
\centering
\resizebox{\textwidth}{!}{%
\begin{tabular}{@{}lcccccc@{}}
\toprule
\textbf{Hyperparameter} & \textbf{1.5B-Math} & \textbf{7B-Math} & \textbf{1.5B-Code} & \textbf{7B-Code} & \textbf{1.5B-Logic} & \textbf{7B-Logic} \\
\midrule
Base Model & Qwen2.5-1.5B-Inst. & Qwen2.5-7B & Qwen2.5-1.5B-Inst. & Qwen2.5-7B & Qwen2.5-1.5B-Inst. & Qwen2.5-7B \\
Optimizer & AdamW & AdamW & AdamW & AdamW & AdamW & AdamW \\
Learning Rate & $1 \times 10^{-6}$ & $1 \times 10^{-6}$ & $1 \times 10^{-6}$ & $1 \times 10^{-6}$ & $1 \times 10^{-6}$ & $1 \times 10^{-6}$ \\
Weight Decay & 0.1 & 0.1 & 0.1 & 0.1 & 0.1 & 0.1 \\
Global Steps & 100 & 100 & 70 & 70 & 100 & 100 \\
Batch Size (Prompts) & 256 & 512 & 256 & 512 & 256 & 512 \\
GRPO Mini-batch Size & 32 & 64 & 32 & 64 & 32 & 64 \\
Rollouts per Prompt ($n$) & 16 & 16 & 16 & 16 & 16 & 16 \\
GRPO Clip Ratio ($\epsilon$) & 0.2 & 0.2 & 0.2 & 0.2 & 0.2 & 0.2 \\
Rollout Temperature & 1.0 & 1.0 & 1.0 & 1.0 & 1.0 & 1.0 \\
Max Prompt Length & 4,096 & 4,096 & 4,096 & 4,096 & 4,096 & 4,096 \\
Max Response Length & 8,192 & 8,192 & 8,192 & 8,192 & 8,192 & 8,192 \\
FSDP CPU Offload & Disabled & Enabled & Disabled & Enabled & Disabled & Enabled \\
\bottomrule
\end{tabular}
}
\caption{Key hyperparameters for RFT across different models and domains. ``Inst.'' is an abbreviation for Instruct.}
\label{tab:hyperparams}
\end{table}

\subsection{Formal Definitions of Metrics}
\label{appendix:metrics}

We provide formal definitions of the two key metrics used in our experiments: \emph{Time-to-Baseline (TTB)} and \emph{Best-so-far (BSF)}.

\paragraph{Time-to-Baseline (TTB).}  
We define \emph{Time-to-Baseline (TTB)} as a metric to measure the acceleration of a method relative to the random baseline.  
Let the random baseline start from performance $P_{\text{init}}$ and reach its best performance $P_{\text{best}}$ within a fixed training window of $K$ steps.  
For a target fraction $\tau \in \{50\%, 75\%, 100\%\}$, the target performance is defined as the corresponding interpolation between the initial and best performance:
\[
P_{\tau} \;=\; P_{\text{init}} + \tau \cdot \big(P_{\text{best}} - P_{\text{init}}\big).
\]
Denote by $\tau_M$ the (possibly interpolated) training step at which method $M$ first achieves performance $P_{\tau}$.  
Then the TTB of method $M$ is
\[
\text{TTB}_M(\tau) \;=\; \frac{\tau_M}{\tau_{\text{baseline}}}.
\]
By definition, $\text{TTB}_{\text{baseline}}(\tau) = 1$.  
Smaller values of TTB indicate that a method reaches the target improvement faster, and thus achieves greater acceleration relative to the baseline.  

\emph{Example.}  
Suppose the baseline starts at performance $P_{\text{init}}=0.1$ and reaches $P_{\text{best}}=0.3$ within 100 training steps.  
The total improvement is therefore $0.2$.  
For $\tau=50\%$, the target performance is
\[
P_{50\%} = 0.1 + 0.5 \times (0.3 - 0.1) = 0.2.
\]
If the baseline first reaches $0.2$ at step 40, then $\tau_{\text{baseline}}=40$.  
If another method $M$ reaches $0.2$ earlier, at step 30, then $\tau_M=30$, and thus
\[
\text{TTB}_M(50\%) = \frac{30}{40} = 0.75.
\]
This indicates that method $M$ achieves the same relative improvement (50\% of the baseline’s maximum gain) using only 75\% of the training steps required by the baseline, reflecting a 25\% acceleration.

\paragraph{Best-so-far (BSF).}  
We define \emph{Best-so-far (BSF)} to measure the relative performance gain of a method against the random baseline under the same training budget.  
Let the total training window be $T$ steps.  
At a budget ratio $\beta \in (0,1]$, corresponding to step $t = \lfloor \beta T \rfloor$, denote by $\mathrm{best}_M(t)$ the best performance achieved by method $M$ up to step $t$, and by $\mathrm{best}_{\text{rand}}(t)$ the best performance achieved by the random baseline up to the same step.  
Then the BSF of method $M$ is defined as
\[
\mathrm{BSF}_\beta(M) := \frac{\mathrm{best}_M(t)}{\mathrm{best}_{\text{rand}}(t)}.
\]
By definition, $\mathrm{BSF}_\beta(\text{rand}) = 1$.  
Larger values indicate that the method has achieved stronger absolute performance under the same budget, i.e., it delivers better best-so-far outcomes than the baseline, not just relative improvement.  

\emph{Example.}  
Suppose the total training window is $T=100$ steps, and we consider $\beta=0.5$ ($t=50$).  
If the random baseline achieves $\mathrm{best}_{\text{rand}}(50)=0.4$, while another method $M$ achieves $\mathrm{best}_M(50)=0.6$, then
\[
\mathrm{BSF}_{50\%}(M) = \frac{0.6}{0.4} = 1.5.
\]
This means that by step 50, method $M$ has achieved a best-so-far performance 1.5 times that of the random baseline.  

\subsection{Policy Optimization Algorithm: GRPO}
\label{appendix:algorithm_grpo}

The policy is optimized using \textbf{Group Relative Policy Optimization (GRPO)}~\citep{grpo2024}, a policy gradient method that operates without a learned value function. In GRPO, the advantage for a given response is computed by normalizing its reward against the statistics of a group of candidate responses sampled for the same prompt.

Specifically, for each prompt $x$, a set of $G$ responses $Y = \{y_1, \dots, y_G\}$ is sampled from the policy $\pi_{\theta_{\text{old}}}$. After obtaining the reward $R(y_k)$ for each response, the group-relative advantage $A(y_k)$ is defined as:
\begin{equation}
\label{eq:grpo_advantage}
A(y_k) = \frac{R(y_k) - \mu_Y}{\sigma_Y}
\end{equation}
where $\mu_Y$ and $\sigma_Y$ represent the mean and standard deviation of the rewards $\{R(y_1), \dots, R(y_G)\}$ for the group.

This advantage is then incorporated into a clipped surrogate objective function to update the policy parameters $\theta$:
\begin{equation}
\label{eq:grpo_objective}
\mathcal{L}_{\text{GRPO}}(\theta) = \mathbb{E}_{y \sim \pi_{\theta_{\text{old}}}} \left[ \min \left( r_\theta(y) A(y), \text{clip}(r_\theta(y), 1-\epsilon, 1+\epsilon) A(y) \right) \right]
\end{equation}
Here, $r_\theta(y) = \frac{\pi_\theta(y|x)}{\pi_{\theta_{\text{old}}}(y|x)}$ denotes the probability ratio between the current and old policies. The $\text{clip}$ function constrains this ratio within the interval $[1-\epsilon, 1+\epsilon]$, limiting the magnitude of policy updates during optimization.

\subsection{Baseline Details}
\label{appendix:baseline_details}

We detail the configurations of the baselines and ablations used for comparison.

\begin{itemize}[leftmargin=*]
\item \textbf{Random:} Tasks are sampled uniformly at random from the entire training pool. This represents a no-curriculum scenario and serves as the fundamental baseline for measuring performance gains.

\item \textbf{Offline Baseline:} Tasks are pre-sorted once from easy to hard based on the success rates of external models (Qwen2.5-7B-Instruct, with Qwen3-32B-A3B for tie-breaking). This baseline represents a static curriculum and is used to benchmark our adaptive method against a fixed task sequence.

\item \textbf{Setting from MoPPS \citep{qu2025can}:} This ablation uses \(\lambda=0.0, \rho=0.0\) with posterior sampling, relying solely on explicit evidence for task selection. Under this configuration, our framework reduces exactly to the setting studied in~\citet{qu2025can}, enabling a direct evaluation of a purely explicit-evidence-based strategy. Note that \citet{qu2025can} did not study the code or logic domains, nor did they provide principles for determining \(\lambda\) (\(1-\lambda\) in BOTS). We therefore reuse their hyperparameter choice for math (\(\lambda=1\), i.e., \(\lambda=0\) in BOTS) when evaluating other domains. This comparison is fair since no hyperparameter tuning is performed for either MoPPS or BOTS.
    
\item \textbf{Proxy for DOTS \citep{sun2025improving}:}  
This ablation uses \(\lambda=1.0, \rho=1.0\) with posterior sampling disabled, so task selection is driven entirely by implicit evidence from our estimator.  
It serves as a proxy for the approach in~\citet{sun2025improving}, though our construction of pseudo-counts differs, see Appendix~\ref{appendix:attn-based_difficulty} for more details.  
This baseline allows us to evaluate the long-term efficacy of a strategy that relies almost exclusively on implicit evidence without direct corrective feedback.  
We note that \citet{sun2025improving} additionally explored reusing historical trajectories, which lies beyond the scope of this paper.  
To ensure clarity, we isolate such tricks from DOTS and focus solely on the online task selection component.
\end{itemize}



\section{More Related Works}
\label{appendix:more_related_works}

\paragraph{Reinforcement Finetuning for LLMs}

Reinforcement Finetuning (RFT) has become a pivotal technique for aligning LLMs with human values and enhancing their capabilities in complex reasoning tasks \citep{2023gpt4, guo2025deepseek, zeng2025simplerl, he2025skywork}.  Initial RLHF methods using PPO \citep{2017ppo} have been complemented by preference optimization techniques like DPO \citep{2023dpo} and its variants such as KTO \citep{2024kto}, ORPO \citep{2024orpo} and SimPO \citep{2024simpo}. More recently, attention has shifted to rejection-sampling-based finetuning, popularized by frameworks like GPRO \citep{grpo2024} for its effectiveness in verifiable reasoning. This has spurred variants like DAPO \citep{yu2025dapo} and GSPO \citep{2025gspo}, with further work focusing on algorithmic refinements \citep{2025primeentropy, 2025entropy28} and reward shaping \citep{2025seedgrpo, 2025trinity}. Despite these algorithmic advances, the data curriculum, that how tasks are selected and presented, remains a critical bottleneck for RFT efficiency.

\paragraph{The Role of Data Curriculum}
The importance of data difficulty in RFT is increasingly recognized, leading to the creation of highly challenging benchmarks \citep{2025bigmath, he2025deepmath, gao2025omni}. Beyond dataset creation, data selection and curriculum design has also become a central topic of investigation. For Supervised Finetuning (SFT), a variety of data selection and curriculum strategies have been widely studied based on quality \citep{2024deita, 2024superfilter}, diversity \citep{2025daar, 2024instag}, or pre-assessed difficulty \citep{2025mindgym, 2025limo}. However, such static methods are ill-suited for the dynamic nature of RFT. This has motivated a focus on dynamic curricula that adapt to the model's evolving capabilities, typically by leveraging the notion of \textbf{task difficulty} \citep{cheng2025revisiting, 2025hardexamples, li2025can, toloubidokhti2023dats, wang2501mpts}.

\paragraph{Task Selection Strategies for RFT}
Existing task selection strategies for RFT vary in their approach to leveraging task difficulty. An early line of work, directly inspired by curriculum learning, employs \textbf{offline curricula} that schedule tasks along a fixed easy-to-hard trajectory \citep{parashar2025curriculum, shen2025skywork, zhu2025eduflow, wen2025structured, li2025temple}. While simple and intuitive, these methods are non-adaptive and cannot respond to the model's real-time learning progress. To address this, \textbf{online selection strategies} have emerged. A straightforward approach is \emph{sampling-based task filtering}, which uses extra rollouts to evaluate and discard tasks that are too easy or too hard, thereby incurring significant computational overhead \citep{yu2025dapo, bae2025online}. To avoid this cost, recent works attempt to \emph{predict} task success rates. Some frame task selection as a non-stationary multi-armed bandit (MAB) problem, but often treat tasks as independent arms, thus overlooking cross-task relationships \citep{chen2025self, qu2025can}. Others use a small reference set to predict the difficulty of other tasks via similarity kernels, but this still requires extra rollouts and discards valuable historical information \citep{sun2025improving, wang2025dump}.

\paragraph{Positioning Our Work.}
The current landscape reveals a clear need for a task selection method that is simultaneously adaptive, computationally efficient, and information-complete. Our framework, \textbf{BOTS}, is designed to fill this gap. It introduces a unified Bayesian framework that is fully online and adaptive. Critically, BOTS is the first to jointly incorporate \emph{explicit evidence} from direct evaluations and \emph{implicit evidence} inferred from related tasks, without requiring any additional model rollouts. BOTS provides a principled, low-overhead, and effective solution for dynamic task selection in RFT.

\newpage
\section{Extended Discussion}
\label{appendix:extended_discussion}

\subsection{\rebuttal{A toy example illustrating the role of $\lambda$ in difficulty estimation}}
\label{appendix:toy_example}

\begin{figure}[ht]
    \centering
    \includegraphics[width=0.5\linewidth]{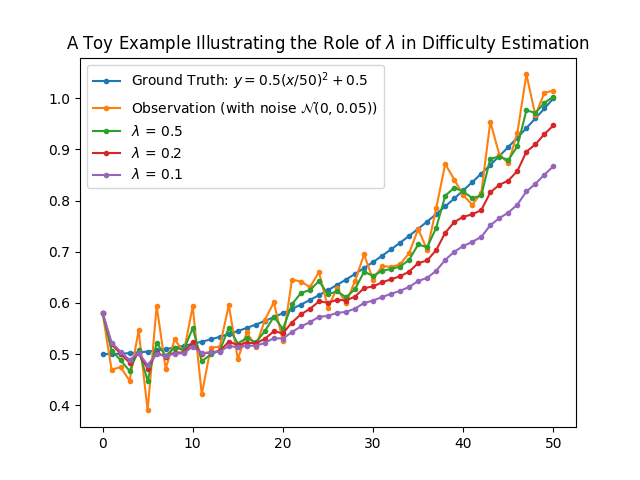}
    \caption{A toy example illustrating the role of $\lambda$ in difficulty estimation.
\textbf{Ground Truth} simulates a continuously evolving success probability, while \textbf{Observations} add Gaussian noise to mimic estimation errors from explicit/implicit evidence.
We plot the posterior means obtained by BOTS over 50 update steps for three settings of $\lambda \in \{0.1, 0.2, 0.5\}$.
A larger $\lambda$ (e.g., 0.5) reacts quickly to rapid changes in the underlying difficulty but is highly sensitive to noise.
A smaller $\lambda$ (e.g., 0.1) yields smoother and more stable estimates but lags behind when the true difficulty increases rapidly.
The intermediate setting $\lambda=0.2$ balances the two behaviors, offering better stability than $\lambda=0.5$ and better adaptivity than $\lambda=0.1$.}
    \label{fig:lamb_toy}
\end{figure}

\subsection{Attention-Based Adaptive Difficulty Estimate}
\label{appendix:attn-based_difficulty}

Given a batch of online evaluation results 
\(\mathcal{B}_t = \{(\mathcal{T}_{\mathcal{B}_t[i]}, r_{1:m}^{\mathcal{B}_t[i]})\}_{i=1}^{|\mathcal{B}_t|}\), 
we need to estimate \(\tilde{s}_t^k\) and \(\tilde{f}_t^k\) for each task \(\mathcal{T}^k\).
Besides the interplation-based estimate discussed in~\ref{sec:implicit_evidence}, another straightforward idea is to construct a kernel-based estimator:
\begin{equation}
    \tilde{s}_t^k = \sum_{i=1}^{|B_t|} 
    \frac{\mathcal{K}(\mathcal{T}_i, \mathcal{T}_k)}{\sum_{j=1}^{|B_t|} \mathcal{K}(\mathcal{T}_j, \mathcal{T}_k)} \, s_t^i,
    \qquad
    \tilde{f}_t^k = \sum_{i=1}^{|B_t|} 
    \frac{\mathcal{K}(\mathcal{T}_i, \mathcal{T}_k)}{\sum_{j=1}^{|B_t|} \mathcal{K}(\mathcal{T}_j, \mathcal{T}_k)} \, f_t^i.
\end{equation}
For example, \citet{sun2025improving} proposed to use an attention-like kernel structure:
\begin{equation}
    \mathcal{K}(\mathcal{T}, \mathcal{T}') = \exp\!\left(
        \frac{\langle \texttt{Embd}(\mathcal{T}), \texttt{Embd}(\mathcal{T}') \rangle}{\tau}
    \right).
\end{equation}

However, the limitations of such approaches are two-folds:  
(1) Due to its convex structure, the kernel estimator does not support extrapolation. Specifically, given a batch of tasks with passing rates bounded within \([p_{\text{min}}, p_{\text{max}}]\), the estimated passing rate for any task is also restricted to this range.  
(2) Its effectiveness depends heavily on the quality of the task embeddings, which often require additional training, may be high-dimensional,
and thus introduce non-negligible storage overhead.

\subsection{Interpolation-Based Implicit Evidence}
\label{appendix:extended_discussion_on_interpolation_based_implicit_evidence}

\paragraph{\rebuttal{When linear interpolation is exact and \emph{why} it becomes imperfect in practice.}}
\begin{assumption}[Linear Interpolation Assumption]\label{assumption:linear_interpolation}
Let $p(\mathcal{T})$, $p_w^{\mathrm{ref}}(\mathcal{T})$, and $p_s^{\mathrm{ref}}(\mathcal{T})$ denote the success probability of the current model, the weak reference model, and the strong reference model, respectively.  
We assume that all tasks share a common interpolation coefficient, i.e., for any pair of tasks $\mathcal{T}^1, \mathcal{T}^2$,
\[
\frac{p(\mathcal{T}^1) - p_w^{\mathrm{ref}}(\mathcal{T}^1)}{p_s^{\mathrm{ref}}(\mathcal{T}^1) - p_w^{\mathrm{ref}}(\mathcal{T}^1)}
=
\frac{p(\mathcal{T}^2) - p_w^{\mathrm{ref}}(\mathcal{T}^2)}{p_s^{\mathrm{ref}}(\mathcal{T}^2) - p_w^{\mathrm{ref}}(\mathcal{T}^2)}.
\]
We denote the (task-invariant) interpolation coefficient by $\mu^\star$.
\end{assumption}

This assumption states that the current model’s success probabilities lie on the same affine transformation of the weak and strong reference models; hence linear interpolation is a perfect estimator of task difficulty.

\begin{proposition}
Consider a batch of tasks $\mathcal{B} := \{\mathcal{T}^k\}_{k=1}^K$ with ground-truth success probabilities $\{p(\mathcal{T}^k), p_w^{\mathrm{ref}}(\mathcal{T}^k), p_s^{\mathrm{ref}}(\mathcal{T}^k)\}_{k=1}^K$.  
For any task $\mathcal{T} \notin \mathcal{B}$, define the linear interpolation estimator
\[
\tilde{p}(\mathcal{T})
=
\mu(\mathcal{B})\, p_s^{\mathrm{ref}}(\mathcal{T})
+
\bigl(1 - \mu(\mathcal{B})\bigr)\, p_w^{\mathrm{ref}}(\mathcal{T}),
\]
where
\[
\mu(\mathcal{B})
=
\frac{\frac{1}{K}\sum_{k=1}^K p(\mathcal{T}^k) - \frac{1}{K}\sum_{k=1}^K p_w^{\mathrm{ref}}(\mathcal{T}^k)}
     {\frac{1}{K}\sum_{k=1}^K p_s^{\mathrm{ref}}(\mathcal{T}^k) - \frac{1}{K}\sum_{k=1}^K p_w^{\mathrm{ref}}(\mathcal{T}^k)}.
\]
Then $\tilde{p}(\mathcal{T}) = p(\mathcal{T})$ for all $\mathcal{T}\notin\mathcal{B}$ if and only if Assumption~\ref{assumption:linear_interpolation} holds.
\end{proposition}

\begin{proof}
\textbf{(If)}  
Assume Assumption~\ref{assumption:linear_interpolation} holds.  
Let $\mu^\star$ be the common interpolation coefficient for all tasks.  
Then for every $\mathcal{T}^k\in\mathcal{B}$,
\[
p(\mathcal{T}^k)
=
\mu^\star p_s^{\mathrm{ref}}(\mathcal{T}^k)
+
(1-\mu^\star)p_w^{\mathrm{ref}}(\mathcal{T}^k).
\]
Taking averages over $\mathcal{B}$, we obtain
\[
\frac{1}{K}\!\sum_{k=1}^K p(\mathcal{T}^k)
=
\mu^\star \frac{1}{K}\!\sum_{k=1}^K p_s^{\mathrm{ref}}(\mathcal{T}^k)
+
(1-\mu^\star)\frac{1}{K}\!\sum_{k=1}^K p_w^{\mathrm{ref}}(\mathcal{T}^k).
\]
Rearranging gives $\mu(\mathcal{B}) = \mu^\star$.  

Now for any $\mathcal{T}\notin\mathcal{B}$, Assumption~\ref{assumption:linear_interpolation} implies
\[
p(\mathcal{T})
=
\mu^\star p_s^{\mathrm{ref}}(\mathcal{T})
+
(1-\mu^\star)p_w^{\mathrm{ref}}(\mathcal{T}).
\]
Since $\mu(\mathcal{B})=\mu^\star$, the estimator satisfies
\[
\tilde{p}(\mathcal{T}) = p(\mathcal{T}).
\]
Thus linear interpolation is perfect under the assumption.

\medskip
\textbf{(Only If)}  
Suppose $\tilde{p}(\mathcal{T})=p(\mathcal{T})$ for all $\mathcal{T}\notin\mathcal{B}$.  
In particular, choose two arbitrary tasks $\mathcal{T}^1,\mathcal{T}^2\notin\mathcal{B}$.  
By construction of $\tilde{p}(\cdot)$, we have
\[
p(\mathcal{T}^i)
=
\mu(\mathcal{B})\, p_s^{\mathrm{ref}}(\mathcal{T}^i)
+
\bigl(1 - \mu(\mathcal{B})\bigr)\, p_w^{\mathrm{ref}}(\mathcal{T}^i),
\quad i=1,2.
\]
Solving for $\mu(\mathcal{B})$ in each equation yields
\[
\frac{p(\mathcal{T}^1) - p_w^{\mathrm{ref}}(\mathcal{T}^1)}{p_s^{\mathrm{ref}}(\mathcal{T}^1)-p_w^{\mathrm{ref}}(\mathcal{T}^1)}
\;=\;
\mu(\mathcal{B})
\;=\;
\frac{p(\mathcal{T}^2) - p_w^{\mathrm{ref}}(\mathcal{T}^2)}{p_s^{\mathrm{ref}}(\mathcal{T}^2)-p_w^{\mathrm{ref}}(\mathcal{T}^2)}.
\]
Thus the interpolation ratio is identical for all tasks, which is precisely Assumption~\ref{assumption:linear_interpolation}.  
This completes the proof.
\end{proof}

However, in practice, different tasks do not progress at the same learning pace, and thus Assumption~\ref{assumption:linear_interpolation} may not hold. This mismatch naturally introduces estimation error into the linear interpolation–based difficulty estimator. Nevertheless, our empirical results show that even with imperfect estimates, the interpolation-based implicit evidence is sufficiently informative to substantially improve training efficiency within BOTS. A more fine-grained theoretical analysis of interpolation error under richer and more realistic assumptions is an interesting direction for future work.

\paragraph{Extrapolation Capability.}
Unlike kernel-based estimators \citep{sun2025improving} (see Appendix~\ref{appendix:attn-based_difficulty} for a brief introduction), which are confined to the convex hull of observed tasks, our interpolation-based estimator naturally supports \emph{extrapolation}.  
For example, if the reference pair consists of a strong and a stronger model, the capability coefficient \(\mu_t(\mathcal{B}_t)\) defined in Section~\ref{sec:implicit_evidence} may fall outside \([0,1]\), yielding extrapolated predictions.  
The clipping step in \Eqref{eq:predicted_pr} ensures these estimates remain within the feasible range \([0,1]\).

\paragraph{Additional Practical Benefits of Evaluating Reference Models}
We highlight two additional practical benefits of evaluating reference models:  
(1) as demonstrated in GURU, reference-model evaluations can directly support \emph{offline difficulty-aware filtering}, removing tasks that are too trivial or too hard before training even begins; and  
(2) these evaluations can also assist RL algorithms that rely on online difficulty estimation, such as SPO~\citep{xu2025spo}.


\paragraph{Discussion on the Potential Limitations.}
The simplicity of our interpolation-based approach, while a key strength, also entails two potential limitations. 
(i) \textit{Expressive power:} The linear interpolation assumes a linear relationship between a model's global capability and its per-task success rate. 
While our empirical results suggest this is a powerful approximation (see Appendix~\ref{appendix:interpolation_based_implicit_evidence_empirical_results} for a validation), the true learning dynamics of LLMs may be more complex.
(ii) \textit{Distributional shift in capability estimation:} The capability coefficient $\mu_t$ is estimated on the selected batch $\mathcal{B}_t$, which is not a uniform sample from the task pool. This introduces a bias, as the model is likely to perform better on this adaptively chosen batch than on the entire dataset. 
Consequently, $\mu_t$ might be an overestimate of the model's true global capability. 
However, we argue this bias is not fatal: the primary role of $\mu_t$ is to track the \emph{progression} of the model's capability, and even a biased estimate can provide a valuable monotonic signal for this purpose, see Section~\ref{sec:performance} for empirical validation.
\rebuttal{(iii) \textit{Potential Failure Mode:} If a task exhibits identical success probabilities under both reference models, then the interpolation collapses to that shared value, making the estimated difficulty entirely independent of the current model’s capability.}

\subsection{Discussion on Future Works}\label{appendix:future_works}

\subsubsection{Generalization to Other Reward Distributions}
\label{appendix:generalization}

While our main exposition focuses on binary rewards with a Beta-Bernoulli model, the core principles of BOTS extend naturally to any reward distribution within the exponential family that admits a conjugate prior. This generality makes BOTS a versatile blueprint for online task selection algorithms.

\paragraph{The General Framework.}
Let the reward $r$ for a task follow a distribution from a one-parameter exponential family: $f(r \mid \eta) = h(r) \exp(\eta T(r) - A(\eta))$, where $\eta$ is the natural parameter and $T(r)$ is the sufficient statistic. The conjugate prior for $\eta$ takes the form $p(\eta \mid \chi, \nu) \propto \exp(\chi \eta - \nu A(\eta))$, where $(\chi, \nu)$ are hyperparameters. Here, $\chi$ can be seen as a pseudo-sum of sufficient statistics from prior observations, and $\nu$ as a pseudo-count of those observations.

Our generalized Bayesian update rule from Eq.~\eqref{eq:gen-bayes-update} can be directly mapped to this setting. The update for the posterior hyperparameters $(\chi_t, \nu_t)$ becomes:
\begin{align}
    \chi_{t+1} &= (1-\lambda)\chi_t + \lambda\chi_0 + (1-\rho)T_{\text{explicit}} + \rho T_{\text{implicit}}, \label{eq:gen_update_chi}\\
    \nu_{t+1}   &= (1-\lambda)\nu_t + \lambda\nu_0 + (1-\rho)n_{\text{explicit}} + \rho n_{\text{implicit}}, \label{eq:gen_update_nu}
\end{align}
where $(\chi_0, \nu_0)$ are the base prior's parameters. For $n_{\text{explicit}}$ direct observations $\{r_i\}$, the explicit evidence is $T_{\text{explicit}} = \sum_{i=1}^{n_{\text{explicit}}} T(r_i)$. For implicit evidence, we assume a pseudo-observation of size $n_{\text{implicit}}$ with an estimated total sufficient statistic $T_{\text{implicit}}$. This structure precisely mirrors our update for the Beta parameters, preserving conjugacy across iterations.

\paragraph{Revisiting the Bernoulli Case.}
For a Bernoulli reward $r \in \{0, 1\}$ with success probability $p$, the natural parameter is the logit $\eta = \log(p/(1-p))$, and the sufficient statistic is $T(r)=r$. The conjugate Beta$(\alpha, \beta)$ prior corresponds to hyperparameters $\chi = \alpha - 1$ and $\nu = \alpha + \beta - 2$. With $n$ rollouts, $T_{\text{explicit}} = s_t$ (success counts), $n_{\text{explicit}}=n$, $T_{\text{implicit}} = \tilde{s}_t$, and $n_{\text{implicit}}=n$. Plugging these into Equation~\ref{eq:gen_update_chi}-~\ref{eq:gen_update_nu} and transforming back to $(\alpha, \beta)$ parameters precisely recovers our update rule in Equation~\ref{eq:update_rule}. This confirms that our proposed update is a specific instance of this general principle.

\paragraph{Example: Gaussian Rewards.}
Consider a continuous reward $r$, like a score from a powerful critic model, modeled as $R \sim \mathcal{N}(\mu, \sigma^2)$ with known variance $\sigma^2$. The conjugate prior for the mean $\mu$ is Gaussian, $\mu \sim \mathcal{N}(\mu_0, \sigma_0^2)$. In the exponential family form, $\eta=\mu/\sigma^2$ and $T(r)=r$. The prior hyperparameters are $\chi_0 = \mu_0/\sigma_0^2$ and $\nu_0 = \sigma^2/\sigma_0^2$. The BOTS update would apply directly to $(\chi_t, \nu_t)$, where $T_{\text{explicit}}$ is the sum of observed rewards and $T_{\text{implicit}}$ is an estimated sum from the interpolator. The posterior for $\mu$ remains Gaussian, allowing for Thompson sampling by drawing a sample of the mean $\hat{\mu}$ and selecting tasks whose $\hat{\mu}$ is closest to some target score $\mu^*$.

\paragraph{Example: Categorical Rewards.}
For tasks with $K$ discrete outcomes (e.g., multi-level ratings), the reward is a one-hot vector, and the distribution is categorical. The conjugate prior is the Dirichlet distribution, a multivariate generalization of the Beta. BOTS would maintain a vector of $K$ Dirichlet parameters $(\alpha_1, \dots, \alpha_K)$ for each task. The update rules would apply component-wise to each $\alpha_j$ based on explicit and implicit counts for that outcome.

This generality significantly broadens the applicability of our framework beyond binary success/failure tasks. It provides a principled and extensible blueprint for difficulty-aware online task selection across a wide spectrum of RFT problems involving diverse reward structures.

\subsubsection{Self-Adaptive Update Rules}
\label{appendix:self_adaptive_update_rules}

In the main paper, we set the belief update coefficients \(\lambda\) and \(\rho\) as fixed hyperparameters.  
However, our empirical study (Section~\ref{sec:impact_lambda} and Section~\ref{sec:impact_rho}) shows that different settings benefit different training stages: smaller \(\lambda\) accelerates adaptation in early training but may cause instability later, while moderate \(\rho\) effectively leverages implicit evidence early on but can reduce accuracy in later stages.  
A natural extension is to design \emph{self-adaptive update rules} that automatically adjust \(\lambda\) and \(\rho\) according to training dynamics—for example, by monitoring posterior uncertainty, validation performance, or the variance of estimated success probabilities.  
Such adaptive schemes would allow BOTS to dynamically balance exploration and exploitation, potentially improving robustness across diverse tasks and model scales.

\subsubsection{Alternative Plug-In for Implicit Evidence}
\label{appendix:alternative_implicit_evidence}

Our interpolation-based estimator provides an extremely lightweight way to generate implicit evidence without additional rollouts, but it is not the only possible choice.
\rebuttal{In fact, it also comes with an inherent limitation, see discussions in Appendix~\ref{appendix:extended_discussion_on_interpolation_based_implicit_evidence}.}
More expressive alternatives could be explored, such as kernel-based predictors~\citep{sun2025improving}, task-embedding regressors, or small auxiliary models trained jointly with the main model.  
These alternatives may improve predictive accuracy, especially when the reference models poorly bracket the training model’s capability.  
However, they also introduce a trade-off: richer implicit evidence often requires higher computational and storage costs.  
Systematically characterizing this trade-off—between accuracy and efficiency in implicit evidence—remains an open research question and a promising direction for future work.

\newpage
\section{Additional Empirical Results}\label{appendix:additional_experimental_results}

We provide extended experiments in this section for a deeper understanding of BOTS:  
(1) A wall-clock breakdown (Appendix~\ref{appendix:computational_overhead}) shows task selection adds less than 0.2\% overhead.  
(2) Evaluation of the interpolation-based implitict evidence with various combination of reference models (Appendix~\ref{appendix:interpolation_based_implicit_evidence_empirical_results}) confirms its effectiveness and robustness, and provides insights for practice.  
(3) An ablation on Thompson sampling (Appendix~\ref{appendix:sample_or_not}) shows posterior sampling yields more stable selection.  
(4) A fine-grained analysis of selected task dynamics (Appendix~\ref{appendix:success_rate_dynamics}) illustrates how BOTS shifts computation away from trivial ($p=1$) and impossible ($p=0$) tasks toward the informative mid-difficulty region.
(5) The extended experimental results in Appendix~\ref{appendix:rebuttal_rho}$\sim$\ref{appendix:extended_logic} provide additional details for the experiments in this section.


\subsection{Computational Overhead}\label{appendix:computational_overhead}

We examine the computational overhead introduced by task selection.  
The breakdown of wall-clock time across training phases is shown in Figure~\ref{fig:cost_pies}.  

\begin{figure}[ht]
    \centering
    \includegraphics[width=1.0\linewidth]{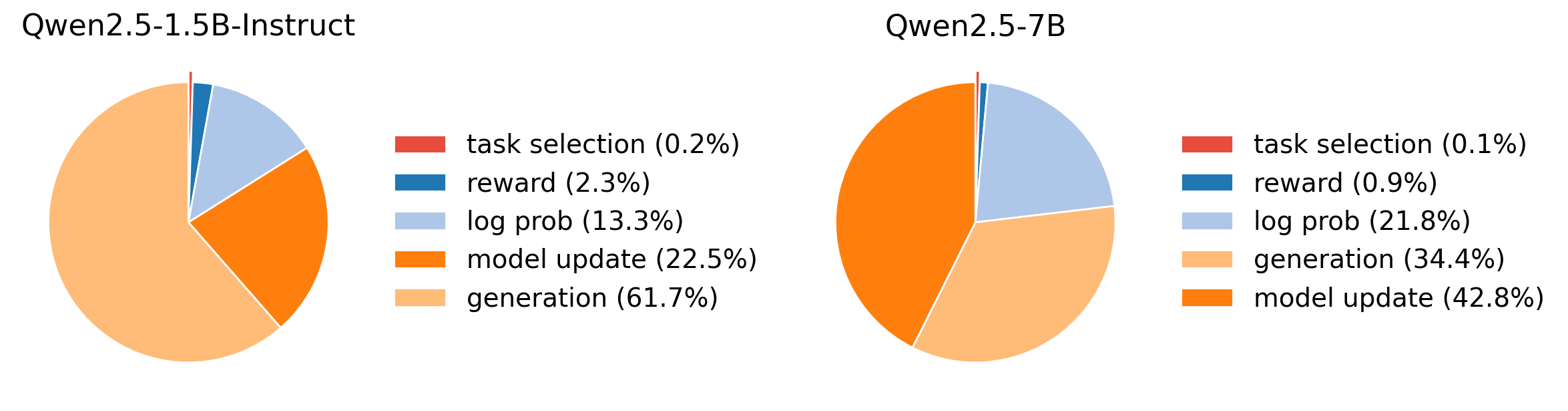}
    \caption{
Wall-clock time breakdown across training phases for \textbf{Qwen2.5-1.5B-Instruct} (Left) and \textbf{Qwen2.5-7B} (Right) on GURU-Math, trained on 8 A100 GPUs.  
Runtime is averaged over the first 100 training steps.  
The cost of task selection—including posterior sampling, index sorting, and distribution parameter updates—is negligible compared to overall training.
}
    \label{fig:cost_pies}
\end{figure}

As illustrated, the dominant cost arises from generation and model updates, which together account for more than 75\% of runtime.  
By contrast, the overhead of task selection—including posterior sampling, index sorting, and distribution parameter updates—is negligible (0.2\% or less).  
Importantly, unlike generation and model updates, this cost does not increase with model size.  
Overall, our Bayesian framework and the chosen practical instantiation remain extremely lightweight, adding almost no extra burden to training.

\subsection{Interpolation-Based Implicit Evidence: Empirical Results}
\label{appendix:interpolation_based_implicit_evidence_empirical_results}


To empirically validate our interpolation-based implicit evidence estimator, we assess its predictive quality against the evolving empirical success probabilities of the training model.  
We examine the trajectory under vanilla training, \emph{i.e.}, uniformly sampling tasks for training.  
At each step, we compare the predictions from our interpolation-based estimator with the ground-truth online task success probabilities.

\textbf{Evaluation Metrics.}\ \  
Two metrics are used: (i) \textbf{Pearson Correlation}, measuring the linear relationship between estimated and empirical difficulties, and (ii) \textbf{ROC AUC}, evaluating the ability to distinguish effective tasks (success strictly between 0 and 1) from ineffective tasks (success equal to 0 or 1).

\textbf{Training Models.}\ \ 
Evaluations are conducted throughout training for two models of different scales: \textbf{Qwen2.5-1.5B-Instruct} and \textbf{Qwen2.5-7B}.

\textbf{Reference Models.}\ \ 
Reference models are selected among three models: Qwen2.5-1.5B-Instruct (\textbf{1.5B}), Qwen2.5-7B-Instruct (\textbf{7B}), and Qwen3-30B-A3B (\textbf{30B}).
Besides the default choice of reference models (\textbf{7B $\times$ 30B}) used in Section~\ref{sec:experiments}, we additionally include two reference-model pairs: \textbf{1.5B $\times$ 7B} and \textbf{1.5B $\times$ 30B}.

The resulting Pearson Correlation and ROC AUC curves are shown in Figure~\ref{fig:rebuttal_implicit_evidence_result}.

\begin{figure}[ht]
    \centering
    \includegraphics[width=1.0\linewidth]{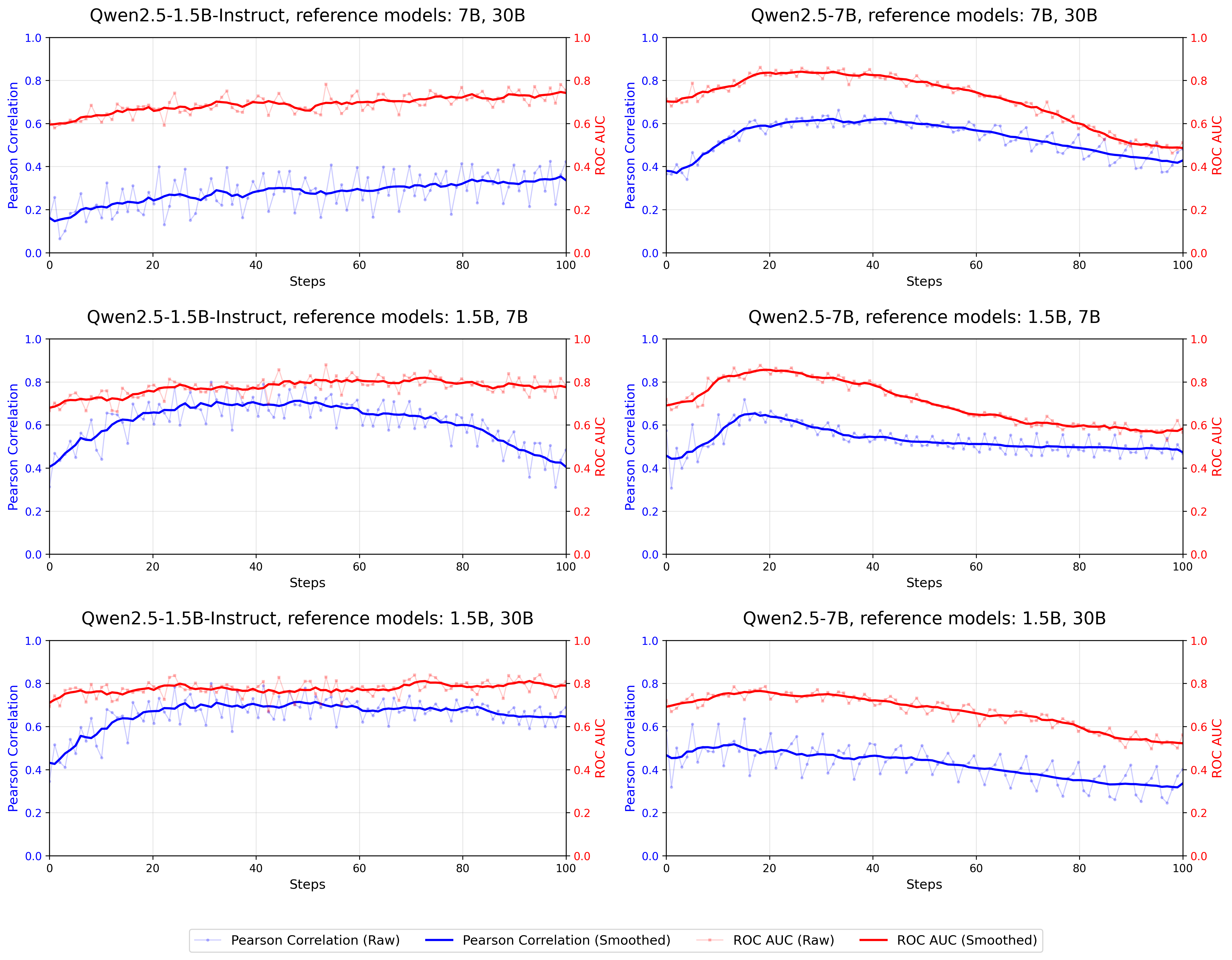}
    \caption{Performance of the interpolation-based implicit evidence estimator during training of \textbf{Qwen2.5-1.5B-Instruct} (Left) and \textbf{Qwen2.5-7B} (Right), under different choices of reference model pairs: \textbf{7B \& 30B} (Top), \textbf{1.5B \& 7B} (Middle), and \textbf{1.5B \& 30B} (Bottom). 
Predictive quality is measured by \textbf{Pearson Correlation} and \textbf{ROC AUC}.
}
    \label{fig:rebuttal_implicit_evidence_result}
\end{figure}

\paragraph{Observations.}
The consistently positive correlation and ROC AUC above 0.5 demonstrate that interpolation-based implicit evidence effectively captures task difficulty, even in the the extrapolation-heavy setting, where the pair \{7B, 30B\} is used to predict task success probabilities for Qwen2.5-1.5B-Instruct. 
All other reference-model combinations maintain even stronger predictive quality throughout training.  
A mild performance drop is observed in the \{1.5B, 30B\} $\rightarrow$ 7B setting, which is the only non-extrapolative scenario that both reference models differ substantially in capability from the training model.
Additionally, we observe both Pearson Correlation and ROC AUC decline in later stages in several cases, indicating that implicit evidence becomes less informative for difficulty prediction as the model matures.  
This highlights the necessity of incorporating direct evaluations as explicit evidence to maintain accurate estimation.

\paragraph{Analysis and Takeaways.}
These results indicate that choosing well-bracketed reference models—particularly avoiding strong extrapolation regimes and, when possible, including a reference model whose capability is close to the training model—substantially improves the quality of implicit evidence within BOTS.

\begin{figure}[ht]
    \centering
    \includegraphics[width=1.0\linewidth]{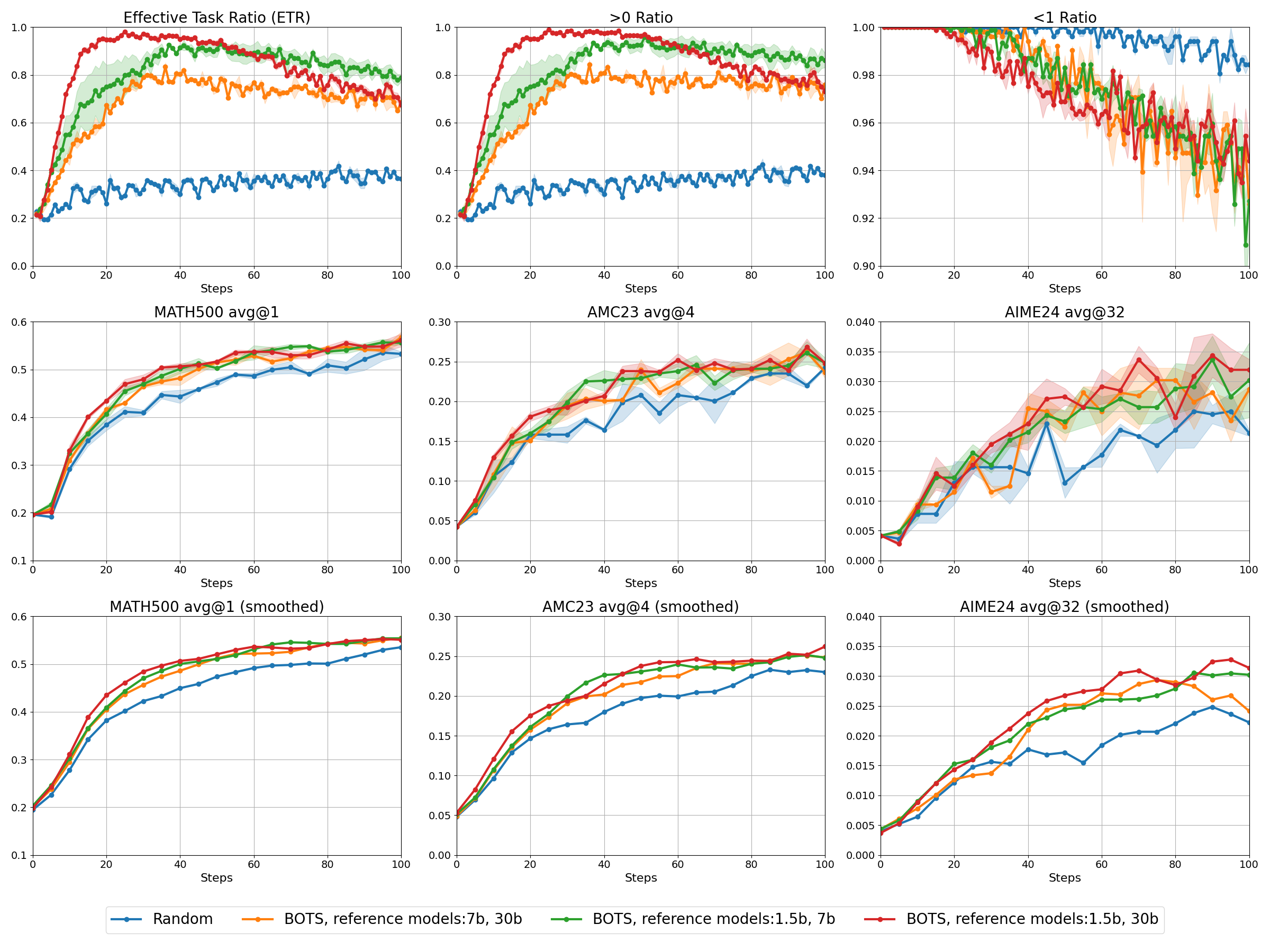}
    \caption{\textbf{Qwen2.5-1.5B-Instruct} on \textbf{Math} with various combinations of reference models.  
Ratios of sampled training tasks (measured with 16 rollouts) whose empirical success probabilities are strictly between 0 and 1 (LU), strictly greater than 0 (MU), and strictly less than 1 (RU), together with performance curves averaged over 3 random runs with 95\% confidence intervals on MATH500 (LM), AMC23 (MM), and AIME24 (RM), plotted against training steps.  
The bottom row shows smoothed versions (running average, window size 3) of the curves in the middle row.
}
    \label{fig:rebuttal_reference_models_training}
\end{figure}

Next, we evaluate how different choices of reference models affect the performance of BOTS.  
We apply three reference model pairs to BOTS and train \textbf{Qwen2.5-1.5B-Instruct} on the \textbf{GURU-Math} dataset.  
The results are presented in Figure~\ref{fig:rebuttal_reference_models_training}.

\paragraph{Observations.}
We observe that the reference model pairs with stronger implicit evidence estimation quality (Figure~\ref{fig:rebuttal_implicit_evidence_result}) -- namely \textbf{1.5B $\times$ 7B} and \textbf{1.5B $\times$ 30B} -- achieve clearly better ETR and more efficient training curves compared to the \{7B \& 30B\} pair.  
Even though \textbf{7B $\times$ 30B} performs slightly worse than the other combinations, it still delivers performance that significantly surpasses the random baseline.

\paragraph{Analysis and Takeaways.}
Overall, BOTS exhibits low sensitivity to the choice of reference models: even imperfect implicit-evidence estimates (e.g., under extrapolation) still yield substantial training improvements.  
At the same time, better-chosen reference models produce higher-quality implicit evidence, which in turn further strengthens training efficiency and end performance.

\subsection{Sampling from Posterior}\label{appendix:sample_or_not}

We now investigate the impact of posterior sampling.  
As discussed in Section~\ref{sec:thompson_sampling}, posterior sampling naturally balances exploration and exploitation in bandit-style problems.  
Without sampling, tasks with the closest estimated success rates are greedily selected for training, which risks over-exploitation and insufficient exploration.  

To examine this effect, we compare our default setting (\(\lambda=0.1, \rho=0.1\)) with posterior sampling enabled versus disabled.  
The valid ratio metrics and benchmark performance metrics are reported in Figure~\ref{fig:impact_sample} and Table~\ref{tab:impact_sample}.

\begin{figure}[ht]
    \centering
    \includegraphics[width=1.0\linewidth]{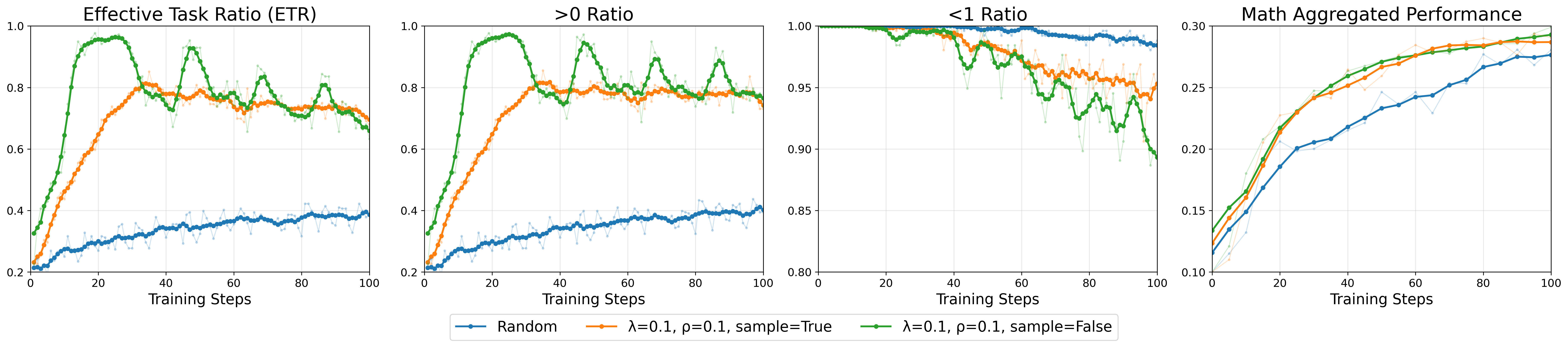}
    \caption{\textbf{Qwen2.5-1.5B-Instruct} on \textbf{Math}.  
Ratio of sampled training tasks (measured over 16 rollouts) with passing rates: strictly between 0 and 1, strictly greater than 0, and strictly less than 1, along with aggregated performance (MATH500 and AIME24), plotted against training steps.}
    \label{fig:impact_sample}
\end{figure}

\begin{table}[ht]
\centering
\resizebox{\textwidth}{!}{
\setlength{\tabcolsep}{4pt} \renewcommand{\arraystretch}{1.2}
\begin{tabular}{c@{\hspace{18pt}}ccc@{\hspace{9pt}}ccc@{\hspace{18pt}}ccc@{\hspace{9pt}}ccc@{\hspace{18pt}}ccc@{\hspace{9pt}}ccc}
\toprule
Benchmark & \multicolumn{6}{c}{MATH500} & \multicolumn{6}{c}{AIME24} & \multicolumn{6}{c}{Math Aggregated Performance} \\
\midrule
Metric & \multicolumn{3}{c}{TTB ($\downarrow$)} & \multicolumn{3}{c}{BSF ($\uparrow$)} & \multicolumn{3}{c}{TTB ($\downarrow$)} & \multicolumn{3}{c}{BSF ($\uparrow$)} & \multicolumn{3}{c}{TTB ($\downarrow$)} & \multicolumn{3}{c}{BSF ($\uparrow$)} \\
Method ($\downarrow$), $\%$ ($\rightarrow$) & 50\% & 75\% & 100\% & 25\% & 50\% & 100\% & 50\% & 75\% & 100\% & 25\% & 50\% & 100\% & 50\% & 75\% & 100\% & 25\% & 50\% & 100\% \\
\midrule
Random & 1.00 & 1.00 & 1.00 & 1.00 & 1.00 & 1.00 & 1.00 & \underline{1.00} & \textbf{1.00} & \underline{1.00} & 1.00 & \textbf{1.00} & 1.00 & 1.00 & 1.00 & 1.00 & 1.00 & 1.00 \\
\(\lambda=0.1, \rho=0.1,\) sample=True & \underline{0.89} & \textbf{0.49} & \underline{0.57} & \textbf{1.13} & \underline{1.05} & \textbf{1.05} & \textbf{0.51} & \underline{1.00} & \textbf{1.00} & \textbf{1.25} & \textbf{1.75} & \textbf{1.00} & \underline{0.89} & \underline{0.56} & \textbf{0.64} & \underline{1.12} & \underline{1.07} & \underline{1.05} \\
\(\lambda=0.1, \rho=0.1,\) sample=False & \textbf{0.73} & \underline{0.54} & \textbf{0.54} & \underline{1.10} & \textbf{1.09} & \underline{1.03} & \underline{0.77} & \textbf{0.92} & \underline{1.25} & \textbf{1.25} & \underline{1.50} & \textbf{1.00} & \textbf{0.79} & \textbf{0.55} & \underline{0.81} & \textbf{1.12} & \textbf{1.10} & \textbf{1.07} \\
\bottomrule
\end{tabular}
}
\label{tab:impact_sample}
\caption{\textbf{Qwen2.5-1.5B-Instruct} on \textbf{Math}. 
TTB (lower better) and BSF (higher better) evaluated on MATH500, AIME24, and aggregated performance. For TTB, notation "-" indicates the the target performance is never achieved within the evaluation window. The \textbf{best} and \underline{second best} results are marked accordingly. 
}
\end{table}

\paragraph{Observations.}  
When posterior sampling is disabled, the valid ratio exhibits a faster and higher boost in the early phase due to the removal of randomness, but fluctuations appear as training progresses.  
In contrast, enabling posterior sampling yields a smoother valid ratio trajectory over time.  
Notably, these differences in valid ratio do not translate into significant differences in benchmark performance: both settings outperform the random baseline and achieve very similar performance levels.  
Given the improved stability of the valid ratio, we recommend enabling posterior sampling, though this conclusion is less pronounced compared to the effects of \(\lambda\) and \(\rho\).  
We leave a larger-scale study for future work to obtain more reliable evidence.

\subsection{Dynamics of Selected Tasks}
\label{appendix:success_rate_dynamics}

In addition to reporting the Effective Task Ratio (ETR), which reflects the proportion of selected tasks with success probabilities strictly between 0 and 1, we conduct a finer-grained analysis to capture more detailed dynamics.  
Specifically, we visualize the distribution of success probabilities for selected tasks along the training trajectory, for both Qwen2.5-1.5B-Instruct and Qwen2.5-7B models.  
This analysis complements ETR by revealing how the quality of selected tasks evolves beyond the binary effective/ineffective distinction.

We use a heatmap where the x-axis represents training steps, the y-axis represents the empirical success rate (discretized from 0/16 to 16/16), and the color intensity indicates the proportion of tasks sampled at that success rate. This allows us to compare the distributional dynamics of BOTS against the random baseline.

\begin{figure}[h]
    \centering
    \includegraphics[width=0.8\linewidth]{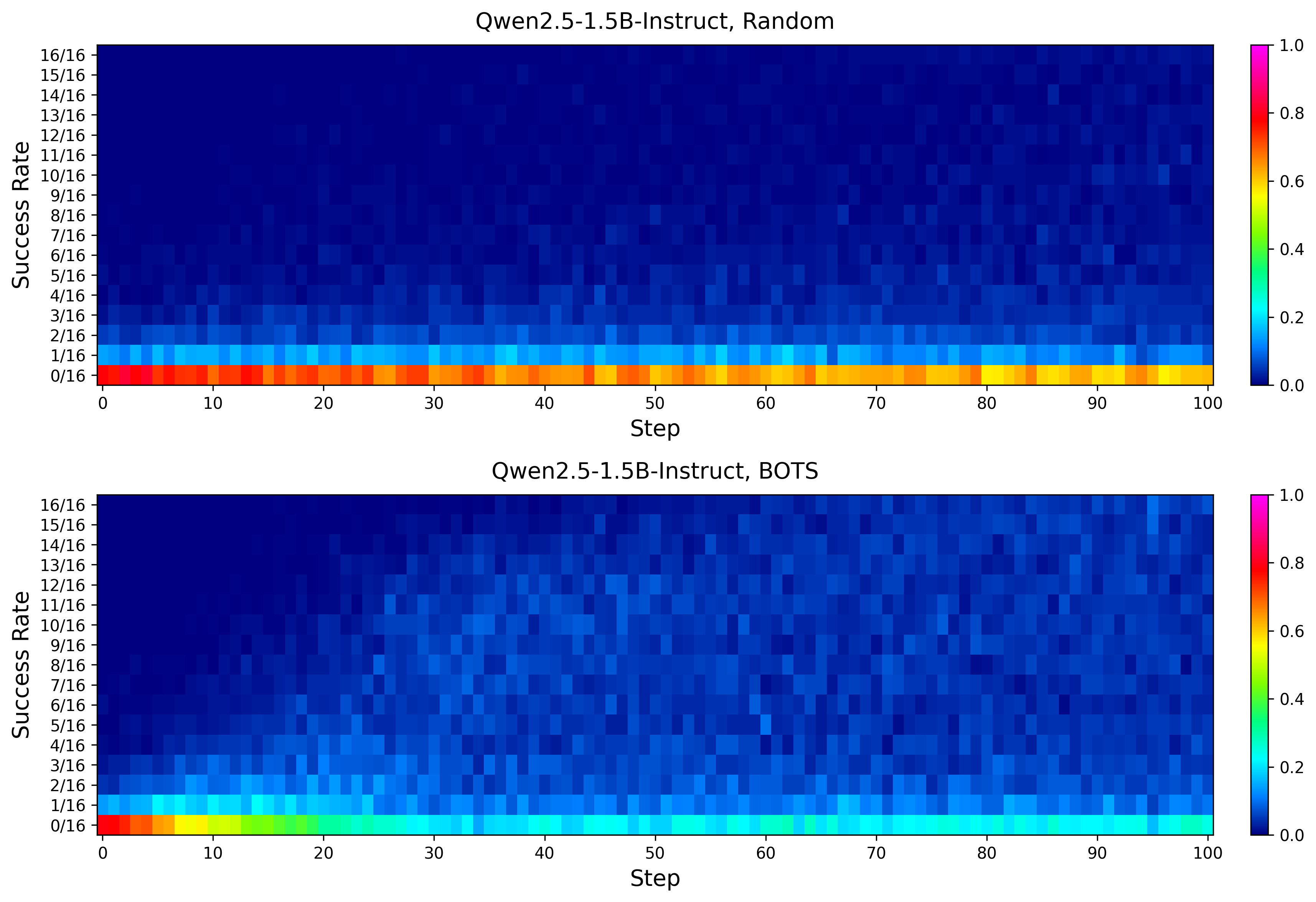}
    \caption{Heatmap visualizing the distribution of sampled task success rates over training steps for Random sampling (Top) and BOTS (Bottom) on Qwen2.5-1.5B-Instruct.}
    \label{fig:traj_heatmap_1.5B}
\end{figure}

The resulting heatmaps in Figure~\ref{fig:traj_heatmap_1.5B} reveal starkly different behaviors. The random baseline (Top) exhibits a largely static distribution, with a persistent, high-density band at the 0/16 success rate, indicating continuous wasted computation on unsolvable tasks. In contrast, BOTS (Bottom) demonstrates a highly dynamic curriculum. The initial concentration of tasks at the 0/16 success rate diminishes rapidly. Concurrently, the sampling density shifts upward, concentrating in the intermediate difficulty range. 

This visualization directly illustrates how BOTS actively filters out overly easy or hard tasks and focuses computational resources on the most informative ones, which explains the superior Effective Task Ratio and overall performance gains observed previously.

\begin{figure}[ht]
    \centering
    \includegraphics[width=0.8\linewidth]{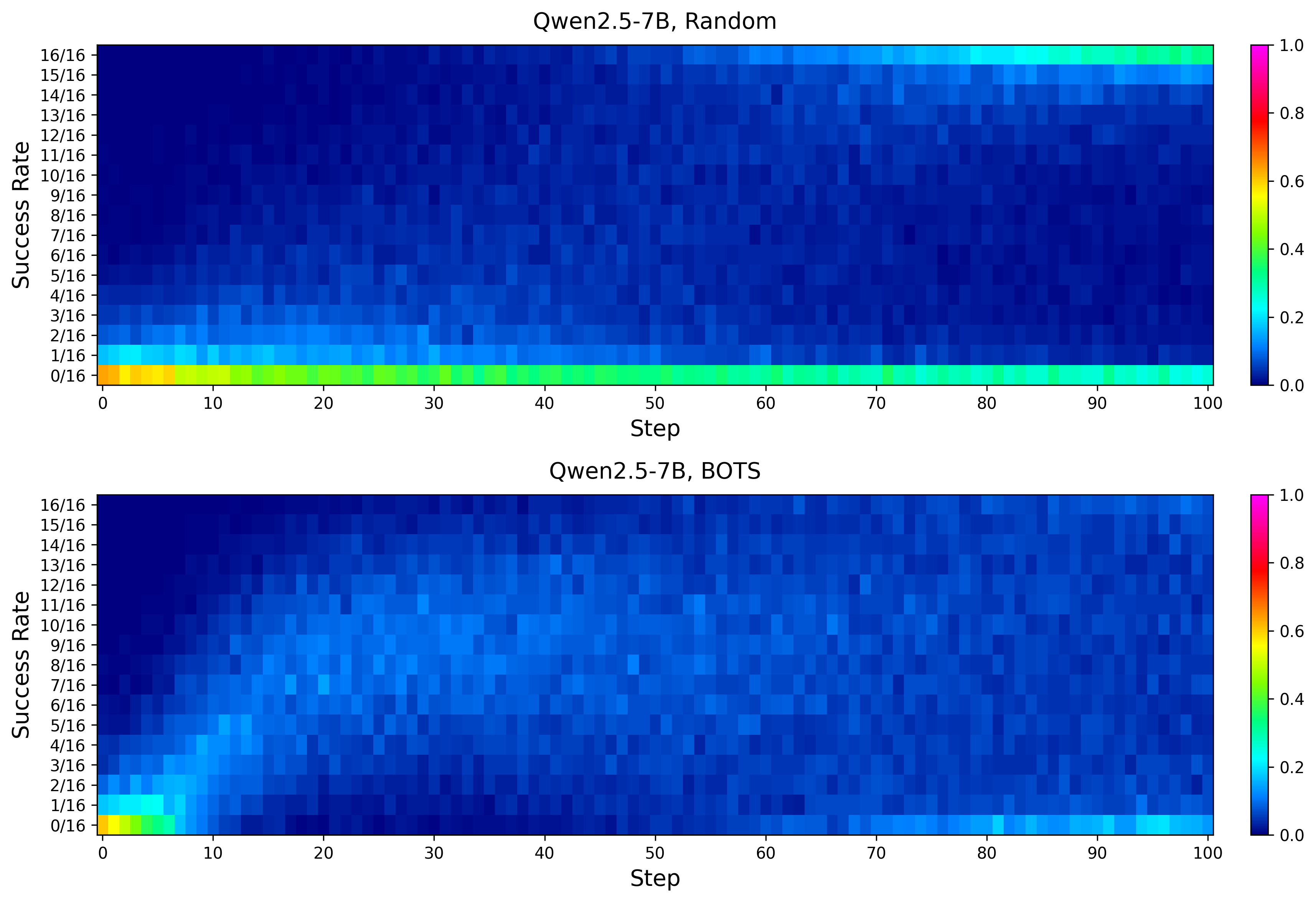}
    \caption{Heatmap visualizing the distribution of sampled task success rates over training steps for Random sampling (Top) and BOTS (Bottom) on Qwen2.5-7B.}
    \label{fig:traj_heatmap_7B}
\end{figure}

The analysis on the 7B model, shown in Figure~\ref{fig:traj_heatmap_7B}, reinforces our findings. Consistent with the 1.5B results, BOTS (Bottom) rapidly diminishes sampling of unsolvable tasks (0/16 success rate) and progressively shifts its focus to the intermediate difficulty range. However, an interesting phenomenon emerges in the later training stages due to the 7B model's stronger capability. For the random baseline (Top), a high-density band appears at the 16/16 success rate, indicating that significant computation is wasted on tasks the model has already mastered. In stark contrast, BOTS effectively avoids this region, maintaining a broad distribution across the intermediate success rates.

This demonstrates BOTS's advanced adaptivity: it not only filters out tasks that are too hard but also dynamically avoids those that become too easy, thereby maximizing learning efficiency throughout the entire training process.

\subsection{External Baseline: Dynamic Sampling}\label{appendix:dynamic_sampling}

\begin{figure}[ht]
    \centering
    \includegraphics[width=1.0\linewidth]{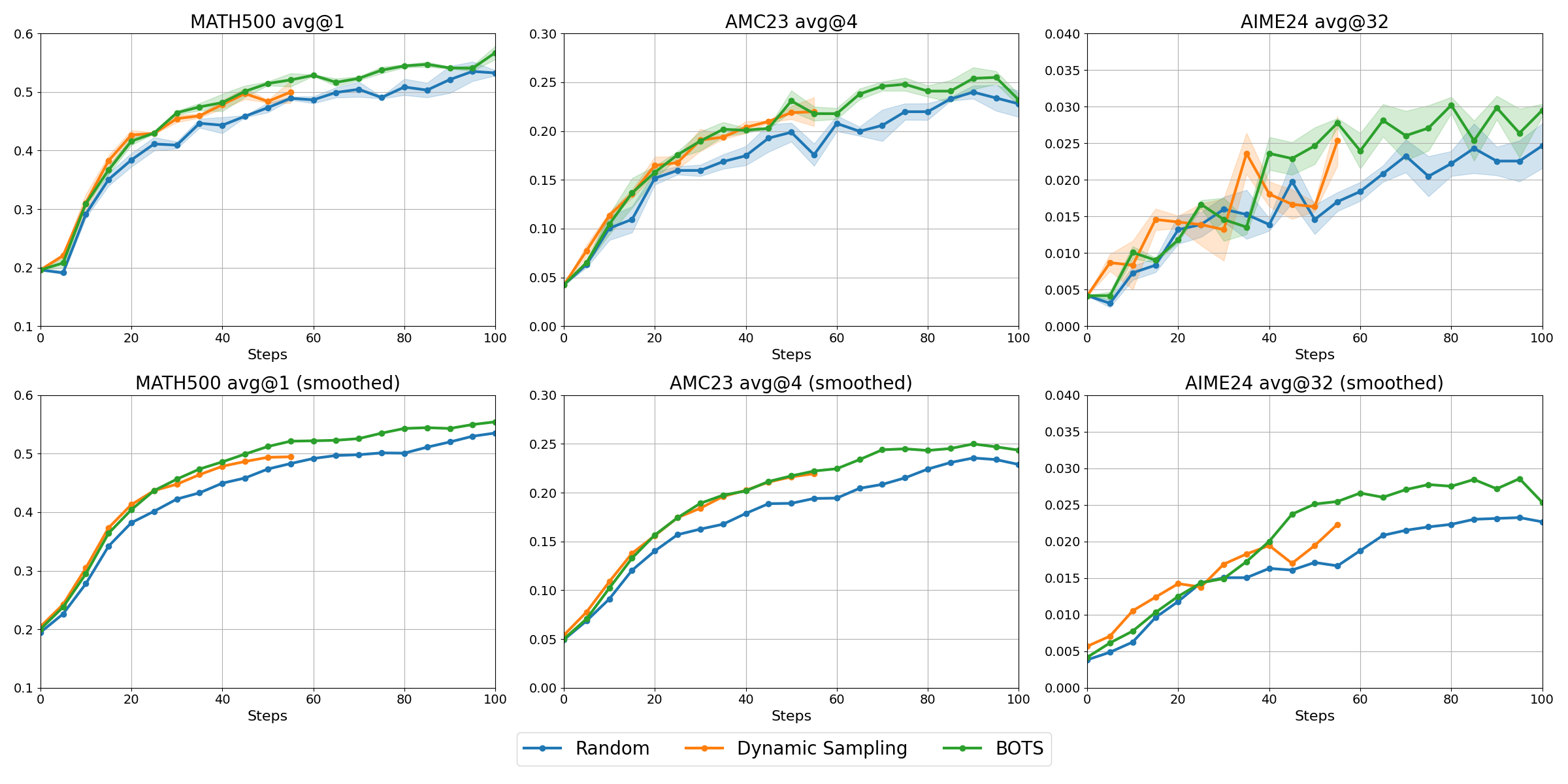}
    \caption{\textbf{Qwen2.5-1.5B-Instruct} on \textbf{Math}.  
Performance curves averaged over 3 random runs with 95\% confidence intervals on MATH500 (LU), AMC23 (MU), and AIME24 (RU), plotted against training steps.  
The bottom row shows smoothed versions (running average, window size 3) of the curves in the upper row.
}
    \label{fig:rebuttal_dynamic_sampling}
\end{figure}

We conduct additional experiments comparing BOTS with \textit{Dynamic Sampling}~\citep{yu2025dapo}, an oversampling-based online task selection strategy.  
Results are shown in Figure~\ref{fig:rebuttal_dynamic_sampling}.  
For Dynamic Sampling, we follow the original setup and train for one full epoch (55 steps, 54.4K tasks).  
For reference, both BOTS and the random baseline consume 25.6K tasks over 100 training steps (batch size 256).  
Although the performance curves are plotted step-wise for consistency, it is important to note that \emph{each step of Dynamic Sampling incurs roughly 2.5$\times$ the time cost of a BOTS or random baseline step}.

\paragraph{Observations.}
Interestingly, while Dynamic Sampling consistently outperforms the random baseline, it slightly underperforms BOTS even under a step-wise comparison—despite achieving the ideal effective task ratio (ETR = 1).

\paragraph{Analysis and Takeaways.}
A likely explanation is that Dynamic Sampling only filters out invalid tasks (success probability 0 or 1) but does not differentiate among the remaining ones.  
In contrast, BOTS explicitly prioritizes tasks whose success probabilities lie near the ideal difficulty (e.g., 0.5), which are most beneficial for learning.  
Consequently, BOTS achieves both \emph{higher step-wise efficiency} and \emph{lower computational cost}.  
Overall, in the studied setting, BOTS outperforms Dynamic Sampling on both axes.

\newpage
\subsection{Extended Experimental Results: Impact of $\rho$}\label{appendix:rebuttal_rho}

\begin{figure}[ht]
    \centering
    \includegraphics[width=1.0\linewidth]{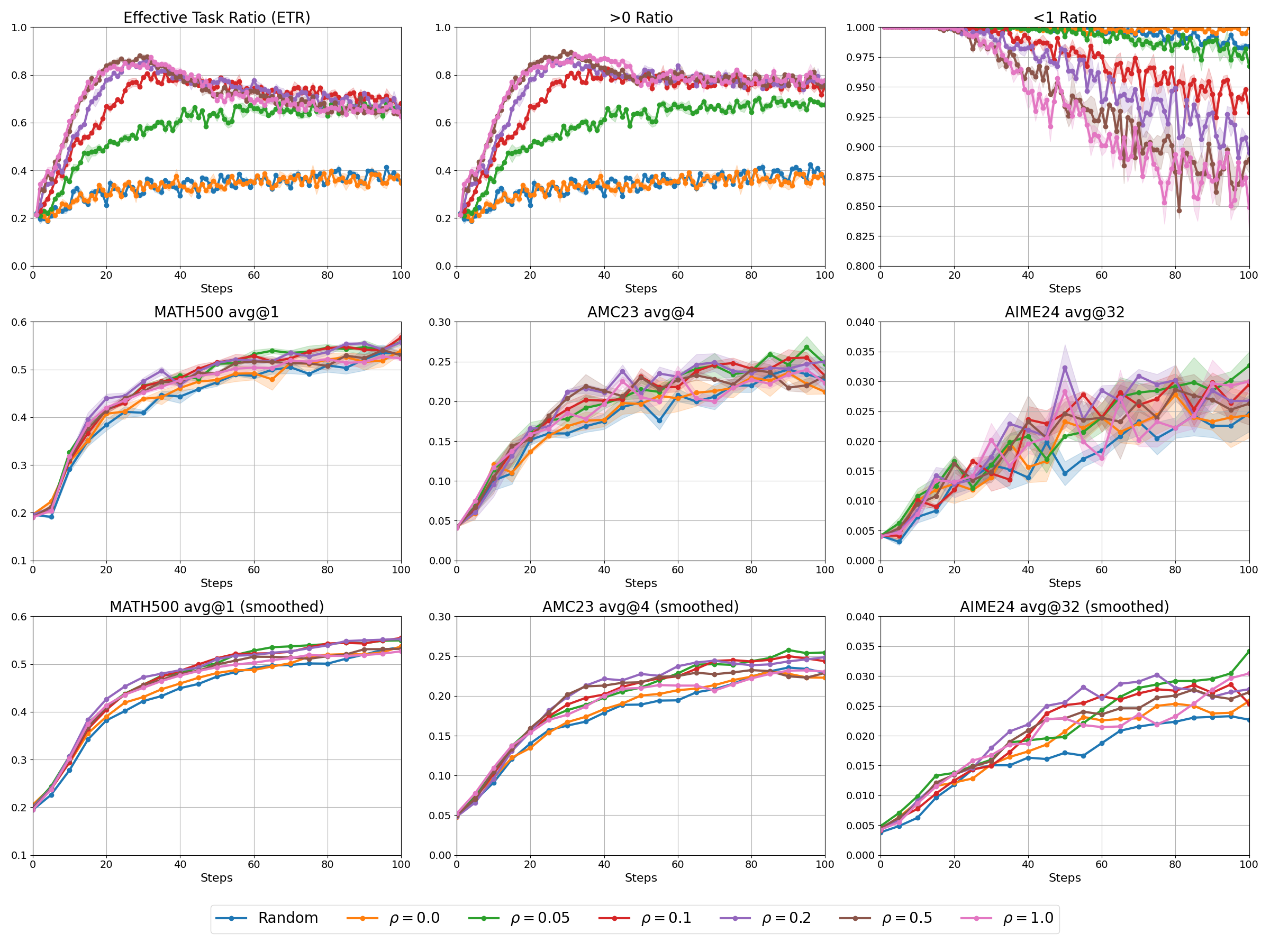}
    \caption{\textbf{Qwen2.5-1.5B-Instruct} on \textbf{Math} with $\lambda=0.1, \rho \in \{0.0, 0.05, 0.1, 0.2, 0.5, 1.0\}$
Ratios of sampled training tasks (measured with 16 rollouts) whose empirical success probabilities are strictly between 0 and 1 (LU), strictly greater than 0 (MU), and strictly less than 1 (RU), together with performance curves averaged over 3 random runs with 95\% confidence intervals on MATH500 (LM), AMC23 (MM), and AIME24 (RM), plotted against training steps.  
The bottom row shows smoothed versions (running average, window size 3) of the curves in the middle row.
}
    \label{fig:rebuttal_rho}
\end{figure}

\newpage
\subsection{Additional Experimental Results: Impact of $\lambda$}\label{appendix:rebuttal_lambda}

\begin{figure}[ht]
    \centering
    \includegraphics[width=1.0\linewidth]{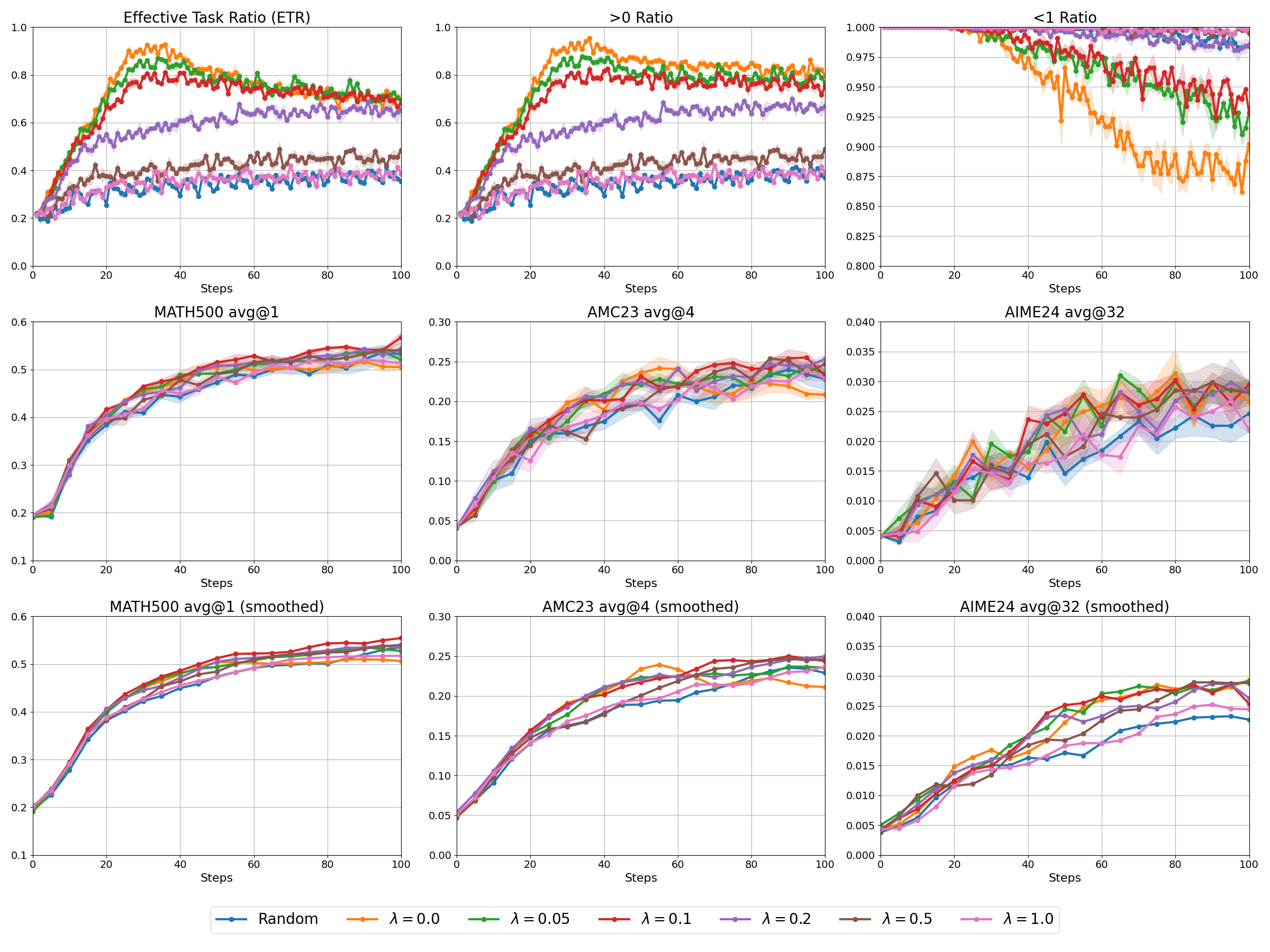}
    \caption{\textbf{Qwen2.5-1.5B-Instruct} on \textbf{Math} with $\lambda \in \{0.0, 0.05, 0.1, 0.2, 0.5, 1.0\}, \rho=0.1$.  
Ratios of sampled training tasks (measured with 16 rollouts) whose empirical success probabilities are strictly between 0 and 1 (LU), strictly greater than 0 (MU), and strictly less than 1 (RU), together with performance curves averaged over 3 random runs with 95\% confidence intervals on MATH500 (LM), AMC23 (MM), and AIME24 (RM), plotted against training steps.  
The bottom row shows smoothed versions (running average, window size 3) of the curves in the middle row.
}
    \label{fig:rebuttal_lamb}
\end{figure}

\newpage

\subsection{Extended Experimental Results: Math}\label{appendix:extended_math}



\begin{figure}[ht]
    \centering
    \includegraphics[width=1.0\linewidth]{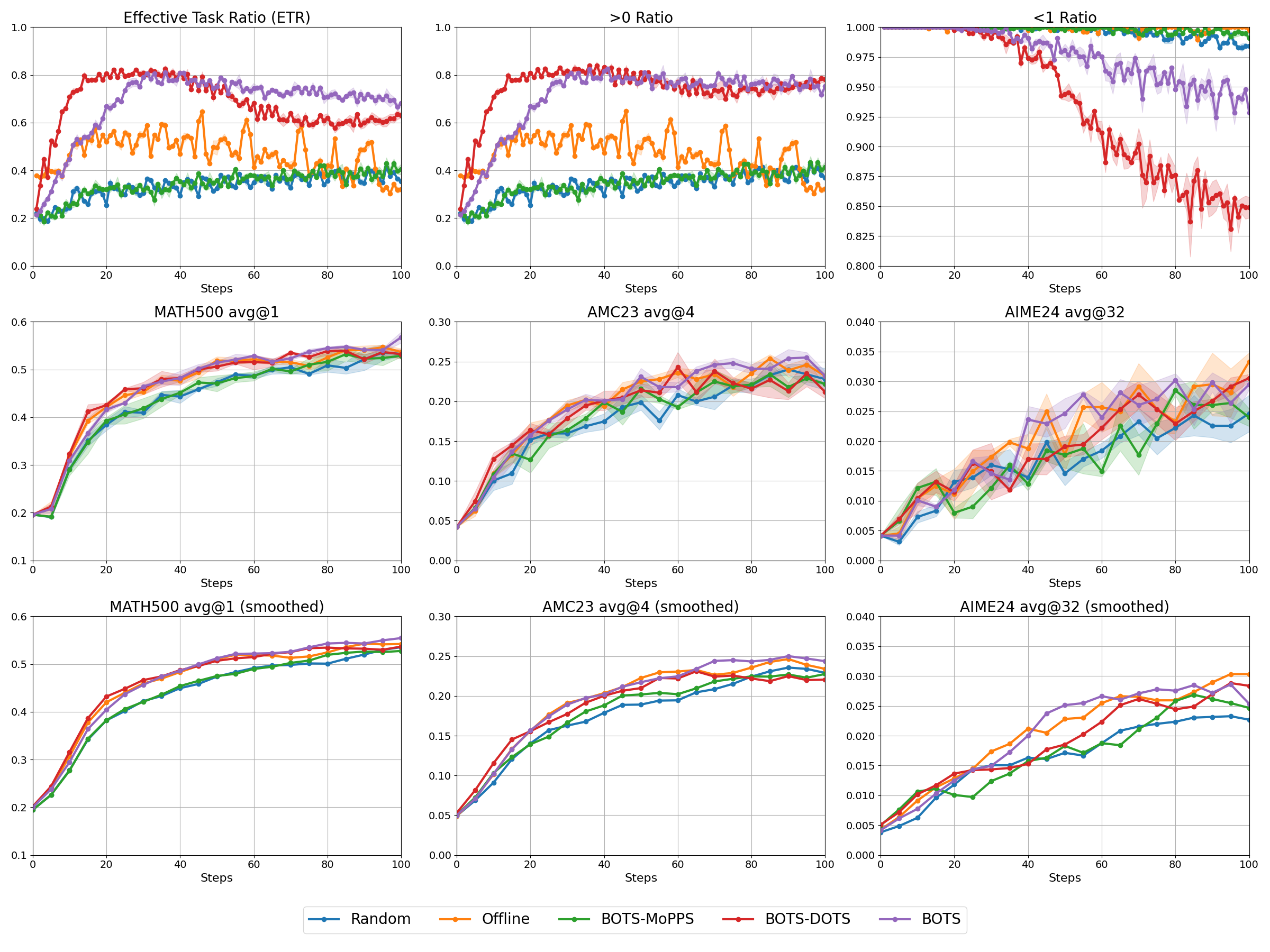}
    \caption{\textbf{Qwen2.5-1.5B-Instruct} on \textbf{Math}.  
Ratios of sampled training tasks (measured with 16 rollouts) whose empirical success probabilities are strictly between 0 and 1 (LU), strictly greater than 0 (MU), and strictly less than 1 (RU), together with performance curves averaged over 3 random runs with 95\% confidence intervals on MATH500 (LM), AMC23 (MM), and AIME24 (RM), plotted against training steps.  
The bottom row shows smoothed versions (running average, window size 3) of the curves in the middle row.
}
    \label{fig:rebuttal_baselines}
\end{figure}

\begin{table}[ht]
\centering
\resizebox{\textwidth}{!}{
\setlength{\tabcolsep}{4pt} \renewcommand{\arraystretch}{1.2}
\begin{tabular}{c@{\hspace{18pt}}ccc@{\hspace{9pt}}ccc@{\hspace{18pt}}ccc@{\hspace{9pt}}ccc@{\hspace{18pt}}ccc@{\hspace{9pt}}ccc@{\hspace{18pt}}ccc@{\hspace{9pt}}ccc}
\toprule
Benchmark & \multicolumn{6}{c}{MATH500} & \multicolumn{6}{c}{AMC23} & \multicolumn{6}{c}{AIME24} & \multicolumn{6}{c}{Math Aggregated Performance} \\
\midrule
Metric & \multicolumn{3}{c}{TTB ($\downarrow$)} & \multicolumn{3}{c}{BSF ($\uparrow$)} & \multicolumn{3}{c}{TTB ($\downarrow$)} & \multicolumn{3}{c}{BSF ($\uparrow$)} & \multicolumn{3}{c}{TTB ($\downarrow$)} & \multicolumn{3}{c}{BSF ($\uparrow$)} & \multicolumn{3}{c}{TTB ($\downarrow$)} & \multicolumn{3}{c}{BSF ($\uparrow$)} \\
Target Fraction & 50\% & 75\% & 100\% & 25\% & 50\% & 100\% & 50\% & 75\% & 100\% & 25\% & 50\% & 100\% & 50\% & 75\% & 100\% & 25\% & 50\% & 100\% & 50\% & 75\% & 100\% & 25\% & 50\% & 100\% \\
\midrule
Random & 1.00 & 1.00 & 1.00 & 1.00 & 1.00 & 1.00 & 1.00 & 1.00 & 1.00 & 1.00 & 1.00 & 1.00 & 1.00 & 1.00 & 1.00 & 1.00 & 1.00 & 1.00 & 1.00 & 1.00 & 1.00 & 1.00 & 1.00 & 1.00 \\
Offline & \underline{0.76} & 0.67 & 0.88 & \underline{1.08} & \textbf{1.10} & \underline{1.02} & 0.89 & \textbf{0.65} & 0.90 & \textbf{1.11} & \underline{1.13} & \underline{1.06} & 0.93 & \textbf{0.77} & \textbf{0.45} & 1.07 & \textbf{1.26} & \textbf{1.35} & \underline{0.77} & \textbf{0.66} & 0.85 & \textbf{1.09} & \underline{1.11} & \underline{1.04} \\
BOTS-MoPPS & 0.98 & 0.94 & - & 0.99 & 1.00 & 0.99 & 1.20 & 0.85 & - & 0.98 & 1.09 & 0.97 & 1.26 & 1.41 & 0.77 & 0.95 & 0.93 & 1.15 & 1.02 & 0.90 & 0.89 & 0.98 & 1.03 & 1.00 \\
BOTS-DOTS & \textbf{0.72} & \textbf{0.56} & \textbf{0.74} & \textbf{1.12} & 1.07 & 1.01 & \textbf{0.75} & 0.76 & \textbf{0.66} & 1.03 & 1.08 & 1.01 & \underline{0.88} & 1.23 & 0.64 & \underline{1.18} & 0.96 & \underline{1.24} & \textbf{0.70} & 0.70 & \underline{0.73} & \underline{1.08} & 1.08 & 1.01 \\
BOTS & 0.86 & \underline{0.66} & \underline{0.78} & 1.05 & \underline{1.09} & \textbf{1.06} & \underline{0.86} & \underline{0.68} & \underline{0.74} & \underline{1.10} & \textbf{1.16} & \textbf{1.06} & \textbf{0.86} & \underline{0.85} & \underline{0.50} & \textbf{1.20} & \underline{1.25} & 1.23 & 0.85 & \underline{0.66} & \textbf{0.72} & 1.06 & \textbf{1.12} & \textbf{1.05} \\
\bottomrule
\end{tabular}
}
\caption{TTB and BSF evaluated on \textbf{Math} with \textbf{Qwen2.5-1.5B-Instruct}. For TTB, notation ``-" indicates that the target performance is never achieved within the evaluation window. The \textbf{best} and \underline{second best} results are marked accordingly.}
\label{tab:rebuttal_1.5b_math}
\end{table}
\newpage



\begin{figure}[ht]
    \centering
    \includegraphics[width=1.0\linewidth]{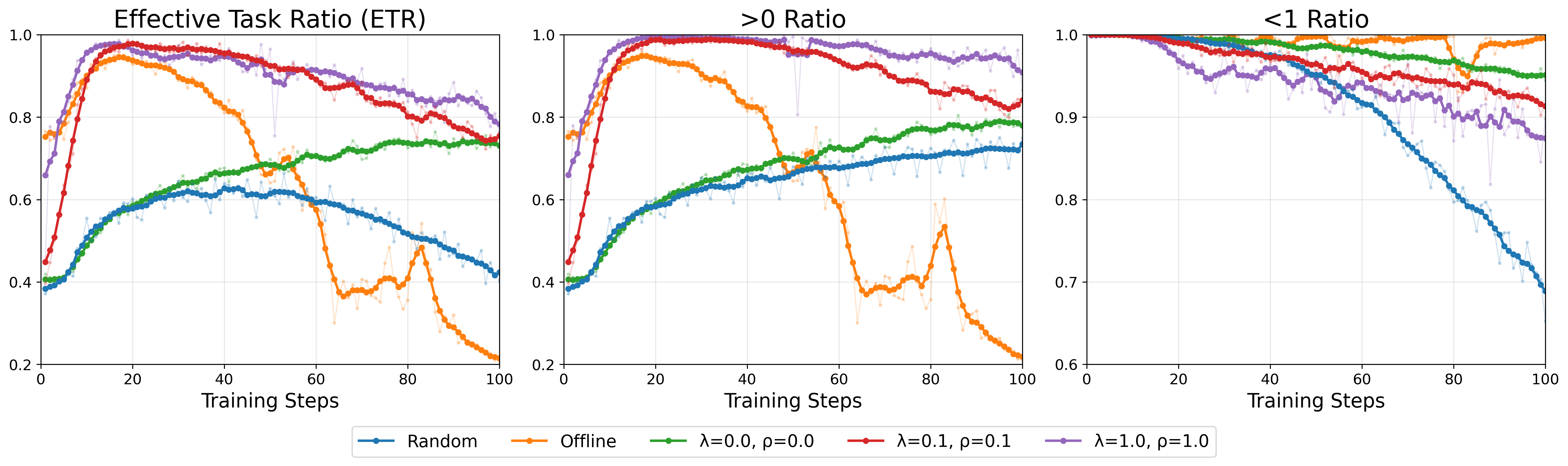}
    \caption{\textbf{Qwen2.5-7B} on \textbf{Math}. Ratio of sampled training tasks with different passing rates over training steps.}
    \label{fig:math_7B_ratios}
\end{figure}

\begin{figure}[ht]
    \centering
    \includegraphics[width=1.0\linewidth]{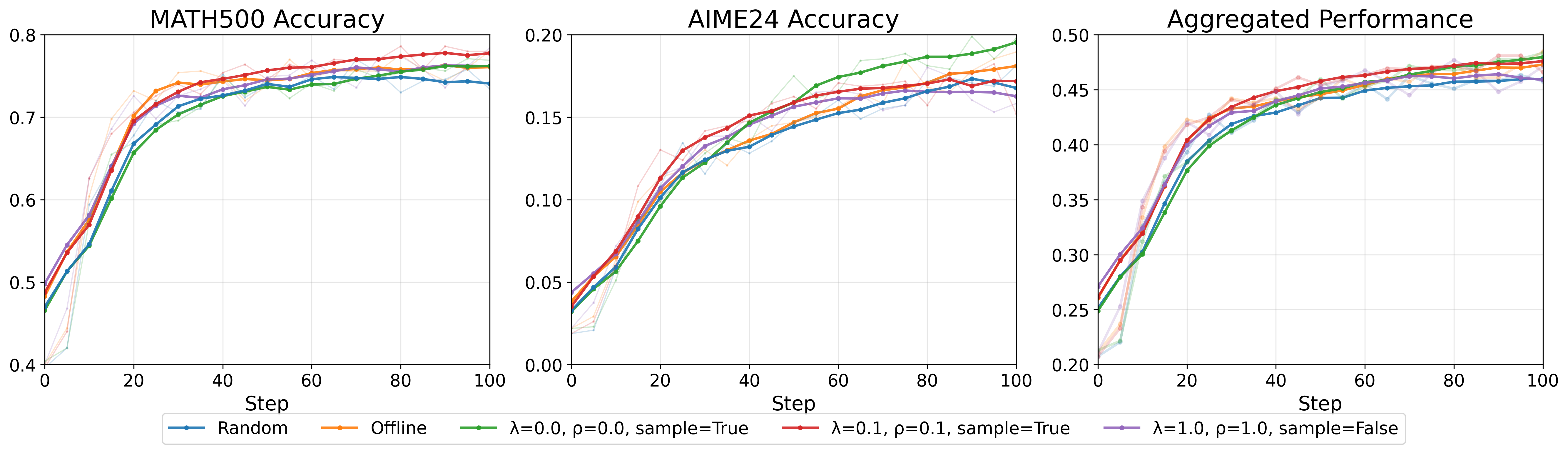}
    \caption{\textbf{Qwen2.5-7B} on \textbf{Math}. Performance on downstream math benchmarks (MATH500, AIME24) and their aggregation.}
    \label{fig:math_7B_perf}
\end{figure}

\begin{table}[ht]
\centering
\resizebox{\textwidth}{!}{
\setlength{\tabcolsep}{4pt} \renewcommand{\arraystretch}{1.2}
\begin{tabular}{c@{\hspace{18pt}}ccc@{\hspace{9pt}}ccc@{\hspace{18pt}}ccc@{\hspace{9pt}}ccc@{\hspace{18pt}}ccc@{\hspace{9pt}}ccc}
\toprule
Benchmark & \multicolumn{6}{c}{MATH500} & \multicolumn{6}{c}{AIME24} & \multicolumn{6}{c}{Math } \\
\midrule
Metric & \multicolumn{3}{c}{TTB ($\downarrow$)} & \multicolumn{3}{c}{BSF ($\uparrow$)} & \multicolumn{3}{c}{TTB ($\downarrow$)} & \multicolumn{3}{c}{BSF ($\uparrow$)} & \multicolumn{3}{c}{TTB ($\downarrow$)} & \multicolumn{3}{c}{BSF ($\uparrow$)} \\
Method ($\downarrow$), $\%$ ($\rightarrow$) & 50\% & 75\% & 100\% & 25\% & 50\% & 100\% & 50\% & 75\% & 100\% & 25\% & 50\% & 100\% & 50\% & 75\% & 100\% & 25\% & 50\% & 100\% \\
\midrule
Random & 1.00 & 1.00 & 1.00 & 1.00 & 1.00 & 1.00 & 1.00 & 1.00 & 1.00 & \textbf{1.00} & 1.00 & 1.00 & 1.00 & 1.00 & 1.00 & \textbf{1.00} & 1.00 & 1.00 \\
Offline & 0.96 & \textbf{0.70} & \textbf{0.79} & \textbf{1.02} & \underline{1.01} & 1.01 & \underline{0.86} & 0.90 & \underline{0.74} & 0.88 & 1.00 & \underline{1.05} & 0.91 & \textbf{0.73} & 0.76 & 0.99 & 1.00 & \textbf{1.05} \\
BOTS-MoPPS & 1.10 & 1.10 & 1.35 & 0.96 & 1.00 & 1.00 & 1.14 & 0.88 & \textbf{0.64} & 0.85 & \textbf{1.19} & \textbf{1.10} & 1.07 & 1.15 & 0.70 & 0.94 & \underline{1.04} & \underline{1.04} \\
BOTS-DOTS & \textbf{0.89} & \underline{0.72} & 1.07 & \underline{1.01} & 1.00 & \underline{1.02} & 0.96 & \underline{0.76} & - & 0.89 & 1.07 & 0.99 & \textbf{0.83} & 0.80 & \textbf{0.61} & 0.98 & 1.02 & 1.03 \\
\midrule
\textbf{BOTS} & \underline{0.91} & 0.74 & \underline{0.93} & \underline{1.01} & \textbf{1.02} & \textbf{1.02} & \textbf{0.79} & \textbf{0.63} & 0.94 & \underline{0.97} & \underline{1.11} & 1.01 & \underline{0.86} & \underline{0.77} & \underline{0.63} & \underline{0.99} & \textbf{1.04} & 1.04 \\
\bottomrule
\end{tabular}
}
\caption{\textbf{Qwen2.5-7B} on \textbf{Math}. TTB (lower better) and BSF (higher better) evaluated on downstream math benchmarks.}
\label{tab:math_7B_baselines}
\end{table}

\newpage
\subsection{Extended Experimental Results: Code}
\label{appendix:extended_code}




\begin{figure}[ht]
    \centering
    \includegraphics[width=1.0\linewidth]{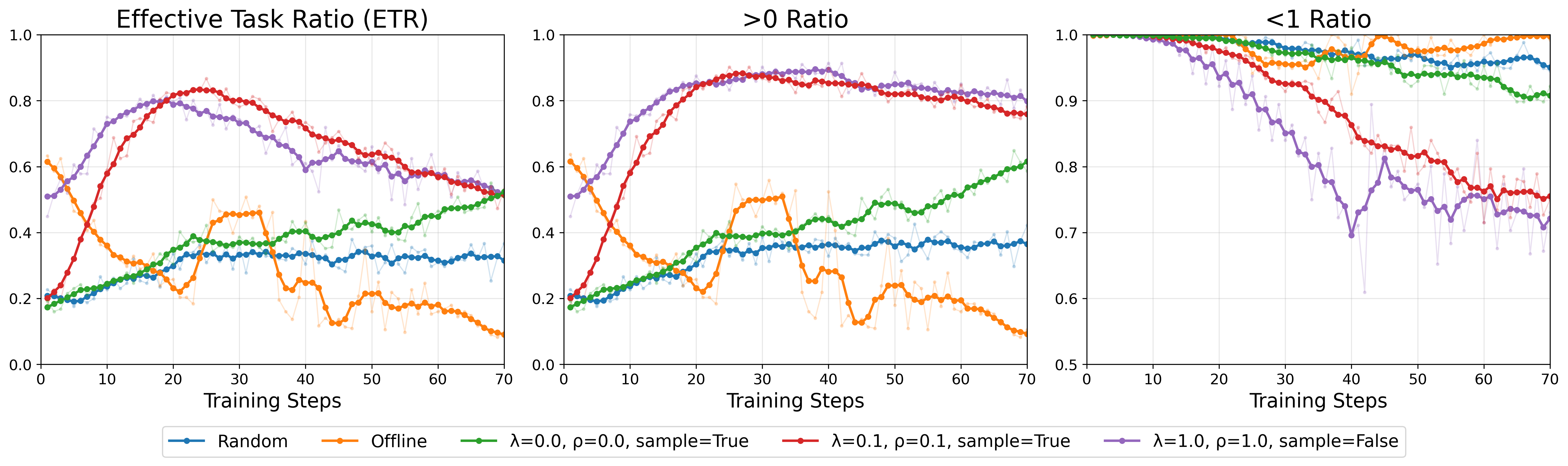}
    \caption{\textbf{Qwen2.5-1.5B-Instruct} on \textbf{Code}. Ratio of sampled training tasks with different passing rates.}
    \label{fig:code_1.5B_ratios}
\end{figure}

\begin{figure}[ht]
    \centering
    \includegraphics[width=1.0\linewidth]{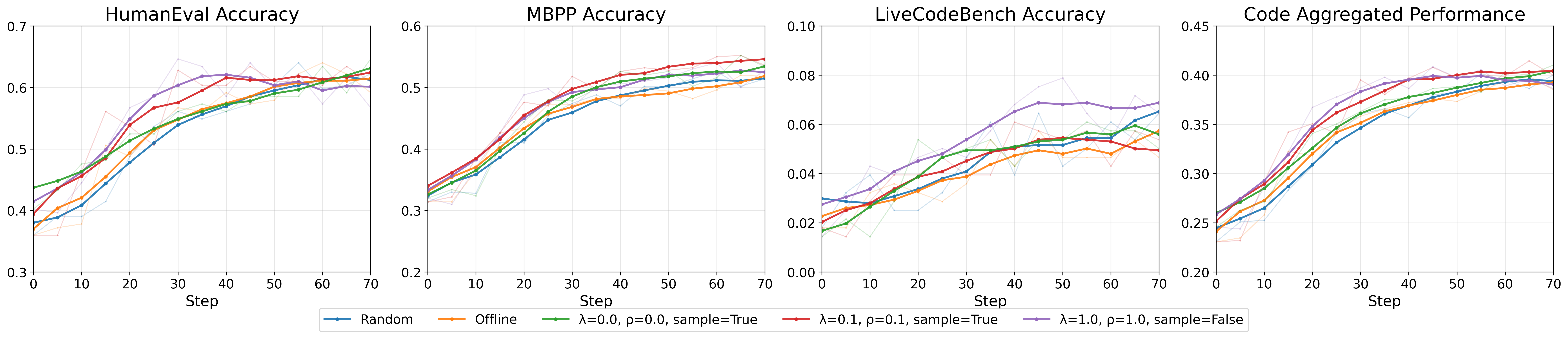}
    \caption{\textbf{Qwen2.5-1.5B-Instruct} on \textbf{Code}. Performance on downstream code benchmarks (HumanEval, MBPP, LiveCodeBench) and their aggregation.}
    \label{fig:code_1.5B_perf}
\end{figure}

\begin{table}[ht]
\centering
\resizebox{\textwidth}{!}{
\setlength{\tabcolsep}{4pt} \renewcommand{\arraystretch}{1.2}
\begin{tabular}{c@{\hspace{18pt}}ccc@{\hspace{9pt}}ccc@{\hspace{18pt}}ccc@{\hspace{9pt}}ccc@{\hspace{18pt}}ccc@{\hspace{9pt}}ccc@{\hspace{18pt}}ccc@{\hspace{9pt}}ccc}
\toprule
Benchmark & \multicolumn{6}{c}{HumanEval} & \multicolumn{6}{c}{MBPP} & \multicolumn{6}{c}{LiveCodeBench} & \multicolumn{6}{c}{Aggregated} \\
\midrule
Metric & \multicolumn{3}{c}{TTB ($\downarrow$)} & \multicolumn{3}{c}{BSF ($\uparrow$)} & \multicolumn{3}{c}{TTB ($\downarrow$)} & \multicolumn{3}{c}{BSF ($\uparrow$)} & \multicolumn{3}{c}{TTB ($\downarrow$)} & \multicolumn{3}{c}{BSF ($\uparrow$)} & \multicolumn{3}{c}{TTB ($\downarrow$)} & \multicolumn{3}{c}{BSF ($\uparrow$)} \\
Method ($\downarrow$), $\%$ ($\rightarrow$) & 50\% & 75\% & 100\% & 25\% & 50\% & 100\% & 50\% & 75\% & 100\% & 25\% & 50\% & 100\% & 50\% & 75\% & 100\% & 25\% & 50\% & 100\% & 50\% & 75\% & 100\% & 25\% & 50\% & 100\% \\
\midrule
Random & 1.00 & 1.00 & \underline{1.00} & 1.00 & 1.00 & \underline{1.00} & 1.00 & 1.00 & 1.00 & 1.00 & 1.00 & 1.00 & 1.00 & \underline{1.00} & \underline{1.00} & 1.00 & \textbf{1.00} & \underline{1.00} & 1.00 & 1.00 & 1.00 & 1.00 & 1.00 & 1.00 \\
Offline & \underline{0.69} & 0.83 & 1.09 & 1.12 & 1.01 & \underline{1.00} & 0.69 & 1.03 & - & 1.05 & 0.99 & 0.99 & 1.12 & 1.08 & - & 0.91 & 0.88 & 0.89 & 0.68 & 1.03 & - & 1.09 & 1.00 & 0.99 \\
BOTS-MoPPS & 0.81 & 0.77 & 1.26 & 1.11 & 1.02 & \textbf{1.01} & 0.81 & 0.81 & \textbf{0.67} & 1.03 & 1.02 & \textbf{1.05} & \underline{0.62} & \textbf{0.62} & - & \underline{1.05} & 0.88 & 0.94 & 0.78 & 0.96 & 1.09 & 1.09 & 1.02 & \underline{1.02} \\
BOTS-DOTS & 0.71 & \textbf{0.48} & \textbf{0.54} & \underline{1.18} & \textbf{1.15} & \textbf{1.01} & \underline{0.68} & \textbf{0.63} & \underline{0.89} & \textbf{1.11} & \underline{1.03} & \underline{1.03} & \textbf{0.34} & 1.02 & \textbf{0.85} & \textbf{1.14} & \underline{0.94} & \textbf{1.22} & \underline{0.67} & \textbf{0.65} & \underline{0.79} & \underline{1.17} & \textbf{1.09} & 1.02 \\
\midrule
\ours & \textbf{0.56} & \underline{0.63} & - & \textbf{1.24} & \underline{1.12} & 0.99 & \textbf{0.68} & \underline{0.66} & \textbf{0.67} & \underline{1.10} & \textbf{1.06} & \textbf{1.05} & 0.76 & 1.18 & - & 1.00 & 0.76 & 0.94 & \textbf{0.58} & \underline{0.90} & \textbf{0.77} & \textbf{1.17} & \underline{1.08} & \textbf{1.03} \\
\bottomrule
\end{tabular}
}
\caption{\textbf{Qwen2.5-1.5B-Instruct} on \textbf{Code}. TTB and BSF evaluated on HumanEval, MBPP, LiveCodeBench, and aggregated performance.}
\label{tab:code_1.5B_eval}
\end{table}

\newpage



\begin{figure}[ht]
    \centering
    \includegraphics[width=1.0\linewidth]{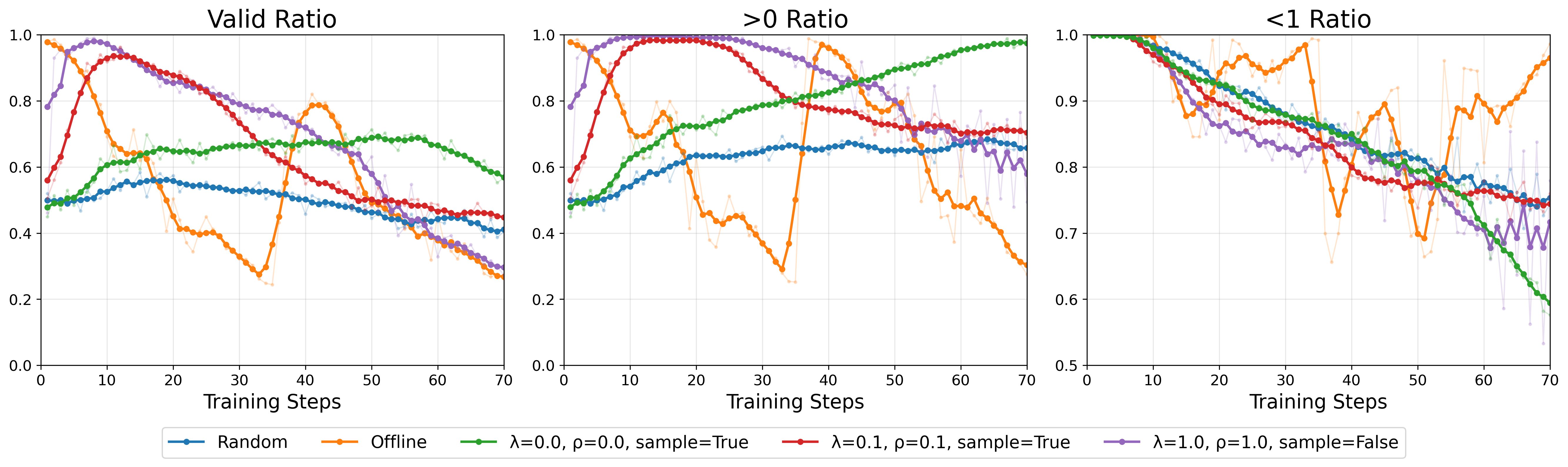}
    \caption{\textbf{Qwen2.5-7B} on \textbf{Code}. Ratio of sampled training tasks with different passing rates.}
    \label{fig:code_7B_ratios}
\end{figure}

\begin{figure}[ht]
    \centering
    \includegraphics[width=1.0\linewidth]{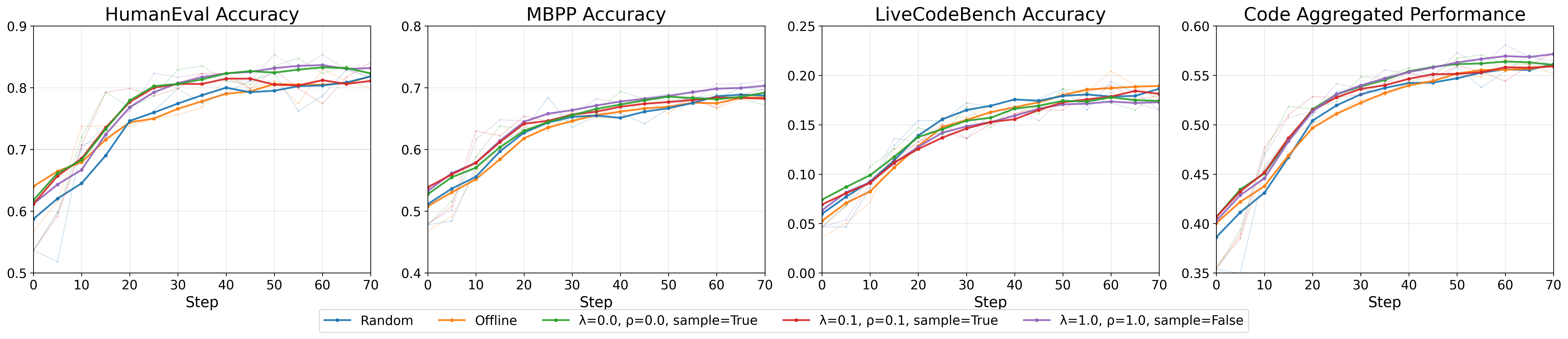}
    \caption{\textbf{Qwen2.5-7B} on \textbf{Code}. Performance on downstream code benchmarks and aggregation.}
    \label{fig:code_7B_perf}
\end{figure}

\begin{table}[ht]
\centering
\resizebox{\textwidth}{!}{
\setlength{\tabcolsep}{4pt} \renewcommand{\arraystretch}{1.2}
\begin{tabular}{c@{\hspace{18pt}}ccc@{\hspace{9pt}}ccc@{\hspace{18pt}}ccc@{\hspace{9pt}}ccc@{\hspace{18pt}}ccc@{\hspace{9pt}}ccc@{\hspace{18pt}}ccc@{\hspace{9pt}}ccc}
\toprule
Benchmark & \multicolumn{6}{c}{HumanEval} & \multicolumn{6}{c}{MBPP} & \multicolumn{6}{c}{LiveCodeBench} & \multicolumn{6}{c}{Aggregated} \\
\midrule
Metric & \multicolumn{3}{c}{TTB ($\downarrow$)} & \multicolumn{3}{c}{BSF ($\uparrow$)} & \multicolumn{3}{c}{TTB ($\downarrow$)} & \multicolumn{3}{c}{BSF ($\uparrow$)} & \multicolumn{3}{c}{TTB ($\downarrow$)} & \multicolumn{3}{c}{BSF ($\uparrow$)} & \multicolumn{3}{c}{TTB ($\downarrow$)} & \multicolumn{3}{c}{BSF ($\uparrow$)} \\
Method ($\downarrow$), $\%$ ($\rightarrow$) & 50\% & 75\% & 100\% & 25\% & 50\% & 100\% & 50\% & 75\% & 100\% & 25\% & 50\% & 100\% & 50\% & 75\% & 100\% & 25\% & 50\% & 100\% & 50\% & 75\% & 100\% & 25\% & 50\% & 100\% \\
\midrule
Random & 1.00 & 1.00 & 1.00 & 1.00 & 1.00 & 1.00 & 1.00 & 1.00 & \underline{1.00} & 1.00 & \textbf{1.00} & \underline{1.00} & 1.00 & \underline{1.00} & \underline{1.00} & \textbf{1.00} & \textbf{1.00} & \underline{1.00} & 1.00 & 1.00 & 1.00 & 1.00 & 1.00 & 1.00 \\
Offline & \textbf{0.83} & 1.29 & 0.92 & 1.01 & 0.96 & 1.00 & 1.13 & 1.29 & - & 0.99 & 0.97 & 0.98 & 1.04 & \textbf{0.96} & \textbf{0.96} & 0.90 & \underline{0.98} & \textbf{1.06} & 0.97 & 1.24 & - & 0.99 & 0.99 & 0.99 \\
BOTS-MoPPS & \underline{0.91} & \textbf{0.54} & \textbf{0.46} & \underline{1.08} & \textbf{1.05} & \underline{1.02} & 0.83 & 1.21 & - & 1.03 & 0.96 & 0.99 & \underline{0.97} & 1.09 & - & \underline{0.96} & \underline{0.98} & 0.96 & \underline{0.84} & \textbf{0.69} & \underline{0.84} & \underline{1.04} & \underline{1.02} & 1.00 \\
BOTS-DOTS & 0.97 & 0.80 & \underline{0.73} & 1.03 & \underline{1.03} & \textbf{1.03} & \underline{0.73} & \textbf{0.68} & \textbf{0.91} & \textbf{1.04} & \underline{1.00} & \textbf{1.02} & \textbf{0.95} & 1.32 & - & \underline{0.96} & 0.94 & 0.94 & 0.86 & 0.96 & \textbf{0.76} & 1.02 & \textbf{1.04} & \textbf{1.02} \\
\midrule
\ours & 0.96 & \underline{0.55} & 0.77 & \textbf{1.09} & \underline{1.03} & 1.01 & \textbf{0.69} & \underline{0.87} & - & \underline{1.03} & 0.96 & 1.00 & 1.18 & 1.47 & - & 0.87 & 0.87 & 0.98 & \textbf{0.82} & \underline{0.79} & 1.06 & \textbf{1.04} & 1.01 & \underline{1.00} \\
\bottomrule
\end{tabular}
}
\caption{\textbf{Qwen2.5-7B} on \textbf{Code}. TTB and BSF evaluated on downstream code benchmarks.}
\label{tab:code_7B_baselines}
\end{table}

\newpage
\subsection{Extended Experimental Results: Logic}
\label{appendix:extended_logic}



\begin{figure}[ht]
    \centering
    \includegraphics[width=1.0\linewidth]{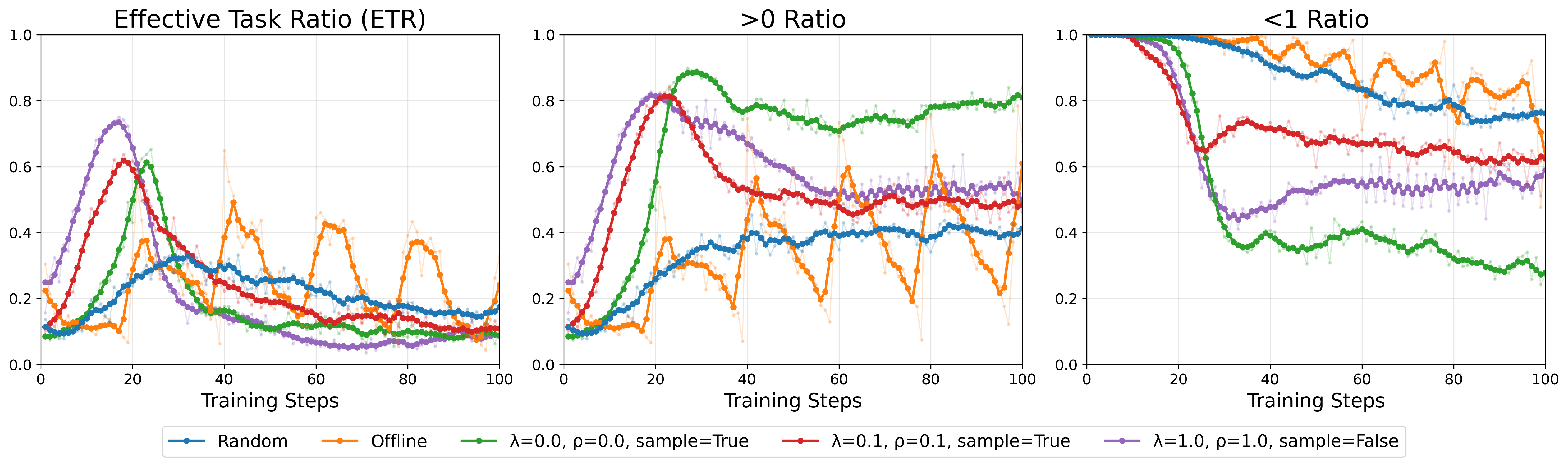}
    \caption{\textbf{Qwen2.5-1.5B-Instruct} on \textbf{Logic}. Ratio of sampled training tasks with different passing rates over training steps.}
    \label{fig:logic_1.5B_ratios}
\end{figure}

\begin{figure}[ht]
    \centering
    \includegraphics[width=1.0\linewidth]{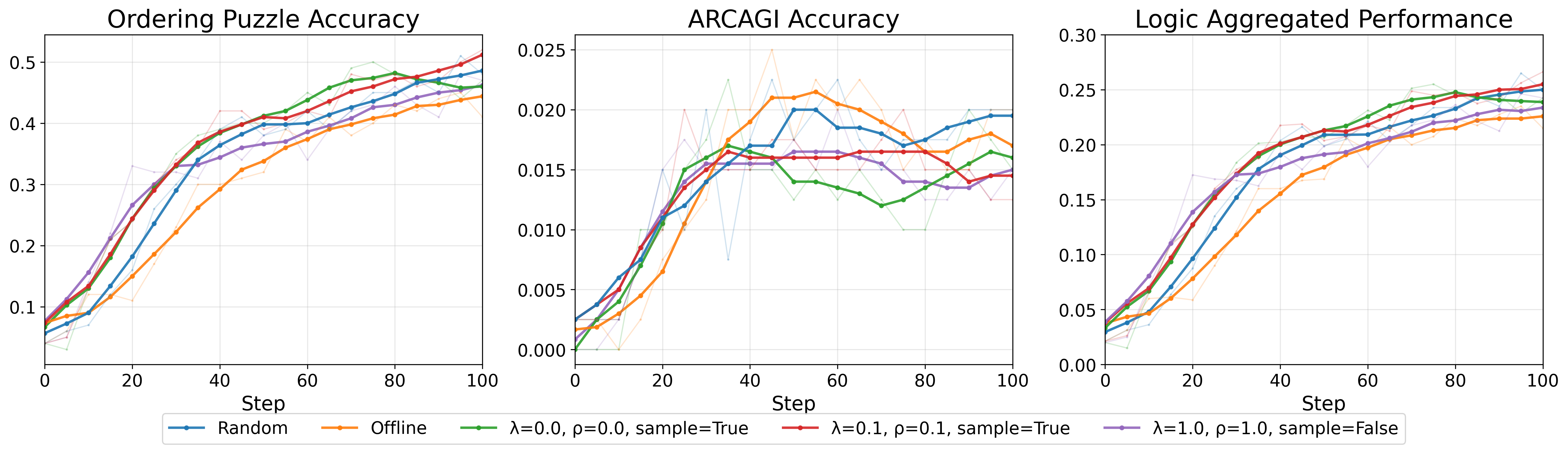}
    \caption{\textbf{Qwen2.5-1.5B-Instruct} on \textbf{Logic}. Performance on downstream logic benchmarks (Ordering Puzzle, ARCAGI) and their aggregation.}
    \label{fig:logic_1.5B_perf}
\end{figure}

\begin{table}[ht]
\centering
\resizebox{\textwidth}{!}{
\setlength{\tabcolsep}{4pt} \renewcommand{\arraystretch}{1.2}
\begin{tabular}{c@{\hspace{18pt}}ccc@{\hspace{9pt}}ccc@{\hspace{18pt}}ccc@{\hspace{9pt}}ccc@{\hspace{18pt}}ccc@{\hspace{9pt}}ccc}
\toprule
Benchmark & \multicolumn{6}{c}{Ordering Puzzle} & \multicolumn{6}{c}{ARCAGI} & \multicolumn{6}{c}{Aggregated} \\
\midrule
Metric & \multicolumn{3}{c}{TTB ($\downarrow$)} & \multicolumn{3}{c}{BSF ($\uparrow$)} & \multicolumn{3}{c}{TTB ($\downarrow$)} & \multicolumn{3}{c}{BSF ($\uparrow$)} & \multicolumn{3}{c}{TTB ($\downarrow$)} & \multicolumn{3}{c}{BSF ($\uparrow$)} \\
Method ($\downarrow$), $\%$ ($\rightarrow$) & 50\% & 75\% & 100\% & 25\% & 50\% & 100\% & 50\% & 75\% & 100\% & 25\% & 50\% & 100\% & 50\% & 75\% & 100\% & 25\% & 50\% & 100\% \\
\midrule
Random & 1.00 & \underline{1.00} & \textbf{1.00} & 1.00 & \underline{1.00} & \underline{1.00} & \textbf{1.00} & 1.00 & 1.00 & 1.00 & \underline{1.00} & \underline{1.00} & 1.00 & \underline{1.00} & \textbf{1.00} & 1.00 & \underline{1.00} & \underline{1.00} \\
Offline & 1.24 & 1.55 & - & 0.65 & 0.78 & 0.88 & 1.64 & 1.16 & \underline{0.94} & 0.67 & \textbf{1.11} & \textbf{1.11} & 1.23 & 1.36 & - & 0.67 & 0.78 & 0.89 \\
BOTS-MoPPS & 0.87 & 1.02 & - & 1.12 & 0.98 & 0.98 & 1.23 & 1.04 & \textbf{0.78} & 1.00 & \underline{1.00} & \underline{1.00} & 0.87 & 1.04 & - & 1.13 & 0.96 & 0.96 \\
BOTS-DOTS & \textbf{0.65} & 1.31 & - & \textbf{1.27} & 0.93 & 0.94 & \textbf{1.00} & \underline{0.87} & - & \underline{1.17} & 0.78 & 0.89 & \textbf{0.66} & 1.32 & - & \textbf{1.28} & 0.92 & 0.93 \\
\midrule
\ours & \underline{0.85} & \textbf{0.93} & \underline{1.03} & \underline{1.15} & \textbf{1.02} & \textbf{1.02} & \underline{1.16} & \textbf{0.83} & - & \textbf{1.33} & 0.89 & 0.89 & \underline{0.85} & \textbf{0.94} & \underline{1.05} & \underline{1.19} & \textbf{1.01} & \textbf{1.00} \\
\bottomrule
\end{tabular}
}
\caption{\textbf{Qwen2.5-1.5B-Instruct} on \textbf{Logic}. TTB and BSF evaluated on downstream logic benchmarks. }
\label{tab:logic_1.5B_baselines}
\end{table}

\newpage



\begin{figure}[ht]
    \centering
    \includegraphics[width=1.0\linewidth]{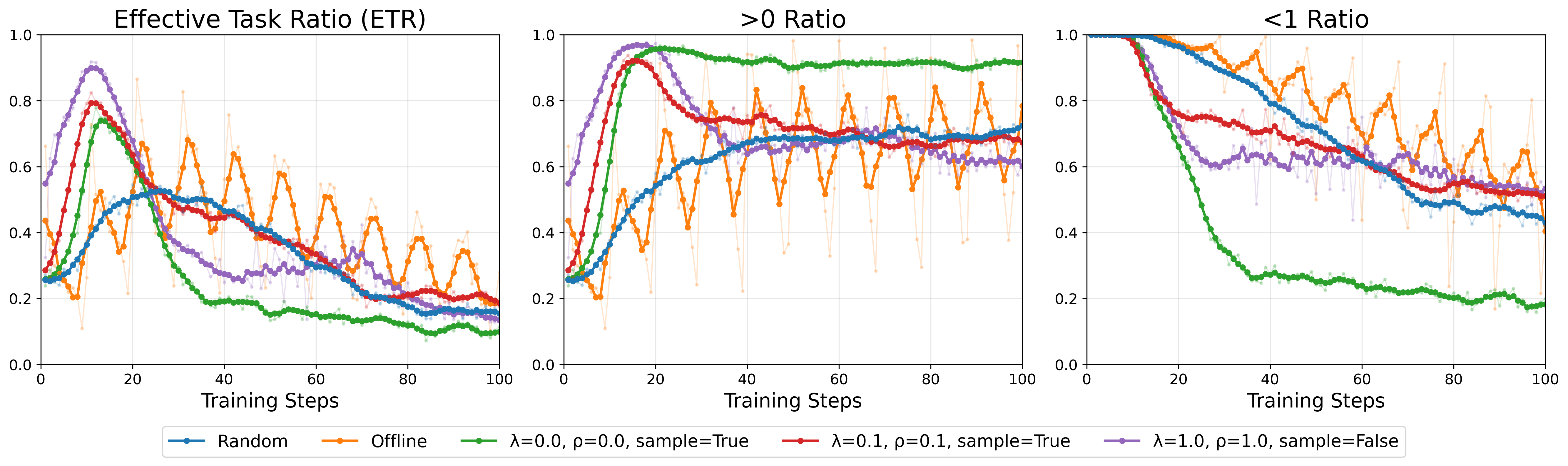}
    \caption{\textbf{Qwen2.5-7B} on \textbf{Logic}. Ratio of sampled training tasks with different passing rates over training steps.}
    \label{fig:logic_7B_ratios}
\end{figure}

\begin{figure}[ht]
    \centering
    \includegraphics[width=1.0\linewidth]{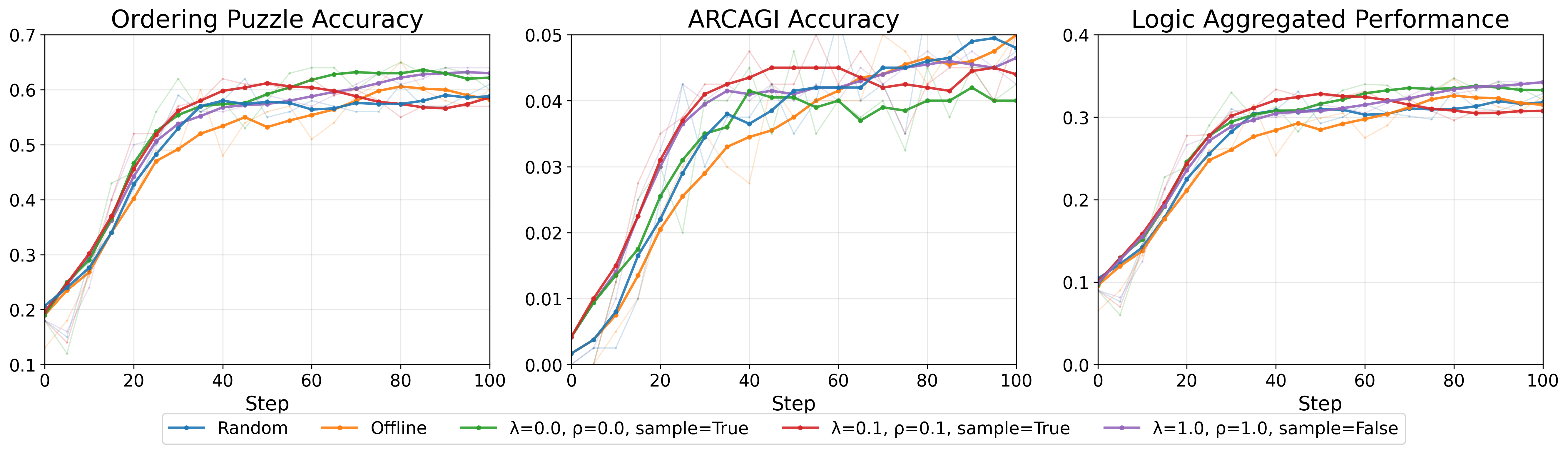}
    \caption{\textbf{Qwen2.5-7B} on \textbf{Logic}. Performance on downstream logic benchmarks and their aggregation.}
    \label{fig:logic_7B_perf}
\end{figure}

\begin{table}[ht]
\centering
\resizebox{\textwidth}{!}{
\setlength{\tabcolsep}{4pt} \renewcommand{\arraystretch}{1.2}
\begin{tabular}{c@{\hspace{18pt}}ccc@{\hspace{9pt}}ccc@{\hspace{18pt}}ccc@{\hspace{9pt}}ccc@{\hspace{18pt}}ccc@{\hspace{9pt}}ccc}
\toprule
Benchmark & \multicolumn{6}{c}{Ordering Puzzle} & \multicolumn{6}{c}{ARCAGI} & \multicolumn{6}{c}{Aggregated} \\
\midrule
Metric & \multicolumn{3}{c}{TTB ($\downarrow$)} & \multicolumn{3}{c}{BSF ($\uparrow$)} & \multicolumn{3}{c}{TTB ($\downarrow$)} & \multicolumn{3}{c}{BSF ($\uparrow$)} & \multicolumn{3}{c}{TTB ($\downarrow$)} & \multicolumn{3}{c}{BSF ($\uparrow$)} \\
Method ($\downarrow$), $\%$ ($\rightarrow$) & 50\% & 75\% & 100\% & 25\% & 50\% & 100\% & 50\% & 75\% & 100\% & 25\% & 50\% & 100\% & 50\% & 75\% & 100\% & 25\% & 50\% & 100\% \\
\midrule
Random & 1.00 & 1.00 & 1.00 & 1.00 & \textbf{1.00} & 1.00 & 1.00 & \underline{1.00} & \textbf{1.00} & \textbf{1.00} & \underline{1.00} & \textbf{1.00} & 1.00 & 1.00 & 1.00 & 1.00 & \underline{1.00} & 1.00 \\
Offline & 1.07 & 1.21 & 1.69 & 0.98 & \underline{0.97} & \textbf{1.05} & 1.13 & 1.81 & - & 0.71 & \underline{1.00} & \underline{0.91} & 1.06 & 1.23 & 0.95 & 0.96 & 0.95 & \underline{1.04} \\
BOTS-MoPPS & \textbf{0.75} & \underline{0.89} & \textbf{0.67} & \textbf{1.12} & \textbf{1.00} & \textbf{1.05} & 0.84 & 1.47 & - & 0.71 & \textbf{1.12} & 0.86 & \textbf{0.75} & 0.92 & \underline{0.69} & \textbf{1.07} & 1.00 & \textbf{1.05} \\
BOTS-DOTS & \underline{0.80} & 0.98 & 1.61 & 1.02 & 0.95 & \underline{1.03} & \underline{0.80} & \textbf{0.99} & - & \textbf{1.00} & \underline{1.00} & 0.86 & 0.79 & \underline{0.91} & 0.91 & 1.02 & 0.95 & 1.03 \\
\midrule
\ours & \underline{0.80} & \textbf{0.77} & \underline{0.89} & \underline{1.04} & \textbf{1.00} & 1.00 & \textbf{0.72} & 1.17 & - & \underline{0.88} & \textbf{1.12} & \underline{0.91} & \underline{0.79} & \textbf{0.78} & \textbf{0.50} & \underline{1.03} & \textbf{1.01} & 1.00 \\
\bottomrule
\end{tabular}
}
\caption{\textbf{Qwen2.5-7B} on \textbf{Logic}. TTB and BSF evaluated on downstream logic benchmarks.}
\label{tab:logic_7B_baselines}
\end{table}

\section{Usage of Large Language Models}

We employed Large Language Models solely for the purpose of polishing the writing in this manuscript. Their function was limited to tasks such as correcting grammatical errors, rephrasing sentences to enhance clarity and flow, and ensuring the consistent use of terminology. The LLMs had no role in the ideation of the research, the development of the \ours framework, the experimental design, or the analysis of results.

\end{document}